\pdfoutput=1
\documentclass[11pt, letterpaper]{article}
\usepackage[margin=1in]{geometry}
\usepackage[utf8]{inputenc}

\usepackage{amsmath,amsfonts,bm}

\def\eqref#1{equation~\ref{#1}}

\def\1{\bm{1}}

\DeclareMathAlphabet{\mathsfit}{\encodingdefault}{\sfdefault}{m}{sl}
\SetMathAlphabet{\mathsfit}{bold}{\encodingdefault}{\sfdefault}{bx}{n}

\newcommand{\E}{\mathbb{E}}

\newcommand{\R}{\mathbb{R}}

\usepackage{url}
\usepackage{amsfonts}
\usepackage{amsmath}
\usepackage{amssymb}
\usepackage{amsthm}
\usepackage{enumitem}
\usepackage{microtype}
\usepackage{xcolor}
\usepackage[numbers,sort]{natbib}
\usepackage[hypertexnames=false]{hyperref}
\usepackage{graphicx}
\usepackage{booktabs}
\usepackage{xspace}
\usepackage{float}
\usepackage{pifont}
\usepackage[capitalise,nameinlink]{cleveref}
\usepackage{authblk}


\crefname{equation}{Eq.}{Eqs.}
\Crefname{equation}{Eq.}{Eqs.}
\creflabelformat{equation}{#2#1#3}

\newtheorem{theorem}{Theorem}
\newtheorem{lemma}{Lemma}
\newtheorem{definition}{Definition}
\newtheorem{corollary}{Corollary}
\newtheorem{proposition}{Proposition}
\crefname{theorem}{Theorem}{Theorems}
\crefname{corollary}{Corollary}{Corollaries}
\crefname{proposition}{Proposition}{Propositions}

\newcommand{\Prob}{\mathbb{P}}
\newcommand{\tr}{\operatorname{tr}}
\newcommand{\op}{\mathrm{op}}

\DeclareMathAlphabet{\mathbfit}{OML}{cmm}{b}{it}
\newcommand{\question}{\mathbfit{q}}
\newcommand{\passk}{\ensuremath{\mathrm{pass}@k}\xspace}
\newcommand{\Verify}{\text{\textsc{Verify}}}

\usepackage[most]{tcolorbox}
\definecolor{amber}{HTML}{B7791F}
\newtcolorbox{objectivebox}{
    colback=amber!8,
    colframe=amber!65,
    boxrule=0.6pt,
    arc=2pt,
    left=6pt, right=6pt, top=4pt, bottom=4pt,
    enhanced,
    width=1\linewidth,
    center,
    coltitle=black,
    colbacktitle=amber!22,
    fonttitle=\bfseries\small,
    title={Pass@$k$ Pruning Objective},
    attach boxed title to top left={xshift=8pt, yshift=-\tcboxedtitleheight/2},
    boxed title style={colframe=amber!65, boxrule=0.5pt, arc=1.5pt},
}

\tcolorboxenvironment{theorem}{
    colback=amber!8,
    colframe=amber!65,
    boxrule=0.6pt,
    arc=2pt,
    left=6pt, right=6pt, top=4pt, bottom=4pt,
    enhanced,
}

\setcitestyle{numbers,square,comma}

\title{Direct Pass@$k$ Optimization for Thought \mbox{Pruning:} Balancing Quality and Diversity in Test-Time Reasoning}

\author{Mahdi Salmani}
\author{Adel Javanmard}
\author{Devansh Gupta}
\author{Meisam Razaviyayn}
\affil{University of Southern California\\
\texttt{\char`\{salmanis, ajavanma, guptadev, razaviya\char`\}@usc.edu}}
\date{}

\begin{document}
\raggedbottom   %

\maketitle

\begin{abstract}
Test-time scaling has emerged as a key inference strategy for reasoning models, where a model generates a large number of candidate solutions in the hope of finding at least one correct solution. Running all candidate solutions to completion can be computationally demanding, so recent work prunes the candidate pool after generating only a short prefix, committing further computation only to the surviving candidates. Existing pruning strategies divide broadly into two categories: Quality-based methods assign a score to each partial solution and discard those predicted to ultimately lead to an incorrect final solution, while diversity-based methods select partial solutions that follow different reasoning trajectories and are therefore likely to cover different final solutions. The tension between these two objectives is an instance of exploration versus exploitation, yet little to no prior work on pruning exposes this trade-off directly, let alone optimizes it. In this paper, we introduce a pruning procedure that addresses this trade-off directly by optimizing \passk over the selected solution pool. In particular, it chooses the pool to maximize the probability that at least one of its partial solutions, when continued to completion, yields a correct answer. Because this objective simultaneously rewards high-quality partial solutions and penalizes redundancy among them, quality and diversity are naturally balanced by the criterion itself rather than through separate ad-hoc design choices. On eight open-weight reasoning models, our method outperforms existing quality-based and diversity-based pruning strategies. The gains are particularly pronounced when the pruning procedure keeps a moderate number of partial solutions for continued generation, highlighting the value of directly balancing quality and diversity during pruning.
\end{abstract}

\section{Introduction}
\begin{figure}[t]
\centering
\includegraphics[width=0.38\textwidth]{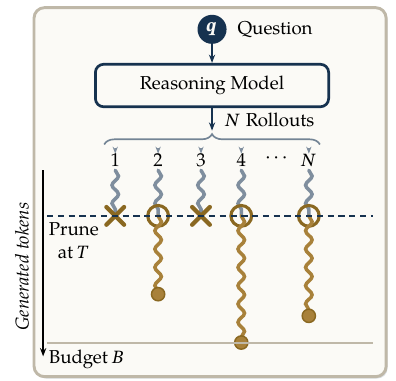}
\caption{\textbf{Test-time pruning.} Only $k$ of the $N$ rollouts (ringed) continue past the cutoff $T$, up to the budget $B$.}
\label{fig:problem}
\end{figure}
Large Language Models (LLMs) have demonstrated remarkable capabilities in complex reasoning, showcasing a strong aptitude for solving complex problems that require multi-step logic \citep{wei2022chain, kojima2022large, yao2023tree, jaech2024openai, guo2025deepseek}. To further enhance these capabilities, test-time scaling has emerged as a prominent strategy, where generating multiple candidate solutions for a single problem allows the model to explore diverse reasoning strategies and achieve substantially better final performance \citep{snell2024scaling, wang2022self, brown2024large, wu2024inference}. However, a significant challenge in scaling test-time compute is the prohibitive cost of generating many full solutions, which we refer to as \emph{rollouts} moving forward \citep{pope2023efficiently, kwon2023efficient}. Beyond generation, there is also the cost of evaluating the finished rollouts in best-of-$N$ methods, which score each rollout with a reward model, and this is computationally demanding on a large pool of long reasoning traces \citep{cobbe2021training, lightman2024lets, sun2024fast}. This computational bottleneck, both in generation and in evaluating all $N$ rollouts in best-of-$N$ methods, necessitates test-time pruning of partial rollouts before they reach a final answer and terminate. In other words, we generate all $N$ rollouts only up to a cutoff of $T$ tokens, keep the $k$ most promising of them and run those to the full token budget $B$, as illustrated in \cref{fig:problem}.

One family of methods examines the confidence dynamics of the model during generation, utilizing metrics such as logit entropy or confidence \citep{manvi2024adaptive, yang2026dynamic, hosseini2026early, wang2025entropy}. The underlying intuition is that if a model is overly confident at an intermediate decoding step, extending the reasoning process is a waste of compute; conversely, if the model is entirely lacking in confidence, it is unlikely to reach a correct solution regardless of the additional compute provided, allowing for early termination. Another line of work leverages the model's internal state, specifically the hidden states of its transformer blocks, to identify which rollouts are worth continuing \citep{zhang2025reasoning, afzal2025knowing}. By ``probing" these internal states, one can predict whether the model will ultimately arrive at a correct solution, effectively framing the pruning problem as a classification task. This allows for the early discarding of partial rollouts that are unlikely to yield a correct final answer. In parallel, a separate line of research prioritizes representation exploration (RepExp) to select a subset of rollouts that cover as many distinct reasoning strategies as possible \citep{tuyls2025repexp}. By selecting partial rollouts whose corresponding embeddings are dissimilar, RepExp aims to explore a diverse set of reasoning trajectories.

\begin{figure}[t]
\setlength{\abovecaptionskip}{0pt}
\centering
\includegraphics[width=\textwidth]{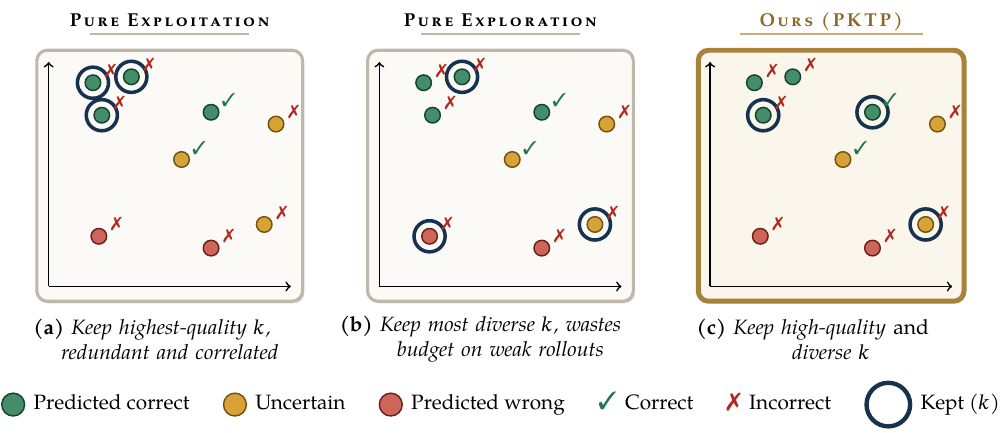}
\caption{\textbf{Conceptual illustration.} The same pool of candidate rollouts, positioned by their partial rollout
representations in embedding space, under three pruning strategies. Each rollout is colored by its
\emph{predicted} correctness and marked \ding{51}/\ding{55} by its \emph{actual} outcome; the $k=3$ kept
rollouts are ringed. \textbf{(a) Pure exploitation} keeps the highest-scoring rollouts, but they are
correlated and cluster together, so they miss the correct rollout. \textbf{(b) Pure exploration} keeps a
maximally diverse subset yet spends its budget on low-quality rollouts, also missing it. \textbf{(c) Our method} optimizes
\passk directly over the kept subset, balancing quality and diversity, and retains the
correct rollout. }
\label{fig:teaser}
\end{figure}

These approaches, however, sit at the two ends of a spectrum, and neither directly optimizes the quantity we actually care about. The problem with probing methods is that the rollouts they rank highest tend to be similar to one another. As a result, the probe may be confidently wrong about all of the kept rollouts at once, and none of them reaches a correct answer. This is the problem of pure exploitation (\cref{fig:teaser}a). RepExp, on the other hand, finds diverse rollouts, but because it ignores the quality of the rollouts, it can spend much of its budget on rollouts that are unlikely to succeed (\cref{fig:teaser}b). These limitations motivate selecting the rollouts by the metric we actually want to optimize, choosing the set $S$ of $k$ rollouts that maximizes \passk, that is, $\max_{|S| = k} \mathrm{pass}@k(S)$, which we formalize in \cref{eq:passAtK}. Optimizing this objective directly keeps the selected rollouts diverse enough to avoid shared failures while still favoring high-quality ones (\cref{fig:teaser}c).

To this end, we propose \textbf{PKTP} (\textbf{P}ass@$\mathbfit{k}$ \textbf{T}hought \textbf{P}runing), a pruning framework that chooses the kept rollouts by maximizing \passk directly. The main difficulty is that, even if the rollouts are sampled independently, their outcomes are not independent (conditioned on the partial observation of the rollout). Hence,  \passk depends on their joint distribution (i.e., rollouts with similar partial rollouts tend to succeed or fail together). We model this joint distribution with a latent Gaussian model, in which the correctness of each rollout is obtained by thresholding a Gaussian score. Evaluating \passk under this model requires an expectation over the latent variables, so we instead derive a closed-form second-order approximation of it. We show that this approximation is submodular, enabling us to approximately solve it using a greedy procedure to find a subset of size~$k$. An important aspect of the proposed framework is that the latent model is trained only once and does not depend on $k$, and a single greedy pass yields a nested sequence of subsets, so the budget $k$ can be chosen online at inference time without retraining.

\paragraph{Contributions.}
First, we cast pruning as maximizing the \passk of the kept set, and we model how similar rollouts fail together.
Second, we derive a closed-form, submodular approximation of \passk that links the probe and RepExp.
Third, through experiments on eight reasoning models, we show that PKTP improves over existing pruning methods, especially for moderate values of $k$.

\section{Problem Formulation and Related Work}
\label{sec:problem}

\paragraph{Setup.}
Given a question $\question$, we draw $N$ candidate reasoning rollouts $y^{(i)} \sim \pi(\,\cdot \mid \question\,)$ from a model~$\pi$. Each rollout is observed only through its first $T$ tokens, the prefix $y_{1:T}^{(i)}$. We encode this prefix as a representation vector $h_i = h\!\left(y_{1:T}^{(i)}\right)$, a hidden-state embedding of the reasoning up to token T. Pruning then keeps $k \leq N$ rollouts, runs them to the full token budget $B$, and discards the rest. A completed rollout $y_{1:B}$ is either correct or not, recorded by the sign $C(y_{1:B}) \in \{-1,+1\}$,
\begin{equation}
    C(y_{1:B}) \;=\; \begin{cases} +1, & \text{if } \Verify\!\left(y_{1:B}\right), \\[2pt] -1, & \text{otherwise}. \end{cases} \label{eq:corrIndicator}
\end{equation}
Here $\Verify$ is a \emph{task-specific checker} that returns true exactly when a completed rollout reaches a correct final answer, for instance by matching a reference solution on a math problem or passing the unit tests on a coding task. We write $C(y^{(i)})$ for the outcome of the $i$-th rollout once it is run to the budget $B$.

\paragraph{Representations.}
To prune partial rollouts, we need to map each prefix to a vector that captures its essential information. One line of work relies on hand-designed features of the rollout and the model's internal states, such as token confidence, the geometry of the hidden-state trajectory, or the convergence of intermediate answers. These features are usually very low dimensional \citep{fu2025deep,vilas2025tracing,sun2026trajectories,jiang2026beyond,liu2025answer}. A second line of work instead keeps the hidden states of the transformer and pools them into a single vector \citep{wang2026truth,tuyls2025repexp,lu2026streaming}, and this is the class we use. In particular, we take $h_i$ to be an exponential moving average (EMA) of the hidden states, which has proven an effective pooling operation \citep{lu2026streaming}. Our improvements are thus shown for this class of embeddings, although our framework can be applied to other embeddings as well. \Cref{app:representation} gives the details of the representation we use.

\paragraph{Objective.}
At pruning time the outcomes $C(y^{(i)})$ are still unknown, since each depends on a (random) continuation we have not yet generated. How, then, should we choose the set $S \subseteq \{1, \ldots, N\}$ of $|S| = k$ rollouts to keep? Two extreme choices are as follows.
\begin{enumerate}[label=(\roman*), leftmargin=*]
    \item \textbf{(Pure exploitation)} Keep in $S$ the rollouts $\{i\}$ with the highest predicted success $\Prob\!\bigl(C(y^{(i)}) = +1 \mid \question, y_{1:T}^{(i)}\bigr)$, estimated by a linear probe trained on the representations $h_i$ \citep{zhang2025reasoning}, that is, the $k$ rollouts with the largest scores $w^\top h_i$ for a learned weight vector $w$. We refer to this pruning policy as \textbf{Linear Probe}.
    \item \textbf{(Pure exploration)} Build a diverse $S$, starting from a random rollout and greedily adding to $S$ the rollout $i^\star$ by solving 
    \begin{equation}
        i^\star = \arg\max_{i \notin S}\; h_i^\top \Bigl(\tau I + \sum_{j \in S} h_j h_j^\top\Bigr)^{-1} h_i ,
    \end{equation}
    that is, we add, the rollout whose representation is least explained by those already chosen, where $\tau > 0$ is a regularizer. This is the \textbf{RepExp} pruning policy \citep{tuyls2025repexp}.
\end{enumerate}

Neither extreme  balances quality against diversity. What we actually want is the set of $k$ rollouts with the best \passk. Mathematically, we seek the set $S$  most likely to contain a correct rollout,
\begin{objectivebox}
\begin{equation}
    \max_{S \,\subseteq\, \{1, \ldots, N\}} \Prob\!\left(\,\sum_{i \in S} \mathbf{1}\{C(y^{(i)}) = +1\} \;\geq\; 1 \;\middle|\; \question,\, y_{1:T}^{(1)}, \ldots, y_{1:T}^{(N)}\,\right) \quad \text{s.t.} \quad |S| = k. \label{eq:passAtK}
\end{equation}
\end{objectivebox}
Here, $\mathbf{1}\{\cdot\}$ is the indicator function, i.e., $\mathbf{1}\{True\} = 1$ and $\mathbf{1}\{False\} = 0$.
The objective in \cref{eq:passAtK} is exactly  \passk of the kept pool, the probability that at least one surviving rollout, once run to the budget $B$, yields a correct answer. We also assume that the final outcomes are binary, although the same approach generalizes to continuous outcomes, where each completed rollout is scored by a reward model and the goal is to maximize the best of the kept rollouts \citep{chow2024inference}, in which case \cref{eq:passAtK} becomes a max@$k$ objective \citep{walder2025pass}.

\paragraph{Related work.}
Existing pruning methods score each rollout on its own, using the model's confidence \citep{manvi2024adaptive, fu2025deep}, a reward model \citep{sun2024fast}, or a probe on its hidden states \citep{zhang2025reasoning, step2026}, or they select a diverse set, as RepExp does \citep{tuyls2025repexp}. Pass@$k$ has also been used as a training objective for reasoning models \citep{walder2025pass, chen2025passk}, but not to decide which partial rollouts to keep. Outside of language models, choosing a set so that at least one item succeeds has been studied in information retrieval and Bayesian optimization \citep{chen2006less, sanner2011diverse, wilson2018maximizing}. \Cref{app:related} gives an extended discussion. In what follows, we discuss how such an objective can be solved.

\section{Method}
\label{sec:method}

There are some challenges in solving \cref{eq:passAtK}: First, we need to obtain the \passk of a given set of rollouts. We address this by learning the joint distribution of the rollouts' outcomes under a probabilistic model, from which the \passk of any set can be estimated. Second, even with such an estimate, finding the subset of $k$ rollouts with the highest estimated \passk is a mixed-integer program. We can therefore view our framework as two components, learning the joint distribution of the outcomes (\cref{sec:prob-model}), and solving this mixed-integer program at selection time (\cref{sec:subset-selection}).

\subsection{Probabilistic model}
\label{sec:prob-model}

Let $H = [h_1, \ldots, h_N] \in \R^{d \times N}$ collect the representations of the $N$ generated partial rollouts, $Q_\theta:\R^{d} \to \R$ be a learnable mapping predicting the rollout quality, and define $Q_\theta(H) = [q_1, \ldots, q_N]^\top$ with $q_i := Q_\theta(h_i)$. We model the latent scores of the rollouts as
\begin{equation}
    \mathbf{Z} = \underbrace{Q_\theta(H)\vphantom{H^\top}}_{\text{Rollout Quality}} + \underbrace{H^\top \mathbf{f}}_{\text{Rollouts Correlation}} + \underbrace{\mathbf{m}\vphantom{H^\top}}_{\text{Outcome Noise}}, \qquad \mathbf{m} \sim \mathcal{N}(0, I_N), \quad \mathbf{f} \sim \mathcal{N}(0, I_d), \label{eq:model}
\end{equation}
where $\mathbf{m}$ and $\mathbf{f}$ are independent. Intuitively, $\mathbf{Z}_i$ is a score of how promising rollout $i$ is, so the higher its quality $q_i$, the higher its score. On top of $q_i$, the term $h_i^\top \mathbf{f}$ is a random effect shared by rollouts with similar representations, which makes them succeed or fail together. Indeed, for distinct rollouts, $\operatorname{Cov}(\mathbf{Z}_i,\mathbf{Z}_j\mid H)=h_i^\top h_j$, so rollouts with aligned $h_i$ have positively correlated outcomes. Finally, $\mathbf{m}_i$ is noise specific to  rollout~$i$.

The correctness $C(y^{(i)})$ of rollout $i$ is then modeled by thresholding its score at zero. More specifically, we define the random variable  $\mathbf{C}_i = 2 \times \mathbf{1}\{\mathbf{Z}_i \geq 0\} - 1$.
In other words, a positive score implies that the rollout is correct, while a negative score implies that it is incorrect.
Since the scores are independent conditionally on $\mathbf{f}$, the likelihood of an observed correctness pattern $c \in \{-1,1\}^N$ is
\begin{equation}
    \Pr\bigl[\mathbf{C} = c \mid H\bigr] = \E_{\mathbf{f}}\Bigl[\prod_{i=1}^N \Phi\bigl(c_i(q_i + h_i^\top \mathbf{f})\bigr)\Bigr], \label{eq:likelihood}
\end{equation}
where $\Phi$ is the standard Gaussian CDF, and the derivation can be found in \cref{app:likelihood}.

Suppose we have $M$ training questions, where question $m$ consists of the representations $H^{(m)}$ of a pool and its observed outcomes $c^{(m)} \in \{-1,1\}^N$. The mapping $Q_\theta$ is trained by maximizing their likelihood under \cref{eq:likelihood}, that is, by solving
\begin{equation}
    \max_{\theta}\ \frac{1}{M}\sum_{m=1}^{M}\log\Pr\nolimits_{\theta}\bigl[\mathbf{C} = c^{(m)} \mid H^{(m)}\bigr], \label{eq:mle}
\end{equation}
where $\Pr_\theta$ is the likelihood of \cref{eq:likelihood} with $H = H^{(m)} = [h_1^{(m)}, \ldots, h_N^{(m)}]$ and $q_i = Q_\theta(h_i^{(m)})$. Solving \cref{eq:mle} directly is hard, since the expectation over $\mathbf{f}$ in \cref{eq:likelihood} has no closed form. We therefore expand each log-factor to second order in $h_i^\top \mathbf{f}$, which turns the expectation into a Gaussian integral that can be evaluated exactly (\cref{thm:surrogate}).

\begin{theorem}[Second-order surrogate]
\label{thm:surrogate}
For a pattern $c \in \{-1,1\}^N$, let $\phi$ be the standard Gaussian density, $\lambda_i(c_i) = \phi(q_i)/\Phi(c_i q_i)$, $\kappa_i(c_i) = \lambda_i(c_i)\bigl(\lambda_i(c_i) + c_i q_i\bigr)$,
\begin{equation}
    \Sigma(c) = \sum_{i=1}^{N} \kappa_i(c_i)\, h_i h_i^\top, \qquad b(c) = \sum_{i=1}^{N} c_i \lambda_i(c_i)\, h_i .
\end{equation}
Then, as $\varepsilon \to 0$,
\begin{equation}
    \log\Phi\bigl(c_i(q_i + \varepsilon)\bigr) = \log\Phi(c_i q_i) + c_i\lambda_i(c_i)\, \varepsilon - \frac{\kappa_i(c_i)}{2}\varepsilon^2 + O(|\varepsilon|^3),
\end{equation}
and replacing each log-factor of $\Pr[\mathbf{C} = c \mid H]$ in \cref{eq:likelihood} by this expansion at $\varepsilon = h_i^\top \mathbf{f}$ gives
\begin{equation}
    \widetilde{\Pr}[\mathbf{C} = c \mid H] = \Bigl(\prod_{i=1}^{N}\Phi(c_i q_i)\Bigr)\det\bigl(I_d + \Sigma(c)\bigr)^{-1/2}\exp\Bigl(\frac12\, b(c)^\top\bigl(I_d + \Sigma(c)\bigr)^{-1}b(c)\Bigr). \label{eq:surrogate-prob}
\end{equation}
\end{theorem}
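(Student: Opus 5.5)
The proof has two parts: a one-dimensional Taylor expansion of $g(x)=\log\Phi(x)$, and then a Gaussian integral over $\mathbf{f}$.

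\textbf{Taylor expansion.} Differentiating gives $g'(x)=\phi(x)/\Phi(x)$. Using $\phi'(x)=-x\phi(x)$, we get
\[
g''(x)=-\frac{\phi(x)}{\Phi(x)}\Bigl(\frac{\phi(x)}{\Phi(x)}+x\Bigr).
\]
Set $u(\varepsilon)=c_i(q_i+\varepsilon)$ and expand $g(u(\varepsilon))$ around $\varepsilon=0$. The chain rule gives first derivative $c_i\,g'(c_iq_i)$ and second derivative $c_i^2\,g''(c_iq_i)=g''(c_iq_i)$, since $c_i^2=1$. Because $\phi$ is even, $\phi(c_iq_i)=\phi(q_i)$, so $g'(c_iq_i)=\lambda_i(c_i)$. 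Likewise $-g''(c_iq_i)=\lambda_i(c_i)\bigl(\lambda_i(c_i)+c_iq_i\bigr)=\kappa_i(c_i)$. The $O(|\varepsilon|^3)$ remainder follows from Taylor's theorem with Lagrange remainder, because $g$ is smooth on $\R$ and so $g'''$ is bounded on a neighborhood of $c_iq_i$. This proves the first display.

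\textbf{Gaussian integral.} Replace each factor $\Phi(c_i(q_i+h_i^\top\mathbf{f}))$ in \cref{eq:likelihood} by the exponential of the quadratic expansion at $\varepsilon=h_i^\top\mathbf{f}$. Summing over $i$, the product becomes
\[
\Bigl(\prod_i\Phi(c_iq_i)\Bigr)\exp\Bigl(b(c)^\top\mathbf{f}-\tfrac12\,\mathbf{f}^\top\Sigma(c)\mathbf{f}\Bigr),
\]
since $\sum_i\kappa_i(h_i^\top\mathbf{f})^2=\mathbf{f}^\top\Sigma(c)\mathbf{f}$ and $\sum_i c_i\lambda_i\,h_i^\top\mathbf{f}=b(c)^\top\mathbf{f}$. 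Now take the expectation over $\mathbf{f}\sim\mathcal N(0,I_d)$, i.e.\ integrate against $(2\pi)^{-d/2}e^{-\|\mathbf{f}\|^2/2}$. The exponent becomes
\[
-\tfrac12\,\mathbf{f}^\top A\,\mathbf{f}+b^\top\mathbf{f},\qquad A=I_d+\Sigma(c).
\]
Completing the square, $\mathbf{f}^\top A\mathbf{f}-2b^\top\mathbf{f}=(\mathbf{f}-A^{-1}b)^\top A(\mathbf{f}-A^{-1}b)-b^\top A^{-1}b$. The standard Gaussian integral then yields
\[
(2\pi)^{-d/2}\,(2\pi)^{d/2}\det(A)^{-1/2}\exp\bigl(\tfrac12\,b^\top A^{-1}b\bigr),
\]
which is exactly \cref{eq:surrogate-prob}.

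\textbf{Positive definiteness of $A$.} The integral above converges, and the formula is valid, only if $A=I_d+\Sigma(c)$ is positive definite. It suffices to show $\kappa_i(c_i)\ge 0$, since then $\Sigma(c)\succeq0$ and $A\succeq I_d$. Writing $x=c_iq_i$, we have $\kappa_i=-g''(x)$, so we need $-g''(x)>0$ for all $x$, i.e.\ $\log\Phi$ is strictly concave. Equivalently, the inverse Mills ratio satisfies $\phi(x)/\Phi(x)>-x$ for all $x$. For $x\ge0$ this is immediate, because the left side is positive. For $x<0$, put $t=-x>0$. Then $\Phi(x)=1-\Phi(t)$, and the claim becomes the Mills inequality $1-\Phi(t)<\phi(t)/t$. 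This follows from
\[
1-\Phi(t)=\int_t^\infty\phi(s)\,ds<\int_t^\infty \frac{s}{t}\,\phi(s)\,ds=\frac{\phi(t)}{t}.
\]
With $A$ positive definite, the Gaussian integral in the previous step is well defined and the theorem follows.
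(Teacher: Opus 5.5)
Your proof is correct and follows essentially the same route as the paper's. You obtain the Taylor coefficients from the derivative of the inverse Mills ratio, using $c_i^2=1$ and the evenness of $\phi$. You show $\kappa_i(c_i)>0$ via the same Mills-ratio bound $\int_t^\infty\phi(s)\,ds<\phi(t)/t$, so that $I_d+\Sigma(c)\succ 0$, and you evaluate the Gaussian integral by the same completion of the square.
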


The proof is in \cref{app:proof-surrogate}. The expansion is local, so we use \cref{eq:surrogate-prob} as a tractable surrogate rather than as an exact identity or a bound. We therefore replace $\Pr_\theta$ in \cref{eq:mle} by $\widetilde{\Pr}_\theta$.

Instead of using $h_i$ directly in the correlation term, one could learn an encoder $\mathcal{E}_\psi$ of the representations and use $\mathcal{E}_\psi(h_i)$ in its place, but its helpfulness depends on the amount of data. Since our data is limited, we keep $h_i$ fixed, so that $Q_\theta$ is the only learned component, and for the same reason we use a simple linear model $Q_\theta(h) = \theta^\top h + \theta_0$ for $Q_\theta$, as in the Linear Probe \citep{zhang2025reasoning}. Unlike the Linear Probe, which is trained on the correctness of each rollout separately, $Q_\theta$ is trained on the joint outcomes of all rollouts of a question, so the quality it learns already accounts for the correlation between them. Since \cref{eq:surrogate-prob} is in closed form and differentiable in $\theta$, we maximize the surrogate likelihood by stochastic gradient ascent over the training questions.

\subsection{Subset selection}
\label{sec:subset-selection}

Maximizing \passk in \cref{eq:passAtK} is the same as minimizing the probability that every kept rollout fails. Under the latent Gaussian model, and suppressing the conditioning on $H$, the failure probability of a set $S$ of kept rollouts is
\begin{equation}
    L(S) = \Pr\bigl[\mathbf{C}_i = -1 \ \forall i \in S\bigr] = \E_{\mathbf{f} \sim \mathcal{N}(0, I_d)}\Bigl[\prod_{i \in S}\Phi\bigl(-(q_i + h_i^\top \mathbf{f})\bigr)\Bigr], \label{eq:selection}
\end{equation}
and we seek the set of $k$ rollouts that minimizes it. Since $L(S)$ has no closed form, we approximate it with \cref{thm:surrogate}.

\begin{corollary}[Selection surrogate]
\label{cor:selection}
For a set $S \subseteq [N] := \{1, \ldots, N\}$, let $\lambda_i = \phi(q_i)/\Phi(-q_i)$, $\kappa_i = \lambda_i(\lambda_i - q_i)$, $\Sigma_S = \sum_{i \in S}\kappa_i\, h_i h_i^\top$ and $b_S = \sum_{i \in S}\lambda_i\, h_i$. Applying \cref{thm:surrogate} to the rollouts in $S$ with $c_i = -1$ for every $i \in S$ gives a surrogate $\widetilde L(S)$ of $L(S)$ with
\begin{equation}
    \log\widetilde L(S) = \underbrace{\sum_{i \in S}\log\Phi(-q_i)}_{\textnormal{Rollout Quality}} \;+\; \underbrace{\vphantom{\sum_{i \in S}}\Bigl(-\frac12\log\det(I_d + \Sigma_S) + \frac12\, b_S^\top(I_d + \Sigma_S)^{-1}b_S\Bigr)}_{\textnormal{Rollouts Correlation}} . \label{eq:surrogate}
\end{equation}
\end{corollary}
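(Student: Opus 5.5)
The plan is to specialize \cref{thm:surrogate} to the sub-collection of rollouts indexed by $S$ with the all-failure pattern, and then take logarithms. First, note that $L(S)$ in \cref{eq:selection} has the same form as the likelihood in \cref{eq:likelihood}. Concretely, it is the likelihood $\Pr[\mathbf{C}_S = c_S \mid H_S]$ of the pattern $c_i = -1$ for all $i \in S$, computed for the reduced pool $H_S = [h_i]_{i\in S}$. Marginalizing over the rollouts outside $S$ is harmless: conditionally on $\mathbf{f}$ the scores are independent, so each factor $\Phi(c_j(q_j + h_j^\top \mathbf{f}))$ with $j \notin S$ sums to one over $c_j \in \{-1,1\}$. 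Hence the outer expectation over $\mathbf{f}\sim\mathcal{N}(0,I_d)$ is exactly the one in \cref{eq:selection}. This lets me invoke \cref{thm:surrogate} verbatim with $N$ replaced by $|S|$ and the index set relabeled.

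Next I substitute $c_i = -1$ into the quantities of \cref{thm:surrogate}. This gives $\lambda_i(-1) = \phi(q_i)/\Phi(-q_i)$, which is the corollary's $\lambda_i$. It also gives $\kappa_i(-1) = \lambda_i(\lambda_i - q_i)$, which is the corollary's $\kappa_i$. For the theorem's matrices, $\Sigma(c)$ becomes $\sum_{i\in S}\kappa_i h_i h_i^\top = \Sigma_S$. The vector is $b(c) = \sum_{i\in S}(-1)\lambda_i h_i = -b_S$. The sign flip of $b$ is irrelevant, because $b$ enters only through the quadratic form $b^\top(I_d+\Sigma_S)^{-1}b$.

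Plugging these into \cref{eq:surrogate-prob} gives
\[
\widetilde L(S) = \Bigl(\prod_{i\in S}\Phi(-q_i)\Bigr)\det(I_d+\Sigma_S)^{-1/2}\exp\Bigl(\tfrac12 b_S^\top (I_d+\Sigma_S)^{-1} b_S\Bigr).
\]
Taking logarithms term by term yields \cref{eq:surrogate}.

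The only point needing care is that $\det(I_d+\Sigma_S)$ is positive, so that the logarithm and the inverse are defined. This requires $\kappa_i \ge 0$. I would check it via $\kappa_i = \lambda_i(\lambda_i - q_i)$, where $\lambda_i$ is the Gaussian inverse Mills ratio evaluated at $-q_i$. The classical Mills-ratio inequality $\phi(x)/\Phi(-x) > x$ gives $\lambda_i > q_i$. Hence $\kappa_i > 0$, so $\Sigma_S \succeq 0$ and $I_d + \Sigma_S \succ 0$. This positivity is presumably already used inside the proof of \cref{thm:surrogate} to make the Gaussian integral converge, so the corollary reduces to bookkeeping. I therefore expect the marginalization remark in the first step to be the only substantive point.
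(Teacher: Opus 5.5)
Your proposal is correct and follows the paper's proof essentially step for step. You marginalize the rollouts outside $S$ out of \cref{eq:likelihood}, apply \cref{thm:surrogate} to the reduced pool with $c_i=-1$, observe that $b(c)=-b_S$ enters only through the quadratic form, and take logarithms. The positivity of $\kappa_i$ that you check is already established, by the same Mills-ratio argument, inside the proof of \cref{thm:surrogate}, so it is redundant but harmless.
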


The proof is in \cref{app:proof-corollary}. We therefore minimize $\log\widetilde L(S)$ over $|S| = k$. Each term of \cref{eq:surrogate} has a direct interpretation. The first term favors large $q_i$, because the baseline failure factor $\Phi(-q_i)$ decreases with $q_i$. The negative log-determinant rewards a collection of $h_i$'s that covers several directions instead of repeatedly selecting the same direction. The last term is nonnegative and penalizes sets whose weighted vectors $\lambda_i h_i$ add up in the same direction, which the determinant alone cannot detect since it treats $h_i$ and $-h_i$ identically. For example, two rollouts of the same quality fail together more often with $h_2 = h_1$ than with $h_2 = -h_1$, yet both cases give the same $\Sigma_S$ and hence the same determinant, while $b_S$ is $2\lambda_1 h_1$ in the first case and $0$ in the second.

However, minimizing $\log\widetilde L(S)$ exactly is hard.

\begin{proposition}[Hardness of exact selection]
\label{prop:nphard}
Minimizing $\log\widetilde L(S)$ over all $S \subseteq [N]$ with $|S| = k$ is NP-hard.
\end{proposition}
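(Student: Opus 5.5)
The plan is a polynomial-time reduction from a known NP-hard cardinality-constrained subset problem. The cleanest candidate is maximum-volume (D-optimal) subset selection: given vectors $v_1,\ldots,v_N \in \R^d$ and $k$, find $|S|=k$ maximizing $\det\bigl(I_d + \sum_{i\in S} v_i v_i^\top\bigr)$. This problem is NP-hard; it contains, for instance, the Gram-determinant (volume) maximization problem shown NP-hard by \c{C}ivril and Magdon-Ismail, via an appropriate scaling. Since the instance of \cref{prop:nphard} lets us choose the representations $h_i$ and the qualities $q_i$ freely, I will pick them so that the quality term is constant and the $b_S$ term is either constant or vanishes. Then minimizing $\log\widetilde L(S)$ becomes exactly maximizing the log-determinant.

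First, I set all qualities equal, $q_i = q$ for all $i$. Then $\lambda_i=\lambda$ and $\kappa_i=\kappa>0$ are common constants. Positivity of $\kappa = \lambda(\lambda-q)$ holds because of the standard Mills-ratio inequality $\phi(q)/\Phi(-q) > q$. The quality term then equals $k\log\Phi(-q)$, which is constant over $|S|=k$. Next, I scale $h_i = v_i/\sqrt{\kappa}$ so that $\Sigma_S=\sum_{i\in S} v_iv_i^\top$. The remaining obstacle is the term $\tfrac12 b_S^\top(I+\Sigma_S)^{-1}b_S$ with $b_S=(\lambda/\sqrt{\kappa})\sum_{i\in S} v_i$, which depends on $S$.

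Killing this term is the main step. I will use the sign trick highlighted after \cref{cor:selection}: $\Sigma_S$ is invariant under $h_i\mapsto -h_i$, while $b_S$ is not. Concretely, I duplicate each vector as a pair $\pm v_i$ and choose $k$ even. The difficulty is that a minimizer is then not forced to take pairs. To handle this, I would instead embed in extra coordinates. I let $h_i = (v_i/\sqrt{\kappa},\, \sigma e_i)$ in $\R^{d+N}$ with $\sigma$ large. The selected extra coordinates then contribute a fixed factor $(1+\kappa\sigma^2)^k$ to the determinant, independently of $S$. By the block structure, the $b_S$ term becomes
\[
\sum_{i\in S}\frac{\lambda^2\sigma^2}{1+\kappa\sigma^2}+O(\text{cross terms}),
\]
which is nearly constant. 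The error terms would have to be controlled to be smaller than the minimum determinant gap of the source instance.

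A simpler alternative avoids asymptotics: let $q\to-\infty$ style limits be replaced by choosing the source problem itself to be sign-symmetric-free. That is, I reduce from an NP-hard problem stated directly on the full objective, such as max-weight $k$-clique-type encodings where $b$ contributions are exactly computable. My first attempt, though, is the extra-coordinate construction. The step I expect to be hardest is showing, with polynomial bit-size parameters, that the perturbation from $b_S$ cannot change which $S$ is optimal. I will get this by bounding it uniformly, using $\|(I+\Sigma_S)^{-1}\|\le 1$, against the integrality gap of determinants when the $v_i$ are integer vectors, where distinct determinant values differ by at least $1$. If this uniform bound works, the minimizers of $\log\widetilde L$ coincide with the maximizers of the determinant. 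This completes the reduction and proves \cref{prop:nphard}.
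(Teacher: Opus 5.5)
Your choice of source problem (D-optimal / maximum-volume selection) is a legitimate starting point, and equalizing all $q_i$ is also what the paper does. The gap is in the treatment of the penalty $\tfrac12 b_S^\top(I+\Sigma_S)^{-1}b_S$: none of your three mechanisms makes it provably smaller than the determinant margin.

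\textbf{The extra-coordinate step rests on a false factorization.} Because $v_i/\sqrt{\kappa}$ and $\sigma e_i$ sit in the \emph{same} rank-one term $\kappa h_ih_i^\top$, Sylvester's identity gives $\det(I+\Sigma_S)=\det\bigl((1+\kappa\sigma^2)I_k+V_S^\top V_S\bigr)$, where $V_S$ has columns $v_i$, $i\in S$. This is not $(1+\kappa\sigma^2)^k\det(I_k+V_S^\top V_S)$; already for $k=1$ it equals $1+\kappa\sigma^2+\|v_i\|^2$. Large $\sigma$ therefore acts as extra ridge regularization. With $s=1+\kappa\sigma^2$, the $S$-dependence of the log-determinant collapses to the modular term $\tr(V_S^\top V_S)/s$ plus pairwise corrections of order $s^{-2}$, so you are no longer maximizing the source determinant. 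Meanwhile the quadratic form equals exactly $\frac{\lambda^2}{\kappa}\bigl(k-\mathbf{1}^\top(sI_k+V_S^\top V_S)^{-1}\mathbf{1}\bigr)$. Its $S$-dependent part, approximately $\frac{\lambda^2}{\kappa s^2}\|\sum_{i\in S}v_i\|^2$, is of the \emph{same} order $s^{-2}$. The ratio between penalty and diversity term is thus fixed by $\lambda^2/\kappa$ and untouched by $\sigma$. Against an integrality gap of order $1/\det_{\max}\sim s^{-k}$, the comparison only gets worse as $\sigma$ grows.

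\textbf{The uniform bound fails without extra coordinates too.} For $\sigma=0$, your bound gives $\tfrac12\|b_S\|^2=\frac{\lambda^2}{2\kappa}\|\sum_{i\in S}v_i\|^2$. For generic $q$ this is of order one or larger (at $q=0$, $\lambda^2/\kappa=1$), while the margin you compare it against, $\tfrac12\log(1+1/\det_{\max})$, can be exponentially small. The ``sign-symmetric-free'' alternative does not specify a construction.

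\textbf{The missing idea is to use the remaining free parameter, the common quality $q$.} For $q<0$ one has $\kappa=\lambda^2+\lambda|q|\ge\lambda|q|$, so $\lambda^2/\kappa\le\lambda/|q|$ in your scaling, and $\lambda^2/\kappa^2\le 1/q^2$ without it. Sending $q$ very negative therefore makes the penalty negligible relative to the diversity term. This is exactly the paper's proof:
\begin{itemize}
\item It reduces from independent set, using binary $h_i$ whose Gram matrix on $S$ is $\Delta I_k+A_S$.
\item Using the eigenvalues of $A_S$ ($\tr A_S=0$, $\tr A_S^2=2e(S)$) and $\log(1+x)\le x-x^2/8$, it shows that independent sets beat every other size-$k$ set in $\log\det(I_d+\Sigma_S)$ by at least $\kappa^2/(4(1+\Delta)^2)$.
\item It bounds the penalty by $\tfrac12\lambda^2k(\Delta+k)$.
\item It takes $q=-2(1+\Delta)(\Delta+k)$, so that $\kappa\ge\lambda|q|$ lets the margin win.
\end{itemize}
That margin also holds for every $\kappa\in[\lambda|q|,1)$, so the instance has integer data. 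Your integrality argument instead needs $h_i=v_i/\sqrt{\kappa}$ with $\kappa$ transcendental, which is not a finite input.

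With the $q$-trick, your D-optimal route can be repaired and would give a genuinely different reduction. For example, take $h_i=t\,v_i$ with a large integer $t$; the penalty is then at most $\frac{\lambda^2k}{2\kappa}\le\frac{\lambda k}{2|q|}$, and a polynomial $|q|$ suffices because $\lambda$ decays like $\phi(q)$. As written, however, the key step does not close.
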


The proof is in \cref{app:nphard}, by a reduction from the maximum independent set problem. When the objective is monotone and submodular, however, a greedy optimizer that adds one rollout at a time is guaranteed to achieve at least a $(1 - e^{-1})$ fraction of the optimal value \citep{nemhauser1978analysis}. The exact pass probability $P(S) := 1 - L(S)$ is known to be monotone and submodular \citep{wilson2018maximizing}. An approximation of a submodular function, though, is not necessarily submodular, so we prove that its surrogate $\widetilde P(S) := 1 - \widetilde L(S)$ is also monotone and submodular, and hence the greedy optimizer and its approximation guarantee apply to our problem.

\begin{theorem}[Greedy guarantee]
\label{thm:greedy-main}
$\widetilde P$ is normalized ($\widetilde P(\varnothing) = 0$), monotone ($\widetilde P(A) \leq \widetilde P(B)$ whenever $A \subseteq B$), and submodular, that is, for all $A \subseteq B \subseteq [N]$ and $j \notin B$,
\begin{equation}
    \widetilde P(A \cup \{j\}) - \widetilde P(A) \geq \widetilde P(B \cup \{j\}) - \widetilde P(B).
\end{equation}
Consequently, the set $S_k^{\mathrm{greedy}}$ obtained by adding, $k$ times, the candidate that most increases $\widetilde P$ satisfies
\begin{equation}
    \widetilde P(S_k^{\mathrm{greedy}}) \geq (1-e^{-1})\max_{|S|\leq k}\widetilde P(S).
\end{equation}
\end{theorem}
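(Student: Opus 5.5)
The plan is to rewrite the surrogate $\widetilde L$ as an exact Gaussian expectation and to reduce both monotonicity and submodularity to a property of a one-rollout \emph{conditional failure ratio}. By \cref{cor:selection}, which is obtained from \cref{thm:surrogate} with $c_i=-1$, we can write
\[
\widetilde L(S)=\E_{\mathbf f\sim\mathcal N(0,I_d)}\Bigl[\prod_{i\in S} g_i(h_i^\top\mathbf f)\Bigr],\qquad g_i(\varepsilon)=\Phi(-q_i)\exp\Bigl(-\lambda_i\varepsilon-\tfrac{\kappa_i}{2}\varepsilon^2\Bigr).
\]
Here $\kappa_i=\lambda_i(\lambda_i-q_i)\in(0,1)$ by the standard inverse-Mills-ratio inequality (it equals one minus the variance of a truncated normal). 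Normalization is then immediate, since the empty product gives $\widetilde L(\varnothing)=1$ and hence $\widetilde P(\varnothing)=0$.

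Next, for $S$ and $j\notin S$, I would define the ratio $r_S(j)=\widetilde L(S\cup\{j\})/\widetilde L(S)$. Let $\mu_S$ be the tilted measure with density proportional to $\prod_{i\in S}g_i(h_i^\top f)\,\phi_d(f)$. It is Gaussian with covariance $(I_d+\Sigma_S)^{-1}$ and mean $-(I_d+\Sigma_S)^{-1}b_S$, which is exactly the completed square in the proof of \cref{thm:surrogate}. Then $r_S(j)=\E_{\mu_S}[g_j(u)]$ with $u=h_j^\top\mathbf f\sim\mathcal N(m,s^2)$, where $s^2=h_j^\top(I_d+\Sigma_S)^{-1}h_j$ and $m=-h_j^\top(I_d+\Sigma_S)^{-1}b_S$. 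A one-dimensional Gaussian integral gives the closed form
\[
r_S(j)=\Phi(-q_j)\,(1+\kappa_js^2)^{-1/2}\exp\Bigl(\frac{\lambda_j^2s^2-2\lambda_jm-\kappa_jm^2}{2(1+\kappa_js^2)}\Bigr).
\]

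Both properties then reduce to statements about $r_S(j)$.
\begin{enumerate}[label=(\roman*)]
\item Monotonicity, $\widetilde L(S\cup\{j\})\le\widetilde L(S)$, is equivalent to $r_S(j)\le 1$.
\item Submodularity of $\widetilde P$ is $\widetilde L(A)\bigl(1-r_A(j)\bigr)\ge\widetilde L(B)\bigl(1-r_B(j)\bigr)$ for $A\subseteq B$. Given (i), we have $\widetilde L(B)\le\widetilde L(A)$, so it suffices that $r_A(j)\le r_B(j)$. In words, conditioning on more failures should not decrease the (surrogate) failure probability of a new rollout.
\end{enumerate}
For (ii), I would add the elements of $B\setminus A$ one at a time and track how $(s^2,m)$ change under a rank-one update of $I_d+\Sigma_S$ via Sherman--Morrison. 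The quantity $s^2$ decreases in the Loewner sense, and $m$ shifts by a term proportional to $(h_j^\top(I_d+\Sigma_S)^{-1}h_\ell)$ times a residual. I would then verify that $\log r$ moves in the right direction using $\kappa<1$ and $\lambda>0$.

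The main obstacle I expect is the sign-indefinite cross term $-2\lambda_jm$, since $m$ can have either sign. This is precisely the $b_S$ effect that separates aligned $h$'s from anti-aligned ones. Pointwise bounds such as $g_j\le1$ can fail, because $\sup g_j=\Phi(-q_j)e^{\lambda_j^2/(2\kappa_j)}$. So for (i) I would first try to show that the exponent is maximized over $m$ at $m=-\lambda_j/\kappa_j$, and that even there $r\le1$ reduces to a scalar inequality in $q_j$ and $s^2$. That inequality should follow from $\kappa_j<1$ and Gaussian tail bounds. If the rank-one tracking for (ii) becomes unwieldy, my fallback is an FKG or Gaussian-correlation argument for log-concave tilts, which should give the monotone dependence of $r_S(j)$ on $S$.

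Once normalization, monotonicity and submodularity are established, the bound $(1-e^{-1})\max_{|S|\le k}\widetilde P(S)$ for the greedy set follows directly from \citep{nemhauser1978analysis}.
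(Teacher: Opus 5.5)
\textbf{There is a genuine gap, and it sits exactly at the step you set aside.} You write that pointwise bounds such as $g_j\le 1$ can fail because $\sup_\varepsilon g_j(\varepsilon)=\Phi(-q_j)e^{\lambda_j^2/(2\kappa_j)}$. This supremum is $\beta(q):=\Phi(-q)\exp\bigl(\lambda(q)/(2(\lambda(q)-q))\bigr)$, and it is strictly below $1$ for every finite $q$. Proving that is the whole content of the paper's argument. With $v=\lambda-q$ one has $\frac{d}{dq}\log\beta=\frac{\lambda}{2v^2}(1-qv-2v^2)$. The inequality $1-qv-2v^2<0$ is the truncated-moment inequality $\tfrac12\E[(X-q)^2\mid X>q]<(\E[X-q\mid X>q])^2$, which follows from $T(x+y)<T(x)T(y)$ for the normalized tail $T(x)=\Phi(-q-x)/\Phi(-q)$. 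Together with $\log\beta(q)\to0$ as $q\to-\infty$, this gives $0<\beta<1$.

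Once $0<g_i(\mathbf f)<1$ for every $\mathbf f$, the theorem is immediate from $\widetilde L(S)=\E\prod_{i\in S}g_i$. The product shrinks under inclusion, and $\widetilde P(A\cup\{j\})-\widetilde P(A)=\E[(1-g_j)\prod_{i\in A}g_i]\ge\E[(1-g_j)\prod_{i\in B}g_i]$ holds pointwise in $\mathbf f$.

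Your step (i) cannot bypass this lemma either. Let $S$ consist of $n$ rollouts with the same quality and the same representation as $j$, and write $x=\|h_j\|^2$. Then $s^2=x/(1+n\kappa_jx)\to0$ and $m=-n\lambda_jx/(1+n\kappa_jx)\to-\lambda_j/\kappa_j$, so $r_S(j)\to\beta(q_j)$. Monotonicity over all pools therefore forces $\beta(q_j)\le1$. Appealing to $\kappa_j<1$ and generic tail bounds hides the delicate regime $q_j\to-\infty$, where $\beta\to1$.

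\textbf{Step (ii) fails outright.} The sufficient condition $r_A(j)\le r_B(j)$ is false in general, so neither rank-one tracking nor an FKG-type fallback can establish it. Take $A=\varnothing$, $B=\{\ell\}$, $q_\ell=q_j=q$ and $h_\ell=-h_j$. Your own formulas give $(m,s^2)=(0,x)$ for $A$ and $(m,s^2)=\bigl(\lambda x/(1+\kappa x),\,x/(1+\kappa x)\bigr)$ for $B$. Hence $r_B(j)-r_A(j)=-\Phi(-q)\lambda^2x+O(x^2)<0$ for small $x$.

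The reason is that anti-aligned representations are negatively correlated, so conditioning on the failure of $\ell$ makes $j$ \emph{less} likely to fail. The positive association of failures that your reduction assumes breaks down once negative inner products are allowed; this $h$ versus $-h$ distinction is exactly what the $b_S$ term encodes. Submodularity of $\widetilde P$ nevertheless holds. It only requires $\widetilde L(A)(1-r_A(j))\ge\widetilde L(B)(1-r_B(j))$, and the pointwise factorization delivers this without any monotonicity of the conditional ratio in $S$.
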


The proof is in \cref{app:proof-greedy}.

\paragraph{Greedy optimization.}
The greedy optimizer starts from the empty set and, at each step, adds the candidate $j \notin S$ with the smallest change $\Delta_j(S) = \log\widetilde L(S \cup \{j\}) - \log\widetilde L(S)$. Since adding a candidate changes $\Sigma_S$ by a rank-one matrix, this change has a closed form that can be updated cheaply without rebuilding the determinant and the inverse (\cref{app:greedy}). With these rank-one updates, the greedy optimizer costs $O\bigl(k(Nd + d^2)\bigr)$ time and $O(Nd + d^2)$ memory. By comparison, a naive implementation that rebuilds and factorizes $I_d + \Sigma_{S \cup \{j\}}$ for every candidate costs $O(d^3)$ per candidate, that is, $O(kNd^3)$ time, with the same $O(Nd + d^2)$ memory.

\subsection{Relation to Linear Probe and RepExp}
\label{sec:relation}

The surrogate makes the quality--diversity trade-off explicit and connects to the two baseline criteria in limiting regimes. If the correlation term in \cref{eq:surrogate} is switched off, then $\log\widetilde L(S)=\sum_{i\in S}\log\Phi(-q_i)$, so minimizing it keeps the $k$ largest quality scores and recovers a quality-only, probe-style ranking; it coincides with Linear Probe when $Q_\theta$ is the corresponding linear probe. At the other extreme, suppose that the quality scores are equal and that the penalty term is negligible throughout selection. Then every $\kappa_i$ equals a common value $\kappa$, so $\Sigma_S = \kappa\sum_{i \in S}h_i h_i^\top$ and the relevant set-dependent term is $-\tfrac12\log\det(I_d+\Sigma_S)$. By the matrix determinant lemma, its greedy marginal ranks a candidate $j$ by $h_j^\top(I_d+\Sigma_S)^{-1}h_j$, the residual-variance criterion used by RepExp on the same representations (see \cref{app:repexp-limit} for the derivation). Between these limits, the modular quality, log-determinant diversity, and penalty terms jointly determine the selected set.

The above two extremes do not show how PKTP moves from Linear Probe to RepExp as $k$ grows. To see this, we expand $\log L(S)$ and its surrogate when the correlations are weak.
\begin{proposition}[Weak correlations]
\label{prop:weak}
Fix $S$, $q_i$ and $h_i$, and replace $\mathbf f$ by $\rho\,\mathbf f$. As $\rho\to0$,
\begin{equation}
 \log L(S)=\sum_{i\in S}\Bigl[\log\Phi(-q_i)+\tfrac{\rho^2}{2}\lambda_iq_i\|h_i\|^2\Bigr]+\rho^2\sum_{i<j\in S}\lambda_i\lambda_j\,h_i^\top h_j+O(\rho^4),
\end{equation}

and $\log\widetilde L(S)$ has the same expansion.
\end{proposition}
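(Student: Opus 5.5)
We treat the exact failure probability and its surrogate separately, expanding each in $\rho$ to order $\rho^2$, and then compare the two expansions. In both cases the key input is the second-order expansion of $\log\Phi$ from \cref{thm:surrogate} with $c_i=-1$. Writing $g_i(\varepsilon)=\log\Phi(-(q_i+\varepsilon))$, it reads
\[
g_i(\varepsilon)=\log\Phi(-q_i)-\lambda_i\varepsilon-\tfrac{\kappa_i}{2}\varepsilon^2+O(|\varepsilon|^3),
\qquad \kappa_i=\lambda_i(\lambda_i-q_i).
\]

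\emph{Exact side.} With $\mathbf f$ replaced by $\rho\mathbf f$, set $X=\sum_{i\in S}g_i(\rho h_i^\top\mathbf f)$, so that $L(S)=\E_{\mathbf f}[e^{X}]$. Factor out $\prod_{i\in S}\Phi(-q_i)$ and write
\[
e^{X-\sum_i\log\Phi(-q_i)}=1+Y+\tfrac12Y^2+O(\rho^3),\qquad Y=-\rho\sum_i\lambda_i h_i^\top\mathbf f-\tfrac{\rho^2}{2}\sum_i\kappa_i(h_i^\top\mathbf f)^2+O(\rho^3).
\]
Take expectations, using $\E[\mathbf f]=0$, $\E[\mathbf f\mathbf f^\top]=I_d$, and the fact that odd moments vanish. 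This gives
\[
\E[e^{X-\sum_i\log\Phi(-q_i)}]=1+\rho^2\Bigl(-\tfrac12\sum_i\kappa_i\|h_i\|^2+\tfrac12\bigl\|\textstyle\sum_i\lambda_i h_i\bigr\|^2\Bigr)+O(\rho^4).
\]
Expand the squared norm as $\sum_i\lambda_i^2\|h_i\|^2+2\sum_{i<j}\lambda_i\lambda_j h_i^\top h_j$. The diagonal terms combine as $-\kappa_i+\lambda_i^2=\lambda_iq_i$. Taking $\log(1+x)=x+O(x^2)$ then yields the claimed expansion.

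\emph{Justifying the error orders on the exact side.} This is the main technical obstacle. The remainder $O(\rho^3)$ in $Y$ depends on $\mathbf f$, which is unbounded. I would handle it in two steps:
\begin{itemize}
\item First, $g_i$ is smooth and has at most quadratic growth, since $\log\Phi(-x)\sim -x^2/2$. Hence the Taylor remainder is bounded by $C(|\varepsilon|^3+|\varepsilon|^4)$, and $e^{X}\le1$ because $X\le 0$. Dominated convergence together with Gaussian moments then makes the remainder $O(\rho^3)$ in expectation.
\item Second, to upgrade $O(\rho^3)$ to $O(\rho^4)$, note that the map $\rho\mapsto L(S)$ is even. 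This follows from the symmetry $\mathbf f\mapsto-\mathbf f$ of the standard Gaussian, so all odd-order coefficients vanish. Differentiability to fourth order at $\rho=0$ again follows from dominated convergence, because the derivatives of $g_i$ grow polynomially.
\end{itemize}

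\emph{Surrogate side.} Here I would use the closed form from \cref{cor:selection}, with $\Sigma_S$ replaced by $\rho^2\Sigma_S$ and $b_S$ by $\rho b_S$. Expanding, $-\tfrac12\log\det(I+\rho^2\Sigma_S)=-\tfrac{\rho^2}{2}\tr\Sigma_S+O(\rho^4)$ and $\tfrac{\rho^2}{2}b_S^\top(I+\rho^2\Sigma_S)^{-1}b_S=\tfrac{\rho^2}{2}\|b_S\|^2+O(\rho^4)$. Since $\tr\Sigma_S=\sum_i\kappa_i\|h_i\|^2$ and $b_S=\sum_i\lambda_i h_i$, this reproduces exactly the same $\rho^2$ combination as on the exact side. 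The surrogate side is exact algebra, and its remainders are genuinely $O(\rho^4)$ because the expression depends on $\rho$ only through $\rho^2$.
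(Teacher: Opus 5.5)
Your proof is correct, and apart from one step it matches the paper's. The shared parts are the second-order expansion of $\log\Phi(-q_i-x)$, the Gaussian moment computation with the diagonal terms merged via $\lambda_i^2-\kappa_i=\lambda_iq_i$, and the surrogate computation via $\log\det(I_d+\rho^2\Sigma_S)=\rho^2\tr\Sigma_S+O(\rho^4)$ and $(I_d+\rho^2\Sigma_S)^{-1}=I_d+O(\rho^2)$.

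The step that differs is how the exact side reaches $O(\rho^4)$ rather than $O(\rho^3)$.
\begin{itemize}
\item The paper pushes both Taylor expansions one order further. It keeps the cubic term of $\log\Phi(-q_i-x)$ with a remainder bounded by $Cx^4$, using that every derivative of $\log\Phi$ of order at least two is bounded on $\R$. It expands the exponential of the centered exponent to third order, with remainder at most $\tfrac{M_S}{24}$ times its fourth power, where $M_S=\prod_{i\in S}\Phi(-q_i)^{-1}$ bounds that exponential. It then discards the cubic contributions because third Gaussian moments vanish.
\item You stop at second order, obtain an $O(\rho^3)$ error, and remove the $\rho^3$ coefficient using the evenness of $\rho\mapsto L(S)$. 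You combine this with $C^4$ regularity, obtained by differentiating under the expectation.
\end{itemize}

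Both arguments rest on the same symmetry $\mathbf f\mapsto-\mathbf f$. Your route avoids the third-order bookkeeping and makes transparent why the error drops by two orders. It pays for this with a separate regularity argument, which is easy here because the $\rho$-derivatives of $\prod_{i\in S}\Phi(-q_i-\rho h_i^\top\mathbf f)$ are bounded by polynomials in $\|\mathbf f\|$ uniformly in $\rho$. The paper's route is a single explicit estimate with trackable constants.

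One presentational point: your first display already asserts $O(\rho^4)$ after taking expectations, which is only justified once the evenness step is in place. State $O(\rho^3)$ there and upgrade afterwards.
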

\Cref{prop:weak} shows how the correlation strength $\rho$ controls where PKTP sits between the two baselines. At $\rho = 0$ the rollouts are independent, the expansion reduces to $\sum_{i\in S}\log\Phi(-q_i)$, and PKTP ranks the rollouts by quality, as Linear Probe does. For $\rho > 0$, every pair of kept rollouts adds a penalty $\rho^2\lambda_i\lambda_j h_i^\top h_j$, which is large when their representations are similar. A rollout added to a set of $t$ rollouts pays $t$ such penalties, so the first pick is decided by quality alone, and as the set grows the penalties add up until they outweigh the differences in quality, after which PKTP spreads its picks out, as RepExp does. The larger $\rho$ is, the sooner this happens, so $\rho$ sets how fast PKTP moves from Linear Probe to RepExp as $k$ grows.

\Cref{s2:section} makes the two ends of this behavior precise for a fixed $\rho$. At small $k$, if the gap between the $k$th and $(k+1)$th largest qualities exceeds a constant multiple of $k\rho^2$, PKTP keeps exactly the $k$ candidates with the largest $q_i$, as Linear Probe does, and in the same order when this holds at every budget up to $k$. At large $k$, when a few candidates of high quality stand out from a bulk of lower quality, the objective of PKTP, averaged over the bulk qualities, reduces to the RepExp objective plus a term for these candidates. Its minimizers therefore keep them and nearly maximize the RepExp objective. Finally, the conditions of these two results do not conflict. We construct a single model, with one quality distribution and one value of $\rho$, in which PKTP follows Linear Probe at small $k$, while its averaged objective reduces to the RepExp objective as $k$ grows.

\section{Experiments}
\label{sec:experiments}

\subsection{Setup}
\label{sec:setup}

\paragraph{Models.}
We evaluate on eight open-weight reasoning models in two size tiers. The small tier is Qwen3-0.6B, Qwen3-1.7B \citep{yang2025qwen3}, OpenMath-Nemotron-1.5B \citep{moshkov2025aimo} and DeepSeek-R1-Distill-Qwen-1.5B \citep{guo2025deepseek}, and the large tier is Qwen3-8B, OpenMath-Nemotron-7B, DeepSeek-R1-Distill-Qwen-7B and Skywork-OR1-Math-7B \citep{he2025skywork}. Every model generates its rollouts with temperature $0.8$, top-$p$ of $0.95$, and a full token budget of $B = 32{,}768$ tokens.

\paragraph{Datasets.}
We pool problems from several benchmark sources rather than a single one, since a pruning policy has to be trained on part of the problems and evaluated on the rest, and no single competition benchmark is large enough for both. The two model tiers are evaluated on different benchmark substrates, chosen so that neither tier is saturated. Both tiers use AIME 2024, 2025 and 2026 and HMMT 2024, 2025 and 2026 \citep{balunovic2025matharena}, together with RIMO-N \citep{chen2025rimo}. The small tier adds BeyondAIME \citep{seed2025thinking}, OlymMATH (Easy) \citep{sun2025challenging} and Omni-MATH levels 7 and 8 \citep{gao2024omni}, while the large tier replaces these by AMO-Bench \citep{an2025amo}, OlymMATH (Hard) and Omni-MATH levels 8 and 9. This gives around $600$ problems per model. For each model we then remove the problems on which it generates all correct or all wrong solutions in $128$ trials.

\paragraph{Verification.}
A completed rollout counts as correct when its final answer matches the reference answer of the problem. We take that answer from the \verb|\boxed{}| expression of the rollout and ask a Qwen3-32B judge \citep{yang2025qwen3} whether it is equivalent to the reference answer. We use a judge rather than a symbolic checker such as \texttt{math-verify} \citep{kydlicek2025mathverify}, because benchmarks like AMO-Bench have descriptive answers that a symbolic checker cannot handle.

\paragraph{Evaluation.}
Only a few hundred problems are available for each model, so a single split into training and test problems would leave too few on either side, and we evaluate with cross-validation instead. For each model we partition its problems into eight folds, with each benchmark spread evenly across the folds. Each fold serves once as the test set while the learned components are trained on the other seven, so every problem is scored exactly once by a model that was not trained on it. One such pass over all eight folds forms a repetition. Because the outcome of a repetition depends on how the problems happen to be divided, we run $20$ repetitions, each with a new random partition obtained by shuffling the problems of every benchmark before assigning them to folds, and with fresh random seeds for training. Every reported number is the average over the $20$ repetitions, and its standard error is computed across them. For RepExp we set the regularizer to $\tau = 1$. For all methods, the representation $h_i$ of a partial rollout is the EMA of its hidden states with rate $10^{-3}$.

\begin{figure}[t]
\setlength{\abovecaptionskip}{2pt}
\centering
\includegraphics[width=\textwidth]{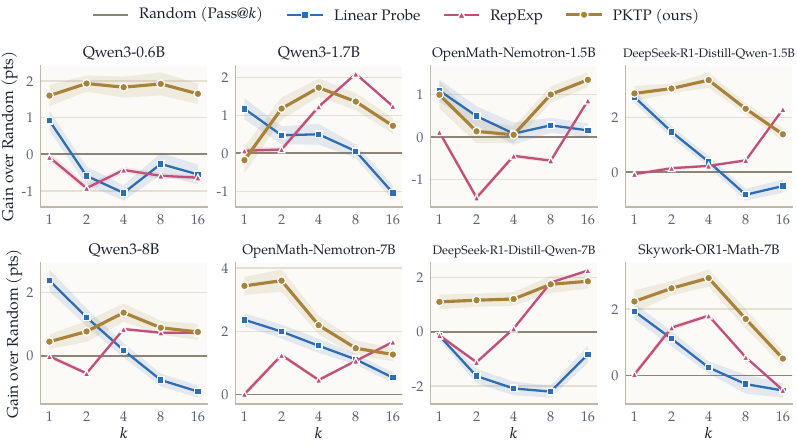}
\caption{\textbf{Gain in \passk over random pruning.} For every model, the pooled \passk of each pruning policy minus that of Random (Pass@$k$), in percentage points, as a function of the number $k$ of rollouts kept, with a band of one standard error of the repetition-by-repetition difference. The absolute numbers are in \cref{tab:main} and the per-benchmark breakdown in \cref{app:per-benchmark}.}
\label{fig:gains}
\end{figure}

\subsection{Experimental Evaluation and Comparison}
\label{sec:results}

\Cref{fig:gains} reports our main setting, in which each method selects $k \in \{1, 2, 4, 8, 16\}$ of the $N = 128$ rollouts of a problem from their prefixes, and we set the cutoff threshold to $T = 4000$ tokens. For every model it shows the \passk of each pruning policy minus that of Random (Pass@$k$), pooled over all test problems. Random keeps $k$ rollouts chosen uniformly at random, which is equivalent to sampling $k$ rollouts without any pruning, so it is the natural reference point. The absolute numbers are given in \cref{tab:main} and the breakdown by benchmark in \cref{app:per-benchmark}.

\begin{figure}[t]
\centering
\includegraphics[width=0.48\textwidth]{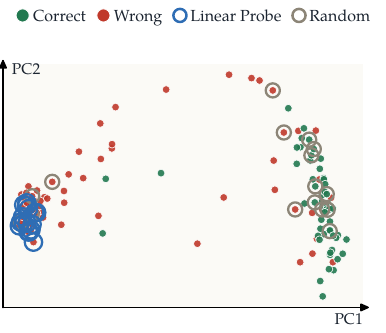}
\caption{\textbf{Linear Probe keeps a cluster of rollouts that fail together.} The $128$ rollouts of one problem (Qwen3-8B, AMO-Bench), projected on the first two principal components of their representations, with correct rollouts in green and wrong rollouts in red. The $16$ rollouts ranked highest by the Linear Probe (blue rings) lie close together and are all wrong, while $16$ random picks (gray rings) include $7$ correct rollouts.}
\label{fig:probe-cluster}
\end{figure}

As evident in \cref{fig:gains}, the Linear Probe performs strongly at $k = 1$, but as $k$ increases it gets closer and closer to Random, as in the Qwen models, and in some cases it even becomes worse than Random at large $k$. This can be attributed to a bias towards a cluster of rollouts that the model predicts to be correct but that may not be, in the case where the classifier is wrong. In other words, ranking by the probe sharpens the distribution of the kept rollouts around what the probe considers correct. This mirrors the effect observed in reinforcement learning, where models trained with GRPO outperform their base models at small $k$ but are outperformed by them at large $k$ \citep{yue2025does}. As an example, \cref{fig:probe-cluster} shows the $128$ rollouts of one problem (Qwen3-8B, AMO-Bench), projected on the first two principal components of their representations. The $16$ rollouts ranked highest by the probe are all close to each other and all wrong, while $16$ random picks include $7$ correct ones.

\begin{figure}[t]
\centering
\includegraphics[width=0.48\textwidth]{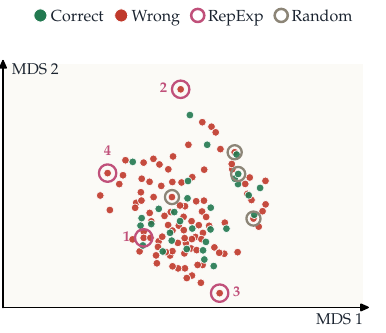}
\caption{\textbf{RepExp keeps outliers that are unlikely to be correct.} The $128$ rollouts of one problem (Qwen3-8B, Omni-MATH), embedded in two dimensions by multidimensional scaling of their representations, with correct rollouts in green and wrong rollouts in red. After a random first pick, RepExp (pink rings, numbered in pick order) selects rollouts far from the rest, and all $4$ of its picks are wrong, while $4$ random picks (gray rings) include $1$ correct rollout.}
\label{fig:repexp-example}
\end{figure}
\Cref{fig:gains} also shows that RepExp behaves in the opposite way to the Linear Probe. At $k = 1$ it shows, as expected, a performance similar to Random, since its first pick is random. As $k$ increases, the general trend is that RepExp gets better and better than Random, especially at $k = 16$ (e.g., the DeepSeek-R1-Distill models). Interestingly, however, for moderate $k$ it is sometimes outperformed by Random, for instance on OpenMath-Nemotron-1.5B and Qwen3-8B. We attribute this to a tendency of RepExp to favor outliers at small and moderate $k$, since the rollouts farthest from those already selected are often unusual rollouts that are unlikely to be correct. A similar effect is reported by \citet{tuyls2025repexp}, where RepExp stops helping under high-temperature sampling, since the less coherent responses look novel in representation space yet do not necessarily contain correct answers. As an example, \cref{fig:repexp-example} shows the $128$ rollouts of one problem (Qwen3-8B, Omni-MATH), embedded in two dimensions with multidimensional scaling on their representations. After its first, random pick, RepExp selects outliers far from the other rollouts, and all $4$ of its picks are wrong, while $4$ random picks include $1$ correct one.

Finally, \cref{fig:gains} shows that PKTP outperforms both the Linear Probe and RepExp, and its advantage is clearest at moderate $k$, where each baseline suffers from its own bias. On DeepSeek-R1-Distill-Qwen-1.5B at $k = 4$, for example, PKTP is $3.0$ points above the Linear Probe and $3.2$ points above RepExp. On Qwen3-0.6B it leads both baselines by more than $2$ points at every $k \geq 2$, and on DeepSeek-R1-Distill-Qwen-7B it is $3.3$ points above the Linear Probe at $k = 4$. Unlike the two baselines, PKTP also stays above Random on every model for $k \geq 2$, since it keeps the correctness signal of the probe while spreading the kept rollouts over outcomes that are unlikely to fail together. At large $k$, however, the performance of PKTP and RepExp gets closer. We believe this is due to the weak correctness signal in the embeddings of the partial rollouts, which raises the question of how a quality embedding can be obtained from a partial rollout, and whether this is possible at all.

Results for the other pool sizes, $N \in \{32, 48, 64, 80, 96, 112\}$ with the same cutoff $T = 4000$, are given in \cref{app:pool-sizes}.

\subsection{Effect of the cutoff threshold}
\label{sec:cutoff-ablation}

\begin{figure}[t]
\centering
\includegraphics[width=0.57\textwidth]{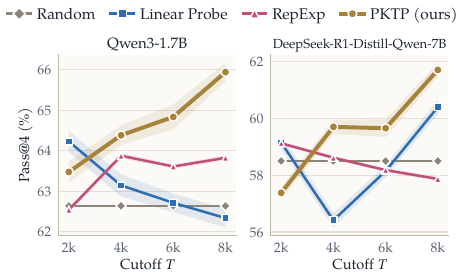}
\caption{\textbf{A later cutoff improves PKTP.} Pass@$4$ on Qwen3-1.7B and DeepSeek-R1-Distill-Qwen-7B as the cutoff $T$ grows from $2000$ to $8000$ tokens, with $N = 128$. PKTP is the best method for every $T \geq 4000$ and keeps improving with longer prefixes, while the Linear Probe and RepExp gain much less, and the Linear Probe on Qwen3-1.7B even drifts below Random. Bands are one standard error.}
\label{fig:cutoff}
\end{figure}
An important aspect of pruning is the cutoff threshold $T$, and how the algorithm behaves as the cutoff grows. \Cref{fig:cutoff} varies $T$ on Qwen3-1.7B and DeepSeek-R1-Distill-Qwen-7B. For every $T \geq 4000$, PKTP outperforms all other methods on both models. An interesting observation is that a later cutoff in general improves the performance of PKTP. On Qwen3-1.7B its pass@$4$ rises from $63.5\%$ at $T = 2000$ to $65.9\%$ at $T = 8000$, and on DeepSeek-R1-Distill-Qwen-7B from $57.4\%$ to $61.7\%$. Intuitively, the more tokens are generated, the more is revealed about the quality of a rollout and where it is heading, so there is more signal to exploit. Interestingly, however, the same does not necessarily happen for the Linear Probe and RepExp. On Qwen3-1.7B, for example, the Linear Probe drifts down from $64.2\%$ to $62.3\%$ and ends below Random. One reason can be the representation we use. Since the EMA with rate $10^{-3}$ effectively averages over the last $1000$ tokens, it may lose the signal in the earlier tokens.

\subsection{Effect of the pool size}
\label{sec:pool-ablation}

\begin{figure}[t]
\centering
\includegraphics[width=0.57\textwidth]{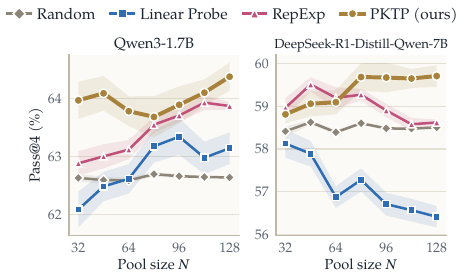}
\caption{\textbf{PKTP leads once the pool is large enough.} Pass@$4$ on Qwen3-1.7B and DeepSeek-R1-Distill-Qwen-7B as the pool size $N$ grows from $32$ to $128$, with $T = 4000$. PKTP is the best method for every $N$ on Qwen3-1.7B and from $N = 80$ on DeepSeek-R1-Distill-Qwen-7B, where the Linear Probe falls as the pool grows and stays below Random. Bands are one standard error.}
\label{fig:pool}
\end{figure}
Another important aspect of pruning is the size $N$ of the pool the $k$ rollouts are chosen from. \Cref{fig:pool} varies $N$ on Qwen3-1.7B and DeepSeek-R1-Distill-Qwen-7B. PKTP outperforms all other methods for every $N$ on Qwen3-1.7B, and on DeepSeek-R1-Distill-Qwen-7B once the pool reaches $N = 80$. A larger pool offers more candidates to choose from, and on Qwen3-1.7B both baselines benefit from it, with RepExp rising from $62.9\%$ at $N = 32$ to $63.9\%$ at $N = 128$ as the pool becomes more diverse and the Linear Probe from $62.1\%$ to $63.1\%$. Interestingly, however, the same does not hold on DeepSeek-R1-Distill-Qwen-7B, where the Linear Probe falls from $58.1\%$ to $56.4\%$ and stays below Random for every $N$, and RepExp falls from $59.5\%$ at $N = 48$ to $58.6\%$, while PKTP rises from $58.8\%$ to $59.7\%$. One reason can be that a larger pool also contains more outliers, which raises the chance that extreme cases like RepExp and the Linear Probe select them at a moderate $k$ such as $k = 4$.

\section{Conclusion}
\label{sec:conclusion}

We introduced PKTP, a pruning framework that chooses which partial rollouts to keep by directly optimizing \passk. PKTP learns the joint distribution of the rollouts' outcomes with a latent Gaussian model, approximates the resulting likelihood in closed form, and selects the kept set greedily, with an approximation guarantee on the surrogate objective. Across eight reasoning models, it outperforms both quality-based and diversity-based pruning, and the gains are largest at moderate $k$, where each of these baselines suffers from its own bias. Since PKTP can be combined with any representation of partial rollouts, representations that better capture the correctness of a rollout would directly strengthen it, and developing them is a natural direction for future work.

\section*{Acknowledgments}
This work was supported in part by a Generative AI Research Award from the USC Office of Research and Innovation, the NSF NRT program, the USC-Meta Center for Research and Education in AI and Learning, and a gift from Amazon.  AJ was supported in part by the NSF Award DMS-2311024, an Amazon Faculty Research Award, an Adobe Faculty Research Award, and an iORB grant from USC Marshall School of Business.

\section*{AI use statement}
In this work, we used generative AI tools to assist with several parts of the project. For the write-up, we used them to edit and polish the text and to format the \LaTeX{} source. For the figures, we used them to write the scripts that plot our experimental results. For the code, we used them to help implement parts of our implementation. For the theory, we used them to help derive and write parts of the proofs in the appendix. We have not used generative AI tools to generate data or experimental results. All AI-assisted content, including the code and the proofs, was reviewed and verified by the authors, who take full responsibility for the content of this paper.

\clearpage
\bibliographystyle{plainnat}
\bibliography{references}

\clearpage
\appendix
\crefalias{section}{appendix}
\crefalias{subsection}{appendix}
\crefalias{subsubsection}{appendix}
\section{Derivations}
\label{app:derivations}

\subsection{The likelihood}
\label{app:likelihood}

Write $\mathbf{Z}_i = q_i + \mathbf{m}_i + h_i^\top \mathbf{f}$. Conditioned on $\mathbf{f}$, the $\mathbf{Z}_i$ are independent of each other, so
\begin{align}
    \Pr\left[\mathbf{C}_1 = c_1, \ldots, \mathbf{C}_N = c_N \mid \mathbf{f}\right] &= \prod_{i=1}^N\Pr[c_i \mathbf{Z}_i \geq 0 \mid \mathbf{f}] \\
    &= \prod_{i=1}^N\Pr[c_i (q_i + \mathbf{m}_i + h_i^\top \mathbf{f}) \geq 0 \mid \mathbf{f}] \\
    &= \prod_{i=1}^N\Pr[c_i \mathbf{m}_i \geq - c_i(q_i + h_i^\top \mathbf{f}) \mid \mathbf{f}].
\end{align}
Since the standard Gaussian is symmetric about $0$, $-c_i\mathbf{m}_i$ follows the same distribution as $\mathbf{m}_i$, and with $\Phi(x) := \Pr_{Z \sim \mathcal{N}(0, 1)}[Z \leq x]$ each factor equals $\Phi\bigl(c_i(q_i + h_i^\top \mathbf{f})\bigr)$. Taking the expectation with respect to $\mathbf{f}$ on both sides gives \cref{eq:likelihood}.

\subsection{Rank-one greedy updates}
\label{app:greedy}

As in \cref{cor:selection}, let $\Sigma_S = \sum_{i \in S}\kappa_i h_i h_i^\top$ and $b_S = \sum_{i \in S}\lambda_i h_i$. Write $\Omega_S = (I_d + \Sigma_S)^{-1}$, $r_j = h_j^\top \Omega_S h_j$ and $u_j = h_j^\top \Omega_S b_S$. Adding a candidate $j \notin S$ changes $\Sigma_S$ by the rank-one matrix $\kappa_j h_j h_j^\top$. The matrix determinant lemma gives
\begin{equation}
\det(I_d + \Sigma_S + \kappa_j h_j h_j^\top) = \det(I_d + \Sigma_S)\bigl(1 + \kappa_j r_j\bigr), \label{eq:det-lemma}
\end{equation}
while Sherman--Morrison gives
\begin{equation}
(I_d + \Sigma_S + \kappa_j h_j h_j^\top)^{-1} = \Omega_S - \frac{\kappa_j \Omega_S h_j h_j^\top \Omega_S}{1 + \kappa_j r_j}. \label{eq:sherman-morrison}
\end{equation}
Substituting \cref{eq:det-lemma} and \cref{eq:sherman-morrison} into \cref{eq:surrogate}, and using $b_{S\cup\{j\}} = b_S + \lambda_j h_j$, the corresponding change in the unhalved quadratic form is
\begin{equation}
2\lambda_j u_j + \lambda_j^2 r_j - \frac{\kappa_j(u_j + \lambda_j r_j)^2}{1 + \kappa_j r_j} = \frac{2\lambda_j u_j + \lambda_j^2 r_j - \kappa_j u_j^2}{1 + \kappa_j r_j} .
\end{equation}
Half of this, together with the change $-\tfrac12\log(1 + \kappa_j r_j)$ in the determinant term and the new quality term $\log\Phi(-q_j)$, yields
\begin{equation}
    \Delta_j(S) = \log\widetilde L(S \cup \{j\}) - \log\widetilde L(S) = \log\Phi(-q_j) - \frac12\log(1 + \kappa_j r_j) + \frac{2\lambda_j u_j + \lambda_j^2 r_j - \kappa_j u_j^2}{2(1 + \kappa_j r_j)} . \label{eq:greedy}
\end{equation}
The greedy step adds the candidate with the smallest $\Delta_j(S)$. After selecting $j$, the maintained state updates are
\begin{align}
\Omega_{S\cup\{j\}} = \Omega_S - \frac{\kappa_j \Omega_S h_j h_j^\top \Omega_S}{1 + \kappa_j r_j}, \qquad b_{S\cup\{j\}} = b_S + \lambda_j h_j .
\end{align}
The remaining scalars $r_\ell$ and $u_\ell$ can be updated recursively as well. If $g_\ell = h_\ell^\top \Omega_S h_j$, then
\begin{align}
r_\ell \leftarrow r_\ell - \frac{\kappa_j g_\ell^2}{1 + \kappa_j r_j}, \qquad
u_\ell \leftarrow u_\ell + \frac{\lambda_j - \kappa_j u_j}{1 + \kappa_j r_j}\, g_\ell .
\end{align}
With all $h_i$ retained in memory, each greedy step costs $O(Nd + d^2)$, so one forward pass costs $O\bigl(k(Nd + d^2)\bigr)$ time and $O(Nd + d^2)$ storage. A naive implementation that instead rebuilds and factorizes $I_d + \Sigma_S + \kappa_j h_j h_j^\top$ for every candidate costs $O(d^3)$ per candidate, hence $O(kNd^3)$ time, with the same $O(Nd + d^2)$ storage.

\subsection{Greedy marginal in the equal-quality limit}
\label{app:repexp-limit}

Suppose that every quality score equals a common value $q$, so that $\lambda_i = \lambda$ and $\kappa_i = \kappa$ for every $i$, and that the penalty term is dropped. Then \cref{eq:surrogate} reduces to
\begin{equation}
\log\widetilde L(S) = |S|\log\Phi(-q) - \frac12\log\det(I_d + \Sigma_S), \qquad \Sigma_S = \kappa\sum_{i \in S}h_i h_i^\top . \label{eq:equal-objective}
\end{equation}
Adding a candidate $j \notin S$ changes $\Sigma_S$ by the rank-one matrix $\kappa h_j h_j^\top$. The matrix determinant lemma, $\det(A + uv^\top) = \det(A)\bigl(1 + v^\top A^{-1}u\bigr)$ for an invertible $A$, applied with $A = I_d + \Sigma_S$ and $u = v = \sqrt{\kappa}\, h_j$, gives
\begin{equation}
\det(I_d + \Sigma_{S \cup \{j\}}) = \det(I_d + \Sigma_S)\bigl(1 + \kappa\, h_j^\top(I_d + \Sigma_S)^{-1}h_j\bigr). \label{eq:equal-det}
\end{equation}
Subtracting \cref{eq:equal-objective} at $S$ from \cref{eq:equal-objective} at $S \cup \{j\}$ and using \cref{eq:equal-det}, the factor $\det(I_d + \Sigma_S)$ cancels and
\begin{align}
\Delta_j(S) &= \log\Phi(-q) - \frac12\Bigl(\log\det(I_d + \Sigma_{S \cup \{j\}}) - \log\det(I_d + \Sigma_S)\Bigr) \\
&= \log\Phi(-q) - \frac12\log\bigl(1 + \kappa\, h_j^\top(I_d + \Sigma_S)^{-1}h_j\bigr). \label{eq:equal-marginal}
\end{align}
The first term of \cref{eq:equal-marginal} is the same for every candidate, and $x \mapsto \log(1 + \kappa x)$ is increasing since $\kappa > 0$, so the candidate with the smallest $\Delta_j(S)$ is the one with the largest $h_j^\top(I_d + \Sigma_S)^{-1}h_j$. Moreover, $(I_d + \Sigma_S)^{-1} = \kappa^{-1}\bigl(\kappa^{-1}I_d + \sum_{i \in S}h_i h_i^\top\bigr)^{-1}$, so this ranking is the RepExp rule of \cref{sec:problem} with regularizer $\tau = 1/\kappa$.
\clearpage
\section{Proofs}
\label{app:proofs}

\subsection{Proofs of \texorpdfstring{\cref{thm:surrogate} and \cref{cor:selection}}{Theorem 1 and Corollary 1}}
\label{app:proof-surrogate-corollary}

\subsubsection{Proof of \texorpdfstring{\cref{thm:surrogate}}{Theorem 1}}
\label{app:proof-surrogate}

Here $\Pr[\mathbf{C} = c \mid H]$ is given by \cref{eq:likelihood}. Fix outcomes $c \in \{-1,1\}^N$, and write $\lambda_i$, $\kappa_i$, $\Sigma$ and $b$ for $\lambda_i(c_i)$, $\kappa_i(c_i)$, $\Sigma(c)$ and $b(c)$. For each $i \in [N]$ and a scalar perturbation $x$, let $\ell_i(x) := \log\Phi\bigl(c_i(q_i + x)\bigr)$, and let $r(u) := \phi(u)/\Phi(u)$, which satisfies $r'(u) = -r(u)\bigl(u + r(u)\bigr)$. Since $c_i^2 = 1$ and $\phi$ is even, $r(c_i q_i) = \lambda_i$, and
\begin{align}
\ell_i'(x) = c_i\, r\bigl(c_i(q_i+x)\bigr) \implies \ell_i'(0) = c_i\lambda_i, \qquad
\ell_i''(0) = r'(c_i q_i) = -\lambda_i(\lambda_i + c_i q_i) = -\kappa_i .
\end{align}
To see that $\kappa_i > 0$, let $u = c_i q_i$. For $u \geq 0$ this follows immediately from $\lambda_i > 0$. For $u < 0$,
\begin{equation}
\Phi(u) = \int_{|u|}^{\infty}\phi(t)\,dt < \frac{1}{|u|}\int_{|u|}^{\infty}t\phi(t)\,dt = \frac{\phi(u)}{|u|},
\end{equation}
and hence $\lambda_i > |u| = -c_i q_i$. Taylor expansion around $x = 0$ now gives
\begin{equation}
\log\Phi\bigl(c_i(q_i+x)\bigr) = \log\Phi(c_i q_i) + c_i\lambda_i x - \frac{\kappa_i}{2}x^2 + R_i(x), \qquad R_i(x) = O(|x|^3). \label{eq:taylor}
\end{equation}

We now substitute $x = h_i^\top \mathbf{f}$ and sum \cref{eq:taylor} over $i \in [N]$. Since $\mathbf{f}$ is common for all $i \in [N]$, the only aggregate quantities that arise are $\Sigma$ and $b$, and the term inside the expectation can be written as
\begin{align}
\prod_{i=1}^{N}\Phi\bigl(c_i(q_i + h_i^\top \mathbf{f})\bigr)
= \left(\prod_{i=1}^{N}\Phi(c_i q_i)\right)
\exp\!\left(b^\top \mathbf{f} - \frac12 \mathbf{f}^\top \Sigma \mathbf{f} + \sum_{i=1}^{N}R_i(h_i^\top \mathbf{f})\right).
\end{align}
Dropping the accumulated remainder gives the quadratic approximation
\begin{equation}
\widetilde{\Pr}[\mathbf{C} = c \mid H] := \left(\prod_{i=1}^{N}\Phi(c_i q_i)\right) \E_{\mathbf{f}}\exp\!\left(b^\top \mathbf{f} - \frac12 \mathbf{f}^\top \Sigma \mathbf{f}\right). \label{eq:quadratic-approx}
\end{equation}

The expectation in \cref{eq:quadratic-approx} has a closed form. Combining its exponent with the standard Gaussian density gives
\begin{align}
\E_{\mathbf{f}}\exp\!\left(b^\top \mathbf{f} - \frac12 \mathbf{f}^\top \Sigma \mathbf{f}\right)
= (2\pi)^{-d/2} \int_{\R^d} \exp\!\left(-\frac12 f^\top(I_d + \Sigma)f + b^\top f\right)df.
\end{align}
Because $\kappa_i > 0$, $\Sigma \succeq 0$ and the matrix $I_d + \Sigma$ is positive definite. Completing the square,
\begin{align}
-\frac12 f^\top(I_d+\Sigma)f + b^\top f
={}& -\frac12 \bigl(f - (I_d+\Sigma)^{-1}b\bigr)^\top (I_d+\Sigma) \bigl(f - (I_d+\Sigma)^{-1}b\bigr) \\
&\quad + \frac12 b^\top(I_d+\Sigma)^{-1}b.
\end{align}
After translating the integration variable, the remaining Gaussian integral contributes $\det(I_d + \Sigma)^{-1/2}$, which gives \cref{eq:surrogate-prob}. \hfill$\square$

\subsubsection{Proof of \texorpdfstring{\cref{cor:selection}}{Corollary 1}}
\label{app:proof-corollary}

Integrating out the outcomes of the rollouts outside $S$ in \cref{eq:likelihood} leaves the same expression with the product over $i \in S$, so \cref{thm:surrogate} applies to the rollouts in $S$, with $\Sigma_S(c)$ and $b_S(c)$ the sums over $i \in S$. Take $c_i = -1$ for every $i \in S$. Then $\lambda_i(-1) = \phi(q_i)/\Phi(-q_i)$, $\kappa_i(-1) = \lambda_i(\lambda_i - q_i)$, and $b_S(c) = -\sum_{i \in S}\lambda_i h_i$, and throughout the appendix we write $b_S = \sum_{i \in S}\lambda_i h_i$. Since $b_S$ enters \cref{eq:surrogate-prob} only through $b_S^\top(I_d+\Sigma_S)^{-1}b_S$, the surrogate of $L(S)$ is $\widetilde L(S)$, and since the logarithm is increasing, minimizing $\widetilde L(S)$ is equivalent to minimizing $\log\widetilde L(S)$. \hfill$\square$

\subsection{Proof of \texorpdfstring{\cref{thm:greedy-main}}{Theorem 2}}
\label{app:proof-greedy}

The submodularity of the exact pass probability is a standard property of the probability of a union of events \citep{wilson2018maximizing}, and we prove it here for completeness. The new result is that the surrogate keeps this property.

\begin{definition}[Submodular function]
A set function $G:2^{[N]}\to\R$ is submodular if, for every
$A\subseteq B\subseteq[N]$ and every $j\notin B$,
\begin{equation}
    G(A\cup\{j\})-G(A)
    \geq G(B\cup\{j\})-G(B).
    \label{eq:dr-definition}
\end{equation}
\end{definition}

\begin{definition}[Monotone function]
A set function $G:2^{[N]}\to\R$ is monotone if $G(A)\leq G(B)$ whenever $A\subseteq B\subseteq[N]$.
\end{definition}

\begin{definition}[Normalized function]
A set function $G:2^{[N]}\to\R$ is normalized if $G(\varnothing)=0$.
\end{definition}

\subsubsection{Submodularity of Exact \passk Probability}
\label{app:submodularity-exact}

First, we show that the exact \passk probability is already a normalized monotone submodular set function. To see this, let $\mathcal{C}_i$ denote the event that response $i$ is correct.  The exact pass probability can be written as
\begin{equation}
    P(S)
    :=
    \Pr\left(\bigcup_{i\in S}\mathcal{C}_i\right).
\end{equation}
For any $A\subseteq B\subseteq[N]$ and $j\notin B$, the marginal improvement from adding response $j$ is
\begin{align}
    P(A\cup\{j\})-P(A)
    &=
    \Pr\left(
        \mathcal{C}_j
        \setminus
        \bigcup_{i\in A}\mathcal{C}_i
    \right),                                                   \label{eq:exact-marginal-A}\\
    P(B\cup\{j\})-P(B)
    &=
    \Pr\left(
        \mathcal{C}_j
        \setminus
        \bigcup_{i\in B}\mathcal{C}_i
    \right).                                                   \label{eq:exact-marginal-B}
\end{align}
Since $A\subseteq B$,
\begin{equation}
    \Bigl(\mathcal{C}_j\setminus\bigcup_{i\in B}\mathcal{C}_i\Bigr)
    \subseteq
    \Bigl(\mathcal{C}_j\setminus\bigcup_{i\in A}\mathcal{C}_i\Bigr).
\end{equation}
Taking probabilities therefore gives
\begin{equation}
    P(A\cup\{j\})-P(A)
    \geq
    P(B\cup\{j\})-P(B),
\end{equation}
which is precisely the submodular function property.  Moreover,
$P(\varnothing)=0$, and $P(A)\leq P(B)$ whenever $A\subseteq B$.  Hence the exact pass probability is normalized, monotone, and submodular.  This conclusion holds for an arbitrary joint distribution of the correctness events and does not require any latent-variable model or conditional independence.

\subsubsection{Submodularity of the Approximate Pass Probability}
\label{app:submodularity}

Now, we show that the approximation under the Gaussian latent model itself is a monotone submodular function. Note that, every approximation of a submodular function may not necessarily be submodular. Before, we proceed, let's try to get an intuition on the fact that $\widetilde L(S)$ indeed represents some notion of a probability. Conditional on $\mathbf f$, each term of the product in $\widetilde L(S)$ can be written as
\begin{align}
    g_i(\mathbf f) := \Phi(-q_i) \exp\left(-\lambda_i h_i^\top \mathbf f - \frac{\kappa_i}{2} (h_i^\top \mathbf f)^2\right).
\end{align}
 Consequently,
 \begin{align}
     \widetilde L(S) = \E_{\mathbf f \sim \mathcal{N}(0, I_d)}\prod_{i\in S} g_i(\mathbf f).
     \label{eq:factorized-surrogate}
 \end{align}
Hence, for $\widetilde L(S)$ to be a definition of some kind of failure probability of selection of correct elements conditioned on $\mathbf f$, it would suffice to prove that $g_i(\mathbf f) < 1$. Towards this, for a scalar $q$, write
\begin{equation}
    \lambda(q):=\frac{\phi(q)}{\Phi(-q)},
    \qquad
    \kappa(q):=\lambda(q)(\lambda(q)-q),
\end{equation}
and define
\begin{equation}
    \beta(q):=\Phi(-q)
       \exp\!\left(\frac{\lambda(q)^2}{2\kappa(q)}\right)
     =\Phi(-q)
       \exp\!\left(\frac{\lambda(q)}{2(\lambda(q)-q)}\right).
\end{equation}

\begin{lemma}
For every finite $q\in\R$, one has $0<\beta(q)<1$. Moreover, $\beta(q)$ is strictly decreasing in $q$.
\label{lem:beta}
\end{lemma}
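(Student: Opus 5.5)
The plan is to work with $\log\beta(q)=\log\Phi(-q)+\frac{\lambda}{2(\lambda-q)}$ and show that its derivative is strictly negative. Positivity of $\beta$ is immediate, since $\Phi(-q)>0$ and, by the argument in the proof of \cref{thm:surrogate} (with $c_i=-1$), $\lambda(q)>q$, so the exponent is finite. The bound $\beta<1$ will then follow from monotonicity together with the limit $q\to-\infty$: there $\lambda(q)\to0$ and $\lambda-q\to+\infty$, so the exponent tends to $0$ while $\Phi(-q)\to1$, giving $\beta(q)\to1$. A strictly decreasing function with supremum-limit $1$ at $-\infty$ is strictly below $1$ at every finite $q$.

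For the derivative, I will use the standard hazard-function identities. Since $\lambda$ is the hazard of the standard Gaussian, $\lambda'(q)=\lambda(\lambda-q)=\kappa$ and $\frac{d}{dq}\log\Phi(-q)=-\lambda$. Write $D:=\lambda-q>0$, so that $D'=\lambda D-1$. A short quotient-rule computation gives
\begin{equation}
\frac{d}{dq}\frac{\lambda}{2D}=\frac{\lambda'D-\lambda D'}{2D^2}=\frac{\lambda\,(D^2-\lambda D+1)}{2D^2}.
\end{equation}
Adding $-\lambda$ then yields
\begin{equation}
\frac{d}{dq}\log\beta(q)=\frac{\lambda\,\bigl(1-\lambda D-D^2\bigr)}{2D^2}.
\end{equation}
Since $\lambda>0$, strict decrease is equivalent to $D(\lambda+D)>1$, that is, $(\lambda-q)(2\lambda-q)>1$.

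Expanding this condition gives the quadratic inequality $2\lambda^2-3q\lambda+q^2-1>0$ in $\lambda$. Its larger root is $\frac{3q+\sqrt{q^2+8}}{4}$, and $\lambda>q$ places $\lambda$ above the smaller root. So the condition reduces exactly to
\begin{equation}
\lambda(q)>\frac{3q+\sqrt{q^2+8}}{4}\quad\text{for all } q\in\R.
\end{equation}
In Mills-ratio form this reads $R(q)<\frac{4}{3q+\sqrt{q^2+8}}$, which is Sampford's (1953) inequality for the Gaussian Mills ratio.

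This last step is the main obstacle. I would first simply invoke Sampford's inequality, which is strict for all real arguments. If a self-contained argument is wanted, I would set $\psi(q):=4\lambda(q)-3q-\sqrt{q^2+8}$ and proceed in three steps:
\begin{itemize}
\item show $\psi(q)\to0^+$ as $q\to+\infty$, using the asymptotic expansion $\lambda=q+1/q-2/q^3+\dots$;
\item check $\psi>0$ as $q\to-\infty$, which is clear because there $\lambda\to0$ and $-3q-\sqrt{q^2+8}\to-\infty$;
\item show that $\psi$ cannot have a zero, using the Riccati relation $\lambda'=\lambda(\lambda-q)$ to show that at any zero of $\psi$ one has $\psi'<0$, which is incompatible with $\psi$ approaching $0$ from above at $+\infty$.
\end{itemize}
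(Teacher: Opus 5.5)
Your proof is correct. Its skeleton matches the paper's: the same log-derivative computation (your $1-\lambda D-D^2$ is the paper's $1-qv-2v^2$ with $D=v$), and the same use of the limit $q\to-\infty$ to turn strict monotonicity into $\beta<1$.

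The two routes differ only at the key inequality. The paper proves it from scratch. With $Y=X-q$ conditioned on $X>q$, the inequality reads $\tfrac12\E[Y^2]<(\E Y)^2$. This follows from the new-better-than-used property $T(x+y)<T(x)T(y)$ of the tail of $Y$, which needs only that the hazard $\lambda$ is increasing, i.e.\ $\kappa>0$, already available from the proof of \cref{thm:surrogate}.

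You instead rewrite it as the quadratic lower bound $\lambda>\tfrac14\bigl(3q+\sqrt{q^2+8}\bigr)$ and cite Sampford. This identification is sharper than it may look. Since $\lambda''=\lambda\bigl[(\lambda-q)(2\lambda-q)-1\bigr]$, you have $\frac{d}{dq}\log\beta=-\lambda''/\bigl(2(\lambda-q)^2\bigr)$. So the lemma is exactly the strict convexity of the Gaussian hazard function, and the paper's argument is in effect a self-contained probabilistic proof of that classical fact. Your route buys this conceptual identification and a one-line proof given the citation. The cost is dependence on an external result unless the ODE sketch is completed.

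If you do complete the sketch, three points need care.

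\emph{The sign at $+\infty$.} The terms you display cannot fix it: $\tfrac14\bigl(3q+\sqrt{q^2+8}\bigr)=q+1/q-2/q^3+8/q^5+\cdots$, while $\lambda=q+1/q-2/q^3+10/q^5+\cdots$. Hence $\psi\sim 8/q^5$, and you would need the fifth-order coefficient with a rigorous remainder. You can bypass this entirely. If $\psi<0$ on a half-line $(q_0,\infty)$, then $\lambda''<0$ there, so $\lambda-q$ is concave and positive on that half-line. A concave function bounded below on a half-line is nondecreasing, which contradicts $0<\lambda-q<1/q$.

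\emph{The step at $-\infty$.} Your sign is reversed: $-3q-\sqrt{q^2+8}\to+\infty$, not $-\infty$. More simply, $3q+\sqrt{q^2+8}\le 0$ for every $q\le -1$, so $\psi>0$ there trivially. For the same reason, the Mills-ratio form of the bound is meaningful only for $q>-1$.

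\emph{The transversality step.} This part is fine. At a zero of $\psi$ one gets $2\sqrt{q^2+8}\,\psi'=q^3+6q-(q^2+2)\sqrt{q^2+8}$. This is negative for $q\le 0$ directly. For $q>0$, the square of $(q^2+2)\sqrt{q^2+8}$ exceeds that of $q^3+6q$ by exactly $32$.
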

\begin{proof}
Let $\overline\Phi(q):= \Phi(-q) = 1 - \Phi(q)$ and put
\begin{equation}
    v(q):=\lambda(q)-q>0,
    \qquad
    r(q):=\log\beta(q)
      =\log\overline\Phi(q)+\frac{\lambda(q)}{2v(q)}.
\end{equation}
Since $\overline\Phi'(q)=-\phi(q)$, we have
\begin{equation}
    \frac{d}{dq}\log\overline\Phi(q)=-\lambda(q).
\end{equation}
Also, direct differentiation gives
\begin{equation}
    \lambda'(q)=\lambda(q)(\lambda(q)-q)=\lambda(q)v(q),
    \qquad
    v'(q)=\lambda(q)v(q)-1.
\end{equation}
Consequently,
\begin{align}
    r'(q)
      &=-\lambda(q)
        +\frac{\lambda'(q)v(q)-\lambda(q)v'(q)}{2v(q)^2} =\frac{\lambda(q)}{2v(q)^2}
        \left(1-qv(q)-2v(q)^2\right).
    \label{eq:r-prime}
\end{align}

It remains to show that the expression in parentheses is negative. For this, let $X\sim N(0,1)$. Using the fact that, $\E[X\mid X>q]=\lambda(q)$, and $\E[X^2\mid X>q]=1+q\lambda(q)$,
we get
\begin{equation}
    \E\left[X-q\ \big|\ X>q\right] = v(q),
    \qquad
    \E\left[(X-q)^2\ \big|\ X>q\right] =1-qv(q).
    \label{eq:residual-moments}
\end{equation}
Hence, it would suffice to prove that \begin{equation}
\frac{1}{2}\E\left[(X-q)^2\ \big|\ X>q\right] \leq \left(\E\left[X-q\ \big|\ X>q\right]\right)^2.
\end{equation}
To prove the above expression, consider
\begin{equation}
    T(x):=\Pr(X -q >x | X > q)=\frac{\overline\Phi(q+x)}{\overline\Phi(q)},
    \qquad x\geq0.
\end{equation}
Under the conditional law given $X>q$, the random variable $X-q$
is nonnegative. Hence, by the tail-integral formula for second
moments \cite[Lemma~2.2.13]{durrett2019probability},
\begin{align}
    \frac12 \mathbb{E}\bigl[(X-q)^2 \mid X>q\bigr]
    &=
    \mathbb{E}\left[
        \left.
        \int_0^\infty t\,\mathbf{1}_{\{X-q>t\}}\,dt
        \,\right|\, X>q
    \right] \\
    &=
    \int_0^\infty t\,\Pr(X-q>t \mid X>q)\,dt =
    \int_0^\infty tT(t)\,dt.
\end{align}
The interchange of expectation and integration is justified by
Tonelli's theorem \cite[Theorem~1.7.2]{durrett2019probability}, since the integrand is nonnegative. Moreover, substituting $t=x+y$ in the inner integral and then exchanging the order of integration by Tonelli's theorem again, we obtain
\begin{align}
    \int_0^\infty \int_0^\infty T(x+y)\,dx\,dy
    &=
    \int_0^\infty \int_y^\infty T(t)\,dt\,dy =
    \int_0^\infty \int_0^t T(t)\,dy\,dt = \int_0^\infty tT(t)\,dt.
\end{align}
Here the integration region is
$\{(y,t):0\leq y\leq t<\infty\}$, and nonnegativity of $T$
justifies the exchange.
We know that $\lambda$ is strictly increasing because
$\lambda'(q)=\kappa(q)>0$.  Hence, for $x,y>0$,
\begin{align}
    \frac{T(x+y)}{T(x)}
       &=\exp\!\left(-\int_x^{x+y}\lambda(q+t)\,dt\right) <\exp\!\left(-\int_0^y\lambda(q+t)\,dt\right)
        =T(y),
\end{align}
and therefore $T(x+y)<T(x)T(y)$.  Thus, we get that
\begin{align}
    \frac12\E\left[(X-q)^2\ \big|\ X>q\right]
      &=\int_0^\infty tT(t)\,dt
        =\int_0^\infty\!\int_0^\infty T(x+y)\,dx\,dy \\
        &< \left(\int_0^\infty T(x)\,dx\right)^2
        =\left(\E\left[X-q\ \big|\ X>q\right]\right)^2.
    \label{eq:residual-inequality}
\end{align}
Combining \cref{eq:residual-moments} and \cref{eq:residual-inequality} yields
\begin{equation}
    1-qv(q)-2v(q)^2<0.
\end{equation}
By \cref{eq:r-prime}, $r'(q)<0$.  Finally, as $q\to-\infty$,
$\overline\Phi(q)\to1$, $\lambda(q)\to0$, and
$\lambda(q)/(2v(q))\to0$.  Thus $r(q)\to0$.  Since $r$ is strictly decreasing,
$r(q)<0$ for every finite $q$.  Equivalently, $0<\beta(q)<1$, and $\beta$ is strictly decreasing.
\end{proof}

Thus, coming back to $g_i(\mathbf f)$, completing the square in the scalar $x=h_i^\top \mathbf f$ gives
\begin{align}
    g_i(\mathbf f)
      &=\Phi(-q_i)
        \exp\!\left(
          -\frac{\kappa_i}{2}
             \left(x+\frac{\lambda_i}{\kappa_i}\right)^2
          +\frac{\lambda_i^2}{2\kappa_i}
        \right) \leq \beta(q_i)<1
\end{align}
by Lemma~\ref{lem:beta}.  Thus $0<g_i(\mathbf f)<1$ for every $i$ and every $\mathbf f$. Hence, one can define the
approximate \passk probability by
\begin{equation}
    \widetilde P(S):=1-\widetilde L(S).
    \label{eq:approx-pass}
\end{equation}

It follows immediately from \cref{eq:factorized-surrogate} that $\widetilde L$ decreases under set
inclusion, and hence $\widetilde P=1-\widetilde L$ is monotone.  Also,
$\widetilde P(\varnothing)=1-1=0$, so $\widetilde P$ is normalized.

Now let $A\subseteq B$ and $j\notin B$.  The marginal improvement obtained by adding $j$ is
\begin{align}
    \widetilde P(A\cup\{j\})-\widetilde P(A)
       &=\widetilde L(A)-\widetilde L(A\cup\{j\}) = \E_{\mathbf f}\left[
          (1-g_j(\mathbf f))\prod_{i\in A}g_i(\mathbf f)
       \right].
\end{align}
Because every $g_i(\mathbf f)$ lies in $(0,1)$,
\begin{equation}
    (1-g_j(\mathbf f))\prod_{i\in A}g_i(\mathbf f)
    \geq
    (1-g_j(\mathbf f))\prod_{i\in B}g_i(\mathbf f)
\end{equation}
pointwise in $\mathbf f$. Taking expectations proves that $\widetilde P$ is submodular.

\subsubsection{Greedy Guarantee}
\label{app:greedy-guarantee}

Since $\widetilde P$ is normalized, monotone, and submodular (\cref{app:submodularity}), by \citet{nemhauser1978analysis}, the greedy solution satisfies the guarantee of \cref{thm:greedy-main}. \hfill$\square$

\clearpage
\subsection{Proof of \texorpdfstring{\cref{prop:nphard}}{Proposition 1}}
\label{app:nphard}

\begin{proof}
We show that the problem is NP-hard even when all $q_i$ equal the same integer and all $h_i$ have entries in $\{0, 1\}$. We reduce from the independent set problem, which asks whether a graph contains $k$ pairwise non-adjacent vertices and is NP-complete \citep{karp1972reducibility}. Let $G$ be a graph on the vertex set $[N]$ with edge set $E$, adjacency matrix $A$ and maximum degree $\Delta \geq 1$ (the case without edges is trivial). For $S \subseteq [N]$, let $A_S$ be the adjacency matrix of the subgraph induced by $S$ and $e(S)$ its number of edges. We construct an instance whose minimizers are independent sets of size $k$ whenever such a set exists.

Let $d = |E| + N\Delta$. The first $|E|$ coordinates of $\R^d$ are indexed by the edges of $G$, and the remaining ones are split into $N$ blocks of $\Delta$ coordinates, one block per vertex. For each vertex $i$, let $h_i \in \{0,1\}^d$ have a one in the coordinate of every edge incident to $i$ and a one in $\Delta - \deg(i)$ coordinates of its own block. Two distinct vertices share a nonzero coordinate only if they are adjacent, and then they share exactly one, so $\|h_i\|_2^2 = \Delta$ and $h_i^\top h_j = A_{ij}$ for $i \neq j$, and the Gram matrix of $\{h_i\}_{i \in S}$ is $\Delta I_k + A_S$. We further set $q_i = q$ for all $i$, for an integer $q \leq -1$ fixed below, so that $\lambda_i = \lambda$ and $\kappa_i = \kappa$. Then $\Sigma_S = \kappa\sum_{i \in S} h_i h_i^\top$ and $b_S = \lambda\sum_{i \in S} h_i$, and \cref{eq:surrogate} reads
\begin{equation}
\log\widetilde L(S) = k\log\Phi(-q) - \frac12 D(S) + R(S),
\end{equation}
where $D(S) = \log\det(I_d + \Sigma_S)$ and $R(S) = \frac12 b_S^\top(I_d + \Sigma_S)^{-1} b_S \geq 0$. Since $q \leq -1$, we have $\Phi(-q) \geq \frac12$, hence $\lambda \leq 2\phi(q) < \frac12$ and $\lambda|q| \leq 2|q|\phi(q) \leq 2\phi(1) < \frac12$, where we used that $t\phi(t)$ is decreasing for $t \geq 1$. Consequently,
\begin{equation}
\lambda|q| \leq \kappa = \lambda^2 + \lambda|q| < 1. \label{eq:kappa-bounds}
\end{equation}

We first show that $D$ is maximized exactly by the independent sets, with an explicit margin. By Sylvester's determinant identity and the form of the Gram matrix,
\begin{equation}
D(S) = \log\det\bigl((1 + \kappa\Delta) I_k + \kappa A_S\bigr) = k\log(1 + \kappa\Delta) + \sum_{j=1}^k \log(1 + x_j), \qquad x_j = \frac{\kappa\mu_j}{1 + \kappa\Delta},
\end{equation}
where $\mu_1, \ldots, \mu_k$ are the eigenvalues of $A_S$. They satisfy $\sum_j \mu_j = \tr A_S = 0$, $\sum_j \mu_j^2 = \tr A_S^2 = 2e(S)$ and $|\mu_j| \leq \Delta$, hence $\sum_j x_j = 0$ and $|x_j| < 1$. If $S$ is independent, then $A_S = 0$ and $D(S) = k\log(1 + \kappa\Delta)$. Otherwise $e(S) \geq 1$, and the inequality $\log(1 + x) \leq x - x^2/8$ for $|x| < 1$, together with \cref{eq:kappa-bounds}, gives
\begin{equation}
D(S) \leq k\log(1 + \kappa\Delta) - \frac18\sum_{j=1}^k x_j^2 = k\log(1 + \kappa\Delta) - \frac{\kappa^2 e(S)}{4(1 + \kappa\Delta)^2} \leq k\log(1 + \kappa\Delta) - \frac{\kappa^2}{4(1 + \Delta)^2}.
\end{equation}

We next bound the penalty term uniformly over $S$. Since $(I_d + \Sigma_S)^{-1} \preceq I_d$ and $\bigl\|\sum_{i \in S} h_i\bigr\|_2^2 = k\Delta + 2e(S) \leq k(\Delta + k)$, we have $0 \leq R(S) \leq \frac12\|b_S\|_2^2 \leq \frac12\lambda^2 k(\Delta + k)$.

Now suppose that $G$ contains an independent set $S^\circ$ of size $k$, and let $S$ be a set of size $k$ with $e(S) \geq 1$. Combining the two bounds with \cref{eq:kappa-bounds},
\begin{align}
\log\widetilde L(S) - \log\widetilde L(S^\circ) &\geq \frac12\bigl(D(S^\circ) - D(S)\bigr) - R(S^\circ) \geq \frac{\kappa^2}{8(1 + \Delta)^2} - \frac{\lambda^2}{2}\, k(\Delta + k) \\
&\geq \lambda^2\Bigl(\frac{q^2}{8(1 + \Delta)^2} - \frac{k(\Delta + k)}{2}\Bigr),
\end{align}
and the right-hand side is positive for $q = -2(1 + \Delta)(\Delta + k)$. With this choice, every minimizer of $\log\widetilde L$ is an independent set whenever $G$ contains one of size $k$, so $G$ contains an independent set of size $k$ if and only if a minimizer of $\log\widetilde L$ is independent. The instance consists of $N$ binary vectors of dimension $d = |E| + N\Delta$ and a single integer $q$ of polynomial magnitude, so it is constructed in polynomial time, which proves the claim.
\end{proof}

\clearpage
\subsection{Proof of \texorpdfstring{\cref{prop:weak}}{Proposition 2}}
\label{app:weak}

This section proves \cref{prop:weak}, which describes the failure probability and its surrogate when the correlations between rollouts are weak. To weaken the correlations while keeping the quality scores fixed, we replace $\mathbf f$ by $\rho\,\mathbf f$ for $\rho > 0$, so that
\begin{equation}
 \mathbf Z_i = q_i + \rho\,h_i^\top\mathbf f + \mathbf m_i, \qquad \operatorname{Cov}(\mathbf Z_i,\mathbf Z_j\mid H) = \rho^2 h_i^\top h_j \quad (i \neq j),
\end{equation}
and $\rho = 1$ recovers the model of \cref{sec:prob-model}. The quality scores $q_i = Q_\theta(h_i)$ and the representations $h_i$ are held fixed. Since only the products $\rho\,h_i$ enter the model, this is equivalent to scaling the representations by $\rho$ with the $q_i$ fixed, as in assumption (A2) of \cref{s2:section}. The failure probability of a set $S$ becomes
\begin{equation}
 L(S) = \E_{\mathbf f}\prod_{i\in S}\Phi\bigl(-q_i - \rho\,h_i^\top\mathbf f\bigr),
\end{equation}
and its surrogate is \cref{eq:surrogate} with every $h_i$ replaced by $\rho\,h_i$.

\begin{proof}[Proof of \cref{prop:weak}]
Let $\varepsilon_i = \rho\,h_i^\top\mathbf f$ for $i \in S$, so that $\varepsilon_S \sim \mathcal N(0, K)$ with $K_{ij} = \rho^2 h_i^\top h_j$, and let $u_i(x) = \log\Phi(-q_i - x)$. Then $u_i'(0) = -\lambda_i$ and $u_i''(0) = -\kappa_i$, and every derivative of $\log\Phi$ of order two or higher is bounded on $\R$. By Taylor's theorem, for all $x \in \R$,
\begin{equation}
 u_i(x) - u_i(0) = -\lambda_i x - \frac12\kappa_i x^2 + \frac16 u_i'''(0)\, x^3 + r_i(x), \qquad |r_i(x)| \leq C x^4,
\end{equation}
where $C$ does not depend on $\rho$. Let $X = \sum_{i\in S}\bigl(u_i(\varepsilon_i) - u_i(0)\bigr)$, so that $L(S) = \prod_{i\in S}\Phi(-q_i)\,\E\,e^X$. Since $\Phi \leq 1$, we have $e^X \leq M_S := \prod_{i\in S}\Phi(-q_i)^{-1}$, and Taylor's theorem for the exponential gives $\bigl|e^X - 1 - X - \frac12 X^2 - \frac16 X^3\bigr| \leq \frac{M_S}{24}X^4$. Moreover, $|X| \leq C'\bigl(\|\varepsilon_S\| + \|\varepsilon_S\|^2\bigr)$ for a constant $C'$, so $\E X^4 = O(\rho^4)$ and
\begin{equation}
 \E\,e^X = 1 + \E X + \frac12\E X^2 + \frac16\E X^3 + O(\rho^4).
\end{equation}
The odd moments of $\varepsilon_S$ vanish, and its moments of order four are $O(\rho^4)$. Hence
\begin{align}
 \E X &= -\frac12\sum_{i\in S}\kappa_i K_{ii} + O(\rho^4), &
 \E X^2 &= \sum_{i,j\in S}\lambda_i\lambda_j K_{ij} + O(\rho^4), &
 \E X^3 &= O(\rho^4).
\end{align}
With $a := -\frac12\sum_{i\in S}\kappa_i K_{ii} + \frac12\sum_{i,j\in S}\lambda_i\lambda_j K_{ij} = O(\rho^2)$, this gives $\E\,e^X = 1 + a + O(\rho^4)$, and therefore $\log\E\,e^X = a + O(\rho^4)$. Splitting the double sum into its diagonal and off-diagonal parts and using $\lambda_i^2 - \kappa_i = \lambda_i q_i$ yields
\begin{equation}
 a = \frac{\rho^2}{2}\sum_{i\in S}\lambda_i q_i\|h_i\|^2 + \rho^2\sum_{i<j\in S}\lambda_i\lambda_j\,h_i^\top h_j,
\end{equation}
which proves the expansion of $\log L(S)$.

For the surrogate, \cref{eq:surrogate} with $\rho\,h_i$ in place of $h_i$ reads
\begin{equation}
 \log\widetilde L(S) = \sum_{i\in S}\log\Phi(-q_i) - \frac12\log\det(I_d + \rho^2\Sigma_S) + \frac{\rho^2}{2}\, b_S^\top(I_d + \rho^2\Sigma_S)^{-1} b_S .
\end{equation}
Since $\log\det(I_d + \rho^2\Sigma_S) = \rho^2\tr\Sigma_S + O(\rho^4)$ and $(I_d + \rho^2\Sigma_S)^{-1} = I_d + O(\rho^2)$,
\begin{equation}
 \log\widetilde L(S) = \sum_{i\in S}\log\Phi(-q_i) - \frac{\rho^2}{2}\sum_{i\in S}\kappa_i\|h_i\|^2 + \frac{\rho^2}{2}\Bigl\|\sum_{i\in S}\lambda_i h_i\Bigr\|^2 + O(\rho^4),
\end{equation}
and expanding the square gives the same expansion as for $\log L(S)$.
\end{proof}

\clearpage
\section{Selection Behavior Across Budgets}
\label{s2:section}

\Cref{prop:weak} gives intuition for how PKTP moves from Linear Probe to RepExp as $k$ grows when the correlations are weak. This section makes the two ends of this behavior precise for a fixed correlation scale $\rho$. PKTP starts from the empty set and repeatedly adds the candidate $j$ with the smallest $\log\widetilde L(S\cup\{j\})-\log\widetilde L(S)$, and we call this the \emph{surrogate greedy rule}. At one end, the \emph{probe-style rule} keeps the $k$ candidates with the largest $q_i$, as Linear Probe does. At the other end, RepExp is implicitly a greedy rule as well. After a random first pick, RepExp with regularizer $\tau$ adds at each step the candidate that most increases $\log\det(I_d+\tau^{-1}G_S)$, where $G_S:=\sum_{i\in S}h_ih_i^\top$ is the Gram matrix of the kept representations (\cref{app:repexp-limit}).

We keep the notation of \cref{cor:selection}, with $\lambda(q):=\phi(q)/\Phi(-q)$, $\kappa(q):=\lambda(q)(\lambda(q)-q)$, $\lambda_i=\lambda(q_i)$ and $\kappa_i=\kappa(q_i)$. Then $\Sigma_S=\sum_{i\in S}\kappa_ih_ih_i^\top$, $b_S=\sum_{i\in S}\lambda_ih_i$, and, as in \cref{eq:surrogate},
\begin{equation}
 \log\widetilde L(S)=\sum_{i\in S}\log\Phi(-q_i)-\frac12\log\det(I_d+\Sigma_S)+\frac12b_S^\top(I_d+\Sigma_S)^{-1}b_S.
 \label{s2:objective}
\end{equation}
We now give an overview of the results of \cref{s2:section} and the assumptions they rely on.

\subsection{Overview of \texorpdfstring{\cref{s2:section}}{Appendix C}}
\label{s2:overview}

\Cref{s2:assumptions} states the three assumptions that the results rely on. The first is that most candidates, the bulk, have qualities drawn independently from a fixed bounded distribution, while a few candidates, the spikes, have fixed qualities above every bulk quality. In other words, the pool contains a few spikes whose quality is higher than that of any other candidate, while the bulk consists of lower-quality candidates drawn from a common distribution. The second is that the representations are $h_i=\rho v_i$ with $\|v_i\|_2\leq1$ for a fixed scale $\rho$, and do not depend on the qualities. Thus $\rho$ caps how strongly two candidates can be correlated, since $\operatorname{Cov}(\mathbf{Z}_i,\mathbf{Z}_j\mid H)=h_i^\top h_j\leq\rho^2$, as in \cref{prop:weak}, and the similarity between candidates carries no information about their qualities. The third is needed only for large budgets. There we minimize the objective over a family of size-$k$ sets, and we require these sets to be diverse, meaning that their representations point in all directions rather than being concentrated in a few.

\Cref{s2:small-budget} shows that at small budgets the two rules agree when the gap between the $k$th and $(k+1)$th largest qualities, measured as $\log\Phi(-q_{(k+1)})-\log\Phi(-q_{(k)})$, exceeds a constant multiple of $k\rho^2$, which bounds the correlation terms that $k$ picks can accumulate. The surrogate greedy rule then returns exactly the top-$k$ set of the probe-style rule, and if this condition holds at every budget up to $k$, it also adds these candidates in the same order.

\Cref{s2:large-budget} shows that at large budgets PKTP and RepExp optimize nearly the same objective. To see this, we average the objective of PKTP over the bulk qualities, for the diverse sets of the third assumption. Apart from a constant and an error that shrinks as $k$ grows, minimizing this average is the same as keeping the spikes and maximizing the RepExp objective. This is a statement about the objectives, and it does not mean that PKTP and RepExp pick the same set.

\Cref{s2:compat} constructs an explicit model in which both results hold. It fixes one quality distribution and one correlation scale $\rho$. In this model, the surrogate greedy rule follows the probe-style rule for every $k \leq 3$, while the large-budget result holds as $k$ grows.

\subsection{Assumptions}
\label{s2:assumptions}

We use $k$ for the selection budget, with $1\leq k\leq N$ for a fixed candidate pool. The three assumptions below concern the quality scores and the representations of the candidates.
\begin{enumerate}[label=\textnormal{(A\arabic*)},leftmargin=*,itemsep=5pt]
 \item \textbf{Bulk and spikes.}
 There is a fixed set $O=\{o_1,\ldots,o_r\}$ of $r\geq1$ candidates, the spikes. The qualities of all other candidates, the bulk, are drawn independently from a distribution $\mathcal D_q$ supported in a fixed interval $[w_0,w]\subseteq[-M,M]$ for a finite $M$. The spike qualities are fixed, lie in $[-M,M]$, and satisfy
 \begin{equation}
   q_{o_1}>q_{o_2}>\cdots>q_{o_r}>w,
 \end{equation}
 so every spike has a higher quality than every bulk candidate.

 \item \textbf{Correlation scale.}
 The representations are $h_i=\rho v_i$ with $\|v_i\|_2\leq1$ and a fixed $0<\rho\leq1$. They are deterministic, so they do not depend on the bulk qualities. A small $\rho$ means weak correlations between candidates, as in \cref{prop:weak}. The small-budget result needs $\rho$ to be small compared to a quality gap, while the large-budget results allow any fixed $\rho>0$.

 \item \textbf{Diverse representations.}
 For large budgets, we let $k\to\infty$ along a sequence of candidate pools of sizes $N_k\geq k$, while the dimension $d$, the spikes and their qualities, the scale $\rho$, and the distribution $\mathcal D_q$ stay fixed. The large-budget results minimize the objective over a family $\mathcal A_k$ of size-$k$ subsets of $[N_k]$ with the following properties. It is deterministic and chosen independently of the bulk qualities. It is nonempty for every sufficiently large $k$, and every $S\in\mathcal A_k$ satisfies
 \begin{equation}
  c_0kI_d\preceq\sum_{i\in S}v_iv_i^\top\preceq C_0kI_d,
  \qquad
  \left\|\sum_{i\in S \setminus O}v_i\right\|_2\leq B_0,
  \label{s2:geometry}
 \end{equation}
 where $c_0,C_0,B_0>0$ do not depend on $k$ or $S$. The first condition says that the representations in $S$ cover every direction, with a total weight that grows linearly in $k$, and the second says that the representations of the bulk candidates nearly cancel out. Different sets in $\mathcal A_k$ may contain different spikes.
\end{enumerate}

\subsection{Small budget \texorpdfstring{$k$}{k}: a quality gap yields the probe-style rule}
\label{s2:small-budget}

This subsection shows that when the gap between the $k$th and $(k+1)$th largest qualities is large compared to $k\rho^2$, PKTP keeps exactly the top-$k$ candidates, as the probe-style rule does. We write
\begin{equation}
  \Lambda:=\max_{q\in[-M,M]}\lambda(q)<\infty
\end{equation}
for the largest value of $\lambda$ over the possible qualities, which bounds every $\lambda_i$. Order the realized qualities as $q_{(1)}\geq\cdots\geq q_{(N)}$, and let $S_k^{\mathrm{top}}$ be the set of the $k$ candidates with the largest qualities.  For $1\leq k<N$, define
\begin{equation}
 \Gamma_k:=\log\Phi(-q_{(k+1)})-\log\Phi(-q_{(k)}).
 \label{s2:quality-gap}
\end{equation}
Since $\frac{d}{dq}\log\Phi(-q)=-\lambda(q)<0$, we have $\Gamma_k>0$ exactly when $q_{(k)}>q_{(k+1)}$.
This gap measures the quality loss from replacing the weakest member of $S_k^{\mathrm{top}}$ by the strongest candidate outside it. The next lemma says that adding a candidate $j$ to a set of $t$ candidates changes the objective by its own failure term $\log\Phi(-q_j)$, up to an error that grows with $t$ and vanishes as $\rho\to0$.

\begin{lemma}
\label{s2:marginal}
Under (A1) and (A2),
if $|S|=t$ and $j\notin S$, then
\begin{equation}
 \left|\log\widetilde L(S\cup\{j\})-\log\widetilde L(S)-\log\Phi(-q_j)\right|\leq\frac{\rho^2}{2}+\Lambda^2\left(t+\frac12\right)\rho^2 + \frac{\Lambda^2t^2\rho^4}{2}.
 \label{s2:marginal-error}
\end{equation}
\end{lemma}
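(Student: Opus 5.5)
The plan is to start from the exact rank-one expression for the marginal in \cref{eq:greedy}, derived in \cref{app:greedy}:
\begin{equation*}
\log\widetilde L(S\cup\{j\})-\log\widetilde L(S)-\log\Phi(-q_j)=-\tfrac12\log(1+\kappa_j r_j)+\frac{2\lambda_j u_j+\lambda_j^2 r_j-\kappa_j u_j^2}{2(1+\kappa_j r_j)},
\end{equation*}
where $r_j=h_j^\top\Omega_S h_j$, $u_j=h_j^\top\Omega_S b_S$ and $\Omega_S=(I_d+\Sigma_S)^{-1}$. The lemma then follows by bounding the scalars $\kappa_j$, $\lambda_j$, $r_j$ and $u_j$, and applying the triangle inequality term by term. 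Each of the three error terms in \cref{s2:marginal-error} should come out of one piece of this expression.

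The scalar facts I would collect first are the following.
\begin{enumerate}[label=(\roman*)]
\item $0<\kappa_j<1$. Positivity is shown in \cref{app:proof-surrogate}. For the upper bound, the moments used in the proof of \cref{lem:beta} give $E[X\mid X>q]=\lambda(q)$ and $E[X^2\mid X>q]=1+q\lambda(q)$. Hence $\operatorname{Var}(X\mid X>q)=1+q\lambda-\lambda^2=1-\kappa(q)$, and this variance is strictly positive.
\item $0<\lambda_j\le\Lambda$, since $q_j\in[-M,M]$ by (A1).
\item $\Sigma_S\succeq0$, so $0\prec\Omega_S\preceq I_d$ and $\|\Omega_S\|_{\op}\le1$.
\item By (A2), $\|h_i\|_2\le\rho$ for every $i$.
\end{enumerate}
From (iii) and (iv), $0\le r_j\le\|h_j\|^2\le\rho^2$. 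Also $\|b_S\|\le\sum_{i\in S}\lambda_i\|h_i\|\le t\Lambda\rho$, so Cauchy--Schwarz gives $|u_j|\le\|h_j\|\,\|\Omega_S b_S\|\le t\Lambda\rho^2$.

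With these bounds, the determinant term satisfies $0\le\frac12\log(1+\kappa_j r_j)\le\frac12\kappa_j r_j\le\rho^2/2$, which is the first error term. For the quadratic term, the denominator is at least $2$ because $\kappa_j r_j\ge0$, so its absolute value is at most $\lambda_j|u_j|+\frac12\lambda_j^2 r_j+\frac12\kappa_j u_j^2$. This is bounded by $\Lambda^2 t\rho^2+\frac12\Lambda^2\rho^2+\frac12 t^2\Lambda^2\rho^4$, using $\kappa_j<1$ in the last piece. Adding the two contributions gives exactly the right-hand side of \cref{s2:marginal-error}.

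There is no serious obstacle. The only non-routine ingredient is $\kappa<1$, which the crude bound on the $\kappa_j u_j^2$ term needs in order to avoid an extra factor. I obtain it from the truncated-normal variance identity above rather than from a direct inequality on $\lambda$. One small point needs care: $\Omega_S\preceq I_d$ must be used in both $r_j$ and $\|\Omega_S b_S\|$, and this requires only $\kappa_i>0$ for $i\in S$.
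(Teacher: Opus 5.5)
Your proposal is correct and follows the paper's proof essentially step for step. You use the same exact rank-one expression for the marginal, the same bound $0<\kappa<1$ from the truncated-normal variance $1-\kappa(q)$, and the same estimates $0\le r_j\le\rho^2$ and $|u_j|\le t\Lambda\rho^2$ combined term by term via the triangle inequality; the only cosmetic difference is that you quote \cref{eq:greedy}, whereas the paper re-derives it from the matrix determinant lemma and Sherman--Morrison.
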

\begin{proof}
We first record $0<\kappa(q)<1$. Positivity is shown in \cref{app:proof-surrogate}. For the upper bound, let $Z\sim N(0,1)$ and recall that
\begin{equation}
 \E[Z\mid Z>q]=\lambda(q),\qquad
 \E[Z^2\mid Z>q]=1+q\lambda(q).
\end{equation}
Hence, the conditional variance is
\begin{equation}
 0<\operatorname{Var}(Z\mid Z>q)
   =1+q\lambda(q)-\lambda(q)^2=1-\kappa(q).
\end{equation}

Let $A=I_d+\Sigma_S$, $a_j=h_j^\top A^{-1}h_j$, and
$u_j=h_j^\top A^{-1}b_S$.  The matrix determinant lemma and the Sherman--Morrison formula give
\begin{align}
 \det(A+\kappa_jh_jh_j^\top)&=\det(A)(1+\kappa_ja_j),\\
 (A+\kappa_jh_jh_j^\top)^{-1}&=A^{-1}-\frac{\kappa_jA^{-1}h_jh_j^\top A^{-1}}{1+\kappa_ja_j}.
\end{align}
Expanding the quadratic term with $b_{S\cup\{j\}}=b_S+\lambda_jh_j$ consequently yields
\begin{align}
 \log\widetilde L(S\cup\{j\})-\log\widetilde L(S)
 ={}&\log\Phi(-q_j)-\frac12\log(1+\kappa_ja_j) +\frac{\lambda_ju_j+\lambda_j^2a_j/2-\kappa_ju_j^2/2}
               {1+\kappa_ja_j}.
 \label{s2:exact-marginal}
\end{align}
By (A2), $\|h_j\|_2\leq\rho$, and by (A1), $|q_i|\leq M$, so $\lambda_i\leq\Lambda$. Since $A\succeq I_d$, we have $0\leq a_j\leq\rho^2$ and
\begin{equation}
 |u_j|\leq\|h_j\|_2\|A^{-1}\|_{\op}\|b_S\|_2
          \leq t\Lambda\rho^2.
\end{equation}
Using $\kappa_j<1$, and $1+\kappa_ja_j\geq1$ in
\cref{s2:exact-marginal}, the absolute correction to $\log\Phi(-q_j)$ is at most
\begin{equation}
 \frac{\rho^2}{2}+\Lambda(t\Lambda\rho^2)
 +\frac{\Lambda^2\rho^2}{2}+\frac{t^2\Lambda^2\rho^4}{2},
\end{equation}
which is the bound in \cref{s2:marginal-error}.
\end{proof}

\begin{theorem}
\label{s2:probe}
Under (A1) and (A2), if $k\rho^2\leq1$ and $\Gamma_k>(1+3\Lambda^2)k\rho^2$,
then $S_k^{\mathrm{top}}$ is the unique size-$k$ minimizer of $\log\widetilde L$, and the surrogate greedy rule returns $S_k^{\mathrm{top}}$
after $k$ steps.
\end{theorem}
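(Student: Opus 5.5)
The plan is to telescope \cref{s2:marginal} along an arbitrary ordering of a size-$k$ set, so that $\log\widetilde L$ is compared to the modular quality sum with an error of order $k\rho^2$. Write $\epsilon_t$ for the right-hand side of \cref{s2:marginal-error}. Using $k\rho^2\leq1$ and $t\leq k-1$, we get $\Lambda^2t^2\rho^4/2\leq \Lambda^2 t\rho^2/2$. Hence $\epsilon_t\leq \rho^2/2+\Lambda^2(\tfrac32 t+\tfrac12)\rho^2$, which is at most $\tfrac12(1+3\Lambda^2)k\rho^2$ for every $t\leq k-1$. I will keep a slightly sharper cumulative bound, since only the last step needs the full budget. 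Starting from $\widetilde L(\varnothing)=1$ and adding the elements of any $S$ with $|S|=k$ one at a time gives
\begin{equation}
 \Bigl|\log\widetilde L(S)-\sum_{i\in S}\log\Phi(-q_i)\Bigr|\leq \sum_{t=0}^{k-1}\epsilon_t .
\end{equation}

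\textbf{Uniqueness of the minimizer.} Take any size-$k$ set $S\neq S_k^{\mathrm{top}}$. Its modular part $\sum_{i\in S}\log\Phi(-q_i)$ exceeds that of $S_k^{\mathrm{top}}$ by at least $\Gamma_k$, because $S$ contains some index outside the top $k$. Applying the telescoped bound to both sets gives
\begin{equation}
 \log\widetilde L(S)-\log\widetilde L(S_k^{\mathrm{top}})\geq\Gamma_k-2\sum_{t<k}\epsilon_t .
\end{equation}
The difficulty is the constant. A naive $2\sum_t\epsilon_t$ is of order $k^2\rho^2$, not $k\rho^2$. I will therefore avoid telescoping both sets from scratch. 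Instead, I will compare $S$ and $S_k^{\mathrm{top}}$ through their common part $U=S\cap S_k^{\mathrm{top}}$ and a one-element exchange. Order both sets so that $U$ comes first; the prefix contributions then coincide exactly. After that, swap elements one at a time: replace one element of $S\setminus S_k^{\mathrm{top}}$ by one of $S_k^{\mathrm{top}}\setminus S$, keeping the other $k-1$ elements fixed. By \cref{s2:marginal} applied to the same base set of size $k-1$, each swap changes the objective by a difference of quality terms, which is at least $\Gamma_k$, minus $2\epsilon_{k-1}$. The bound above gives $2\epsilon_{k-1}\leq(1+3\Lambda^2)k\rho^2<\Gamma_k$. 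So each swap strictly decreases $\log\widetilde L$, and iterating the swaps reaches $S_k^{\mathrm{top}}$ along a strictly decreasing path. This gives uniqueness and avoids the quadratic accumulation, and it is where the precise constant $(1+3\Lambda^2)$ should come out.

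\textbf{Greedy.} Since the theorem assumes the gap only at level $k$, I will argue by exchange at the end. Suppose the greedy set after $k$ steps differs from $S_k^{\mathrm{top}}$. At some step $t+1\leq k$, greedy must pick a $j$ outside $S_k^{\mathrm{top}}$ while some $i\in S_k^{\mathrm{top}}$ is still available. By \cref{s2:marginal} at size $t\leq k-1$, the marginal gain of $j$ exceeds that of $i$ by at least
\begin{equation}
 \log\Phi(-q_j)-\log\Phi(-q_i)-2\epsilon_t\geq\Gamma_k-(1+3\Lambda^2)k\rho^2>0 .
\end{equation}
This uses $q_i\geq q_{(k)}$ and $q_j\leq q_{(k+1)}$. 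So greedy strictly prefers $i$, a contradiction. Ties within the top $k$ do not matter, since only the final set is claimed. The main obstacle is bookkeeping the constant in $\epsilon_t$ under $k\rho^2\leq1$ so that $2\epsilon_{k-1}\leq(1+3\Lambda^2)k\rho^2$. Expanding $2\epsilon_{k-1}=\rho^2+\Lambda^2(2k-1)\rho^2+\Lambda^2(k-1)^2\rho^4$ and using $(k-1)^2\rho^4\leq k\rho^2$ gives at most $\rho^2+3\Lambda^2k\rho^2\leq(1+3\Lambda^2)k\rho^2$, as needed.
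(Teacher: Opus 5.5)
Your proof is correct and follows the paper's own argument. You bound twice the error of \cref{s2:marginal} at base size $t\le k-1$ by $(1+3\Lambda^2)k\rho^2$ using $k\rho^2\le1$, then apply single exchanges over a common base set, both for optimality (the paper does one swap at $R\setminus\{j\}$, you iterate swaps down to $S_k^{\mathrm{top}}$) and for greedy (the paper by induction, you by contradiction, which is valid because some member of $S_k^{\mathrm{top}}$ is never picked and so is available at every step); the opening telescoping bound and the remark about ordering $U$ first are never used and can be dropped.
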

\begin{proof}
First, note that if $t\leq k-1$ and $k\rho^2\leq1$, then twice the bound of \cref{s2:marginal} satisfies
\begin{equation}
 \rho^2+\Lambda^2(2t+1)\rho^2+\Lambda^2t^2\rho^4
 \leq k\rho^2+2\Lambda^2k\rho^2+\Lambda^2k^2\rho^4
 \leq(1+3\Lambda^2)k\rho^2 < \Gamma_k.
\end{equation}
For any $i\in S_k^{\mathrm{top}}$, $j\notin S_k^{\mathrm{top}}$, and common current set $S$ excluding $i,j$, Lemma~\ref{s2:marginal}
gives, when $|S|=t\leq k-1$,
\begin{align}
 \log\widetilde L(S\cup\{i\})-\log\widetilde L(S\cup\{j\})
 &\leq \log\Phi(-q_i)-\log\Phi(-q_j)+(1+3\Lambda^2)k\rho^2 \\
 &\leq-\Gamma_k+(1+3\Lambda^2)k\rho^2<0.
 \label{s2:exchange}
\end{align}
If the current set of the surrogate greedy rule is contained in $S_k^{\mathrm{top}}$, at least one member of $S_k^{\mathrm{top}}$ is still
available until the $k$th selection.  By \cref{s2:exchange}, every outside
candidate is worse than an available inside candidate.  Starting with the empty set and
inducting therefore proves that the first $k$ selections form $S_k^{\mathrm{top}}$.

For global optimality, take a size-$k$ set $R\ne S_k^{\mathrm{top}}$.  There exist
$j\in R\setminus S_k^{\mathrm{top}}$ and $i\in S_k^{\mathrm{top}}\setminus R$.  Apply \cref{s2:exchange} with
$S=R\setminus\{j\}$.  Replacing $j$ by $i$ strictly decreases $\log\widetilde L$, so $R$ cannot be an
optimizer.  This proves uniqueness of the optimizing set, even if some qualities within $S_k^{\mathrm{top}}$
are tied.
\end{proof}

Applying \cref{s2:probe} at every budget up to $k$ shows that the surrogate greedy rule also adds these candidates in the probe-style order, as long as every gap up to position $k$ is large enough. The spikes give such gaps, which leads to the following corollary.

\begin{corollary}
\label{s2:spike-probe}
Under (A1) and (A2), suppose $1\leq r<N$ and define
\begin{equation}
 \Gamma_*:=\min\left(
  \{\log\Phi(-q_{o_{j+1}})-\log\Phi(-q_{o_j}):1\leq j<r\}
  \cup\{\log\Phi(-w)-\log\Phi(-q_{o_r})\}\right)>0.
 \label{s2:spike-gap}
\end{equation}
If $r\rho^2\leq1$ and
$(1+3\Lambda^2)r\rho^2<\Gamma_*$, then, for every realization of the bulk qualities,
the first $r$ selections of the surrogate greedy rule are $o_1,\ldots,o_r$ in the order selected by the probe-style rule.
\end{corollary}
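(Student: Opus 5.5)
The plan is to apply \cref{s2:probe} at each budget $t=1,\ldots,r$ and use the nestedness of greedy selections. Fix any realization of the bulk qualities. By (A1), the $r$ spikes have the $r$ largest qualities, in the order $q_{o_1}>\cdots>q_{o_r}$, and every bulk quality lies in $[w_0,w]$ with $w<q_{o_r}$. For $t\leq r$ this gives $q_{(t)}=q_{o_t}$ and $S_t^{\mathrm{top}}=\{o_1,\ldots,o_t\}$. For $t<r$ we have $q_{(t+1)}=q_{o_{t+1}}$. For $t=r$, the value $q_{(r+1)}$ is the largest bulk quality, which is at most $w$; this needs $r<N$ so that a bulk candidate exists.

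Next, bound each gap $\Gamma_t$ from below by $\Gamma_*$. For $t<r$, $\Gamma_t=\log\Phi(-q_{o_{t+1}})-\log\Phi(-q_{o_t})$, which is one of the terms in \cref{s2:spike-gap}. For $t=r$, the map $q\mapsto\log\Phi(-q)$ is decreasing, so $\log\Phi(-q_{(r+1)})\geq\log\Phi(-w)$. Hence $\Gamma_r\geq\log\Phi(-w)-\log\Phi(-q_{o_r})\geq\Gamma_*$. Positivity of $\Gamma_*$ follows from the strict orderings together with $\lambda>0$. Both of these bounds hold uniformly over all bulk realizations, which is why the conclusion holds for every realization.

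Now check the hypotheses of \cref{s2:probe} at each $t\leq r$. We have $t\rho^2\leq r\rho^2\leq1$, and
\[
(1+3\Lambda^2)t\rho^2\leq(1+3\Lambda^2)r\rho^2<\Gamma_*\leq\Gamma_t .
\]
The theorem then says that the first $t$ greedy selections form $\{o_1,\ldots,o_t\}$. Greedy runs are nested: the run with budget $t$ is the prefix of the run with budget $r$, given a fixed tie-breaking rule, and the theorem's exchange argument in fact rules out ties. It follows that the $t$-th pick is the unique element of $\{o_1,\ldots,o_t\}\setminus\{o_1,\ldots,o_{t-1}\}$, namely $o_t$. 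This is exactly the probe-style order.

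The only delicate step is the boundary gap at $t=r$, where the next-best candidate is random. This is handled by using $w$, rather than the realized bulk maximum, as a deterministic bound. Everything else is direct bookkeeping on \cref{s2:probe}.
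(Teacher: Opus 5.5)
Your proof is correct and follows the paper's argument. The paper likewise uses that all bulk qualities are at most $w$, so the top $r$ candidates are the ordered spikes and every gap $\Gamma_1,\ldots,\Gamma_r$ is at least $\Gamma_*$; it then applies \cref{s2:probe} at each budget $t\le r$ and reads off the greedy order from the nested top-$t$ sets, with your treatment of the boundary gap at $t=r$ and of uniqueness of each pick making explicit what the paper leaves implicit.
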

\begin{proof}
All bulk qualities are at most $w$.  Thus the top $r$ candidates are the ordered spikes, and
each of the gaps $\Gamma_1,\ldots,\Gamma_r$ is at least $\Gamma_*$. Apply
Theorem~\ref{s2:probe} separately at each budget $k\leq r$.  The nested top-$k$ sets identify the full
greedy order.
\end{proof}

The spikes are what keep this result meaningful in large pools. With bulk qualities alone, the gap between the $k$th and $(k+1)$th largest qualities typically shrinks as the pool grows, so the condition of \cref{s2:probe} eventually fails. The gaps around the spikes do not depend on the bulk, so \cref{s2:spike-probe} holds for every pool size, while the bulk qualities keep a fixed variance.

\subsection{Large budget \texorpdfstring{$k$}{k}: the expected objective reduces to the RepExp objective}
\label{s2:large-budget}

We now keep $\rho>0$ fixed and let $k$ grow. Since $G_S=\rho^2\sum_{i\in S}v_iv_i^\top$, (A3) gives $G_S\succeq c_0\rho^2kI_d$. The Gram matrix of a diverse set therefore still grows linearly in $k$, and a small $\rho$ only slows this growth by the factor $\rho^2$. The constants below depend on $\rho$ and on the constants in (A1)--(A3), so the results of this section hold for each fixed $\rho$ as $k\to\infty$, but not necessarily when $\rho$ shrinks as $k$ grows. Define the constants
\begin{equation}
 \overline c:=\E_{q\sim\mathcal D_q} \log \Phi(-q),\qquad
 \overline\lambda:=\E_{q\sim\mathcal D_q}\lambda(q),\qquad
 \overline\kappa:=\E_{q\sim\mathcal D_q}\kappa(q),\qquad
 v_\lambda:=\operatorname{Var}_{q\sim\mathcal D_q}(\lambda(q)).
 \label{s2:bulk-constants}
\end{equation}
These expectations are finite by bounded support, and $\overline\kappa>0$ since $\kappa(q) > 0$.

\begin{theorem}
\label{s2:expected-log}
Under (A1)--(A3), uniformly over $S\in\mathcal A_k$,
\begin{equation}
 \E \log\widetilde L(S)
 =k\overline c
   +\sum_{o\in S\cap O}\bigl(\log\Phi(-q_o)-\overline c\bigr)
   +\frac{d\,v_\lambda}{2\overline\kappa}-\frac12\log\det(I_d+\overline\kappa G_S)
   +O(k^{-1/2}).
 \label{s2:log-limit}
\end{equation}
The expectation is over the bulk qualities, for the fixed set $S$.
\end{theorem}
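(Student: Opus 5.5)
We take the expectation of each of the three terms of \cref{s2:objective} separately, for a fixed $S\in\mathcal A_k$. The first term is linear in the qualities. Its expectation is $k\overline c+\sum_{o\in S\cap O}(\log\Phi(-q_o)-\overline c)$, because $S$ and the spikes are deterministic and every bulk $q_i$ has mean contribution $\overline c$. The remaining two terms are random through $\kappa_i$ and $\lambda_i$, and we compare each with a deterministic proxy built from $\overline\kappa$ and $\overline\lambda$.

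\textbf{Log-determinant.} Write $\Sigma_S=\overline\kappa G_S+E$, with
\[
E=\sum_{i\in S\setminus O}(\kappa_i-\overline\kappa)h_ih_i^\top+\sum_{o\in S\cap O}(\kappa_o-\overline\kappa)h_oh_o^\top .
\]
The spike part has bounded operator norm, at most $r\rho^2$. The bulk part is a sum of independent, mean-zero, bounded matrices, so $\E\|E\|_{\op}^2=O(k)$, and hence $\E\|E\|_{\op}=O(\sqrt k)$ by a matrix Bernstein or Frobenius-norm second-moment bound with fixed $d$. Set $A_0=I_d+\overline\kappa G_S\succeq c\,kI_d$ by (A3). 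Then $\log\det(A_0+E)-\log\det A_0=\tr(A_0^{-1}E)+O(\|A_0^{-1}E\|^2)$. The linear term has expectation $O(1/k)$, coming only from the spikes. The quadratic term has expectation $O(k/k^2)$. On the rare event $\|E\|_{\op}\ge \tfrac12\lambda_{\min}(A_0)$ we use the crude bound $|\log\det|\le d\log(1+k\rho^2)$ together with a Hoeffding tail. This gives $\E\log\det(I+\Sigma_S)=\log\det(I+\overline\kappa G_S)+O(k^{-1/2})$, and in fact $O(1/k)$.

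\textbf{Quadratic term.} Write $b_S=\overline\lambda\sum_{i\in S}h_i+\sum_{i}(\lambda_i-\overline\lambda)h_i$. By the second condition of (A3), the deterministic part has norm $O(1)$, since $\|\sum_{S\setminus O}v_i\|\le B_0$ and there are at most $r$ spikes. Its contribution to $\tfrac12 b^\top(I+\Sigma)^{-1}b$ is therefore $O(1/k)$, as is its cross term, which is $O(k^{-1/2})$ in expectation. The main term is the fluctuation $\xi=\sum_{i\in S\setminus O}(\lambda_i-\overline\lambda)h_i$. We first replace $(I+\Sigma_S)^{-1}$ by $A_0^{-1}$. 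Using $\|\xi\|^2=O_P(k)$ and $\|(I+\Sigma)^{-1}-A_0^{-1}\|=O_P(k^{-3/2})$, the resulting error is $O(k^{-1/2})$; this step again needs the truncation argument. Then
\[
\E\,\xi^\top A_0^{-1}\xi=v_\lambda\tr\Bigl(A_0^{-1}\rho^2\sum_{S\setminus O}v_iv_i^\top\Bigr)=\frac{v_\lambda}{\overline\kappa}\tr\bigl(A_0^{-1}(A_0-I)\bigr)+O(1/k)=\frac{d\,v_\lambda}{\overline\kappa}+O(1/k),
\]
where the spike terms and $\tr A_0^{-1}=O(1/k)$ are absorbed into the error. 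Halving this gives the constant $d v_\lambda/(2\overline\kappa)$.

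\textbf{Main obstacle.} The delicate point is that $\lambda_i$ and $\kappa_i$ are both functions of the same $q_i$. As a result, $\xi$ and the perturbation $E$ inside the inverse are correlated. So the replacement of $(I+\Sigma_S)^{-1}$ by $A_0^{-1}$ in the quadratic term must be controlled by norms, not by independence. I would use the resolvent identity
\[
(I+\Sigma)^{-1}-A_0^{-1}=-A_0^{-1}E(I+\Sigma)^{-1}
\]
and bound $\E\bigl[\|\xi\|^2\|E\|\bigr]\cdot k^{-2}\le(\E\|\xi\|^4)^{1/2}(\E\|E\|^2)^{1/2}k^{-2}=O(k\cdot k^{1/2}\cdot k^{-2})=O(k^{-1/2})$. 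This is exactly where the $O(k^{-1/2})$ rate arises. All constants depend only on $d$, $\rho$, $M$, $r$ and the constants of (A3), so the bound is uniform over $\mathcal A_k$.
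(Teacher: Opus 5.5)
Your proposal is correct and follows essentially the same route as the paper. Both arguments take the quality term exactly, expand the log-determinant to second order around a deterministic proxy plus a centered fluctuation with Frobenius second moment $O(k)$, and split $b_S$ into an $O(1)$ deterministic part and a fluctuation $\xi$ with $\E\,\xi^\top A_0^{-1}\xi=v_\lambda\tr(A_0^{-1}G_B)=d\,v_\lambda/\overline\kappa+O(k^{-1})$. Both then use the resolvent identity with Cauchy--Schwarz on the fourth moment of $b_S$ (or $\xi$) and the second moment of the perturbation, which is where the $O(k^{-1/2})$ rate comes from.

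The one simplification you missed is that your truncation and Hoeffding step is unnecessary. Since the qualities lie in the compact interval $[-M,M]$, $\kappa(q)$ is bounded below by a positive constant. Hence $I_d+\Sigma_S$, the proxy, and every matrix on the segment between them are $\succeq c\,k\,I_d$ deterministically, and this is how the paper bounds all the inverses by $O(k^{-1})$ surely.
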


In words, after averaging over the bulk qualities, the objective of a diverse set depends on $S$ only through the spikes it keeps and through $\log\det(I_d+\overline\kappa G_S)$, while all other terms are constants. The theorem concerns this average, not $\log\widetilde L(S)$ for a single draw of the bulk qualities.

\begin{proof}
Fix $S\in\mathcal A_k$, and write $J=S\cap O$ and $B=S\setminus O$.
The expectation of the scalar quality term is exactly
\begin{equation}
 \E\sum_{i\in S}\log\Phi(-q_i)
 =k\overline c+\sum_{o\in J}(\log\Phi(-q_o)-\overline c).
 \label{s2:quality-term}
\end{equation}
We treat the determinant and quadratic terms separately. Define
\begin{align}
 G_B&=\sum_{i\in B}h_ih_i^\top,&
 \Sigma_J&=\sum_{o\in J}\kappa(q_o)h_oh_o^\top,&
 b_J&=\sum_{o\in J}\lambda(q_o)h_o,\\
 A_0&=I_d+\overline\kappa G_B+\Sigma_J,&
 \Delta&=\sum_{i\in B}(\kappa(q_i)-\overline\kappa)h_ih_i^\top,&
 A&=I_d+\Sigma_S=A_0+\Delta.
\end{align}
There are at most $r$ spike contributions.  Removing them from $G_S$ changes its operator norm
by at most $r\rho^2$, so $G_B\succeq c_1kI_d$ for all sufficiently large $k$, with
$c_1>0$ depending on the fixed $\rho$.  Compactness of the quality interval gives a positive
lower bound on every $\kappa(q_i)$.  Hence $A$, $A_0$, and every matrix on their connecting
line segment have smallest eigenvalue at least $c_2k$.

\paragraph{The determinant term.}
Independence and centering eliminate the cross terms in the squared Frobenius norm, yielding
\begin{equation}
 \E\Delta=0,
 \qquad
 \E\|\Delta\|_F^2
 =\operatorname{Var}_{q\sim\mathcal D_q}(\kappa(q))\sum_{i\in B}\|h_i\|_2^4=O(k).
 \label{s2:delta-moment}
\end{equation}
For $A_t=A_0+t\Delta$, differentiation gives
\begin{equation}
 \frac{d}{dt}\log\det A_t=\tr(A_t^{-1}\Delta),\qquad
 \frac{d^2}{dt^2}\log\det A_t
 =-\tr(A_t^{-1}\Delta A_t^{-1}\Delta).
\end{equation}
The absolute second derivative is at most
$\|A_t^{-1}\|_{\op}^2\|\Delta\|_F^2\leq C k^{-2}\|\Delta\|_F^2$.
Taylor's formula and \cref{s2:delta-moment} therefore imply
\begin{equation}
 \E\log\det A=\log\det A_0+O(k^{-1}).
\end{equation}
Also, $A_0-(I_d+\overline\kappa G_S)
=\sum_{o\in J}(\kappa(q_o)-\overline\kappa)h_oh_o^\top$ has bounded norm.  The same
line-segment derivative identity, with inverse norm $O(k^{-1})$, gives
\begin{equation}
 \log\det A_0=\log\det(I_d+\overline\kappa G_S)+O(k^{-1}).
\end{equation}

\paragraph{The quadratic term.}
Decompose
\begin{equation}
 b_S=\overline b+\zeta,
 \qquad
 \overline b=b_J+\overline\lambda\sum_{i\in B}h_i,
 \qquad
 \zeta=\sum_{i\in B}(\lambda(q_i)-\overline\lambda)h_i.
\end{equation}
By balance and bounded spike contributions (Assumptions (A1) and (A3)), $\|\overline b\|=O(1)$.  Independence gives
$\E\zeta=0$ and $\E\zeta\zeta^\top=v_\lambda G_B$.  Consequently,
\begin{align}
 \E[b_S^\top A_0^{-1}b_S]
 &=\overline b^\top A_0^{-1}\overline b+v_\lambda\tr(A_0^{-1}G_B) =\frac{d\,v_\lambda}{\overline\kappa}+O(k^{-1}).
\end{align}
For the last equality, the first term is $O(k^{-1})$ and
\begin{equation}
 A_0^{-1}G_B
 =\frac{1}{\overline\kappa}
   \bigl(I_d-A_0^{-1}(I_d+\Sigma_J)\bigr).
\end{equation}

It remains to replace $A_0^{-1}$ by the correlated random inverse $A^{-1}$.
The resolvent identity yields
$A^{-1}-A_0^{-1}=-A^{-1}\Delta A_0^{-1}$.  Moreover,
$\E\|b_S\|_2^4=O(k^2)$.  To justify this last estimate, write each coordinate of $\zeta$
as a sum of independent centered bounded variables $X_i$.  Expanding its fourth power gives
\begin{equation}
 \E\left(\sum_iX_i\right)^4
 =\sum_i\E X_i^4+6\sum_{i<j}\E X_i^2\E X_j^2=O(k^2).
\end{equation}
There are only $d$ coordinates, and
$(\sum_{a=1}^d z_a^2)^2\leq d\sum_{a=1}^d z_a^4$, so the vector estimate follows;
adding $\overline b=O(1)$ does not change its order.  Cauchy--Schwarz now gives

\begin{equation}
 \E\left|b_S^\top(A^{-1}-A_0^{-1})b_S\right|
 \leq\frac{C}{k^2}\E[\|b_S\|_2^2\|\Delta\|_F] \leq\frac{C}{k^2}\sqrt{\E\|b_S\|_2^4\,\E\|\Delta\|_F^2}
 =O(k^{-1/2}).
 \label{s2:quadratic-random-inverse}
\end{equation}
Combining \cref{s2:quality-term}--\cref{s2:quadratic-random-inverse} with
\cref{s2:objective} proves the theorem.  Every bound used only the common constants in
(A1)--(A3), so the error is uniform over $\mathcal{A}_k$.
\end{proof}

The penalty term does not vanish, but on average it contributes the same constant $d\,v_\lambda/(2\overline\kappa)$ to every diverse set. The spike term in \cref{s2:log-limit}, in contrast, depends on which spikes $S$ keeps, so it cannot be dropped while keeping a spike is still a choice.

\paragraph{When the large-budget optimizer includes the spikes.}
\Cref{s2:expected-log} holds whichever spikes a set contains. To show that the best sets keep all the spikes, we need one more condition on $\mathcal A_k$, namely that a missing spike can always be swapped in for a bulk candidate.

\begin{definition}[Closure under insertion of a missing spike]
\label{s2:swap-closure}
A family $\mathcal A_k$ of size-$k$ subsets has the required closure property
if, whenever $S\in\mathcal A_k$ and $o\in O\setminus S$, there is a bulk
candidate $i\in S\setminus O$ such that
\begin{equation}
 S'=(S\setminus\{i\})\cup\{o\}\in\mathcal A_k.
\end{equation}
\end{definition}

This condition only requires one permissible replacement for each missing
spike. It is not automatic for a family defined only by the diversity conditions, and \cref{s2:compat} gives a concrete family for which it holds.

\begin{lemma}[The averaged objective keeps the spikes]
\label{s2:spike-inclusion}
Suppose $r\geq1$, every spike has a fixed quality $q_o>w$, and
$\mathcal A_k$ satisfies Definition~\ref{s2:swap-closure}.
Under the assumptions of Theorem~\ref{s2:expected-log}, every minimizer of
$\mathbb E \log\widetilde L(S)$ over $S\in\mathcal A_k$ contains all of $O$ for all
sufficiently large $k$.
\end{lemma}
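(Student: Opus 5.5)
We argue by contradiction through the swap in Definition~\ref{s2:swap-closure}, using the uniform expansion of Theorem~\ref{s2:expected-log}. Suppose $S\in\mathcal A_k$ minimizes $\E\log\widetilde L$ over $\mathcal A_k$ but misses some spike $o\in O\setminus S$. The closure property gives a bulk candidate $i\in S\setminus O$ with $S'=(S\setminus\{i\})\cup\{o\}\in\mathcal A_k$. We will show that $\E\log\widetilde L(S')<\E\log\widetilde L(S)$ for all sufficiently large $k$, uniformly in $S$. This contradicts minimality.

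To compare the two sets, apply \cref{s2:log-limit} to both $S$ and $S'$. The error is uniform over $\mathcal A_k$, so the difference carries only an $O(k^{-1/2})$ error. The terms $k\overline c$ and $d\,v_\lambda/(2\overline\kappa)$ cancel. The spike sums differ by exactly $\log\Phi(-q_o)-\overline c$, since $S'\cap O=(S\cap O)\cup\{o\}$. This gives
\begin{equation*}
\E\log\widetilde L(S')-\E\log\widetilde L(S)=\bigl(\log\Phi(-q_o)-\overline c\bigr)-\tfrac12\log\frac{\det(I_d+\overline\kappa G_{S'})}{\det(I_d+\overline\kappa G_{S})}+O(k^{-1/2}).
\end{equation*}
The first term is a fixed negative constant. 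Indeed, $q\mapsto\log\Phi(-q)$ is strictly decreasing and every bulk quality is at most $w<q_o$. Hence $\overline c\geq\log\Phi(-w)>\log\Phi(-q_o)$, so the first term is at most $-\gamma_o:=\log\Phi(-q_o)-\log\Phi(-w)<0$, and this bound does not depend on $k$.

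Next we show that the log-determinant ratio vanishes. We have $G_{S'}=G_S-h_ih_i^\top+h_oh_o^\top$, a perturbation of rank at most two with operator norm at most $2\rho^2$. Let $A=I_d+\overline\kappa G_S$. By (A3), $A\succeq c_0\overline\kappa\rho^2 k I_d$, and $I_d+\overline\kappa G_{S'}$ obeys the same bound up to a bounded shift. Apply the line-segment derivative identity already used in the proof of Theorem~\ref{s2:expected-log}: $\frac{d}{dt}\log\det A_t=\tr(A_t^{-1}E)$. Its absolute value is at most $d\,\|A_t^{-1}\|_{\op}\|E\|_{\op}=O(k^{-1})$. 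The ratio is therefore $O(k^{-1})$, uniformly in $S$. The determinant factor could also be handled exactly with the matrix determinant lemma applied twice, but the derivative bound suffices.

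Collecting the three terms, the difference is at most $-\gamma_o+O(k^{-1/2})$, with constants uniform over $\mathcal A_k$ and over the finitely many spikes. It is therefore negative for all $k$ beyond a threshold that does not depend on $S$. So no minimizer can miss a spike. The main point to check is this uniformity. The $O(k^{-1/2})$ in Theorem~\ref{s2:expected-log} must hold uniformly over all of $\mathcal A_k$, which includes $S'$, and not only for a fixed set. The theorem asserts exactly this, so the threshold on $k$ can be chosen once, taking $\min_o\gamma_o>0$ over the $r$ spikes.
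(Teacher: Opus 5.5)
Your proposal is correct and follows essentially the same route as the paper. You swap in the missing spike via the closure property and compare the two uniform expansions from Theorem~\ref{s2:expected-log}. You bound the log-determinant change by $O(k^{-1})$ with the line-segment derivative, using the linear-in-$k$ lower bound from (A3), and you close with the fixed negative gap $\log\Phi(-q_o)-\overline c$. The only cosmetic difference is that you bound this gap through $\log\Phi(-w)$, whereas the paper uses $\eta=\min_o(\overline c-\log\Phi(-q_o))$ directly.
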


\begin{proof}
We first bound the effect of one replacement on a log determinant.  Suppose
$S'=(S\setminus\{i\})\cup\{o\}$, where $i$ is a bulk candidate, and
both $S,S'\in\mathcal A_k$.  Fix $\nu>0$ and set
\begin{equation}
 A(t)=I_d+\nu\bigl((1-t)G_S+tG_{S'}\bigr),\qquad 0\leq t\leq1.
\end{equation}
The uniform covariance lower bound gives $A(t)\succeq(1+\nu ck)I_d$ for
some fixed $c>0$.  Also $G_{S'}-G_S=h_oh_o^\top-h_ih_i^\top$, so
\begin{align}
 \left|\frac{d}{dt}\log\det A(t)\right|
 &=\nu\left|h_o^\top A(t)^{-1}h_o-h_i^\top A(t)^{-1}h_i\right| \leq\frac{\nu(\|h_o\|_2^2+\|h_i\|_2^2)}{1+\nu ck}
 \leq\frac{2\nu\rho^2}{1+\nu ck}.
 \label{s2:swap-logdet-derivative}
\end{align}
Integrating over $t\in[0,1]$ shows, uniformly in the permissible replacement,
\begin{equation}
 \left|\log\det(I_d+\nu G_{S'})-\log\det(I_d+\nu G_S)\right|=O(k^{-1}).
 \label{s2:swap-logdet}
\end{equation}

Recall that
$\overline c=\mathbb E_{q\sim\mathcal D_q}\log\Phi(-q)$.  Since $q\leq w<q_o$ almost surely under $q\sim\mathcal D_q$
and $q\mapsto\log\Phi(-q)$ is strictly decreasing,
\begin{equation}
 \eta:=\min_{o\in O}\bigl(\overline c-\log\Phi(-q_o)\bigr)>0.
 \label{s2:spike-log-gap}
\end{equation}
The set $O$ is finite, so this minimum is a fixed positive number.
If $S$ misses spike $o$, choose the replacement supplied by the closure
condition.  The expansion in Theorem~\ref{s2:expected-log} and
\cref{s2:swap-logdet}, with $\nu=\overline\kappa$, give
\begin{align}
 \mathbb E \log\widetilde L(S')-\mathbb E \log\widetilde L(S)
 &=\log\Phi(-q_o)-\overline c
 -\frac12\left[\log\det(I_d+\overline\kappa G_{S'})
                 -\log\det(I_d+\overline\kappa G_S)\right]
 +O(k^{-1/2}) \\
 &\leq-\eta+O(k^{-1/2})<0
 \label{s2:spike-log-improvement}
\end{align}
for all sufficiently large $k$.  The threshold is uniform over the family.
Therefore no minimizer can omit a spike.
\end{proof}

Once all spikes are kept, the spike term is the same for every remaining set, so minimizing the average comes down to maximizing the log-determinant. The next corollary makes this precise.

\begin{corollary}
\label{s2:determinant-regret}
Let $\mathcal A_k^O=\{S\in\mathcal A_k:O\subseteq S\}$, and for $\nu>0$ let
\begin{equation}
 D_\nu(S)=\log\det(I_d+\nu G_S),\qquad
 S_k^{\det}(\nu)\in\operatorname*{arg\,max}_{S\in\mathcal A_k^O}D_\nu(S).
\end{equation}
Under the assumptions of \cref{s2:spike-inclusion}, let
$\widehat S_k^{\log}$ minimize $\mathbb E \log\widetilde L(S)$ over $\mathcal A_k$.
Then, for all sufficiently large $k$,
\begin{equation}
 0\leq D_{\overline\kappa}(S_k^{\det}(\overline\kappa))
       -D_{\overline\kappa}(\widehat S_k^{\log})=O(k^{-1/2}).
 \label{s2:log-determinant-regret}
\end{equation}

\end{corollary}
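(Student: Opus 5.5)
The plan combines \cref{s2:spike-inclusion} with the expansion of \cref{s2:expected-log}. By \cref{s2:spike-inclusion}, for all sufficiently large $k$ every minimizer $\widehat S_k^{\log}$ of $\E\log\widetilde L$ over $\mathcal A_k$ contains all of $O$, so $\widehat S_k^{\log}\in\mathcal A_k^O$. Since $S_k^{\det}(\overline\kappa)$ maximizes $D_{\overline\kappa}$ over $\mathcal A_k^O$, the lower bound $0\leq D_{\overline\kappa}(S_k^{\det}(\overline\kappa))-D_{\overline\kappa}(\widehat S_k^{\log})$ follows at once. The family $\mathcal A_k^O$ is nonempty for large $k$, because applying the closure property of Definition~\ref{s2:swap-closure} $r$ times to any $S\in\mathcal A_k$ inserts all spikes, so the argmax is well defined.

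For the upper bound, I would write \cref{s2:log-limit} for any $S\in\mathcal A_k^O$. Because $S\cap O=O$ for every such set, the spike sum $\sum_{o\in O}(\log\Phi(-q_o)-\overline c)$ is the same constant for all of them, as are $k\overline c$ and $d v_\lambda/(2\overline\kappa)$. Hence there is a constant $K_k$, independent of $S$, with
\begin{equation}
 \E\log\widetilde L(S)=K_k-\tfrac12 D_{\overline\kappa}(S)+\epsilon_k(S),\qquad \sup_{S\in\mathcal A_k}|\epsilon_k(S)|\leq Ck^{-1/2}.
\end{equation}
Here I use $I_d+\overline\kappa G_S$ exactly as it appears in \cref{s2:log-limit}, which matches $D_{\overline\kappa}$. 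The uniformity of the error over $\mathcal A_k$, stated in \cref{s2:expected-log}, is essential: it is what lets a single bound cover both the data-dependent minimizer and the determinant maximizer.

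Then I compare the two sets. Since $S_k^{\det}(\overline\kappa)\in\mathcal A_k^O\subseteq\mathcal A_k$ and $\widehat S_k^{\log}$ minimizes over $\mathcal A_k$, we have $\E\log\widetilde L(\widehat S_k^{\log})\leq\E\log\widetilde L(S_k^{\det}(\overline\kappa))$. Substituting the display and cancelling $K_k$ gives
\begin{equation}
 \tfrac12\bigl(D_{\overline\kappa}(S_k^{\det}(\overline\kappa))-D_{\overline\kappa}(\widehat S_k^{\log})\bigr)\leq \epsilon_k(S_k^{\det}(\overline\kappa))-\epsilon_k(\widehat S_k^{\log})\leq 2Ck^{-1/2}.
\end{equation}
This yields the claimed $O(k^{-1/2})$ regret.

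The only delicate point is the first step. One must confirm that the threshold on $k$ in \cref{s2:spike-inclusion} is uniform over $\mathcal A_k$, which that lemma asserts, so that the minimizer lies in $\mathcal A_k^O$ and the spike term cancels exactly instead of contributing an $\eta$-sized discrepancy. Everything else is bookkeeping of constants.
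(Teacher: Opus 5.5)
Your proposal is correct and matches the paper's proof step for step. Both arguments place the minimizer in $\mathcal A_k^O$ via the spike-inclusion lemma, write the expansion of \cref{s2:expected-log} on that family as an $S$-independent constant minus $\tfrac12 D_{\overline\kappa}(S)$ plus a uniform $O(k^{-1/2})$ error, and use optimality of $\widehat S_k^{\log}$ against $S_k^{\det}(\overline\kappa)$. Your remark that $\mathcal A_k^O$ is nonempty, by applying the closure property repeatedly, is a harmless detail the paper leaves implicit.
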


\begin{proof}
\Cref{s2:spike-inclusion} places each optimizer in
$\mathcal A_k^O$ for all sufficiently large $k$.
On this family, Theorem~\ref{s2:expected-log} has the form
\begin{equation}
 \mathbb E \log\widetilde L(S)=C_k-\tfrac12D_{\overline\kappa}(S)+e_k(S),
 \qquad \sup_{S\in\mathcal A_k^O}|e_k(S)|\leq C k^{-1/2},
\end{equation}
where $C_k$ is independent of $S$.  Optimality of
$\widehat S_k^{\log}$, compared with $S_k^{\det}(\overline\kappa)$, implies
\begin{equation}
 \tfrac12\bigl[D_{\overline\kappa}(S_k^{\det}(\overline\kappa))
 -D_{\overline\kappa}(\widehat S_k^{\log})\bigr]
 \leq e_k(S_k^{\det}(\overline\kappa))-e_k(\widehat S_k^{\log})
 \leq2C k^{-1/2}.
\end{equation}
This proves \cref{s2:log-determinant-regret}.
\end{proof}

Since $D_{\overline\kappa}$ is the RepExp objective with $\tau=1/\overline\kappa$ (\cref{app:repexp-limit}), the minimizer of the averaged objective nearly maximizes the RepExp objective among the diverse sets that keep all the spikes.

\subsection{An explicit model with both regimes}
\label{s2:compat}
This subsection constructs an explicit model in which the assumptions of \cref{s2:small-budget} and \cref{s2:large-budget} hold at the same time, for one fixed quality distribution and one fixed $\rho$. Let $N$ denote the total candidate-pool size, and consider a
nested sequence of pools as $N\to\infty$. Each pool contains the same
three spikes $O=\{o_1,o_2,o_3\}$, identified with the indices
$\{1,2,3\}$, and a bulk set
$\mathcal B_N=[N]\setminus O$ of size $N-3$. The pools may be coupled
by taking one infinite sequence of bulk candidates and letting
$\mathcal B_N$ contain its first $N-3$ members. Throughout the
construction, the dimension $d$ is fixed. For the large-budget regime,
fix any exponent $\alpha\in(0,1)$ and define the integer-valued budget
schedule
\begin{equation}
    k(N):=\left\lfloor N^\alpha\right\rfloor .
\end{equation}
Then $k(N)\to\infty$ and $k(N)/N\to0$ as $N\to\infty$, while
$3\leq k(N)\leq N-3$ for all sufficiently large $N$.
In the small-budget part of this construction,
$k\in\{1,2,3\}$ denotes a fixed selection budget, whereas $k(N)$ is
the budget used in the large-budget regime. We first choose the quality distribution, then
fix the common scale $\rho$, and finally construct the growing pools
and their families $\mathcal A_k$.

For the quality model in (A1), draw the bulk qualities independently from
$\mathcal D_q=\operatorname{Unif}[-\gamma,\gamma]$, once for the nested sequence
of pools, and let the three spike
qualities be
\begin{equation}
    q_{o_1}=4\gamma,\qquad q_{o_2}=3\gamma,\qquad q_{o_3}=2\gamma.
\end{equation}
Here $\gamma>0$ is a fixed constant.
Thus we may take $w_0=-\gamma$, $w=\gamma$, and $M=4\gamma$. Every spike has quality
strictly above the entire bulk quality distribution, and
\begin{equation}
    \operatorname{Var}_{q\sim\mathcal D_q}(q)=\frac{\gamma^2}{3}>0.
\end{equation}
In particular, the bulk qualities retain a fixed, nonzero variance
as $N$ grows.

Next, consider the small-budget quality gaps. Since
$q\mapsto\log\Phi(-q)$ is strictly decreasing, the quantity
\begin{align}
    \Gamma_*
    =\min\bigl\{
        \log\Phi(-3\gamma)-\log\Phi(-4\gamma),\,
        \log\Phi(-2\gamma)-\log\Phi(-3\gamma),\,
        \log\Phi(-\gamma)-\log\Phi(-2\gamma)
    \bigr\}
\end{align}
is strictly positive. These gaps do not depend on the number
of bulk candidates, since no bulk candidate has a quality above $\gamma$. More explicitly, since
$\frac{d}{dq}\log\Phi(-q)=-\lambda(q)$ and $\lambda'(q)=\kappa(q)>0$, each difference
in this minimum is the integral of $\lambda$ over an interval of
length $\gamma$ contained in $[\gamma,4\gamma]$. Consequently,
\begin{equation}
    \Gamma_*\geq \gamma\lambda(\gamma)>0.
\end{equation}

Let $\Lambda=\max_{q\in[-4\gamma,4\gamma]}\lambda(q)$, and choose the fixed
scale $\rho>0$ so that
\begin{equation}
    0<\rho^2<
    \min\left\{
        \frac13,\,
        \frac{\Gamma_*}{3(1+3\Lambda^2)}
    \right\}.
\end{equation}
Take deterministic vectors $v_i$ with $\|v_i\|_2\leq1$, and set
$h_i=\rho v_i$, as required by (A2). This choice gives
\begin{equation}
    3\rho^2\leq1,
    \qquad
    3(1+3\Lambda^2)\rho^2<\Gamma_*.
\end{equation}
For every $N>3$ and every realization of the bulk qualities,
\Cref{s2:spike-probe} therefore applies, so the first three
greedy selections are $o_1,o_2,o_3$ in that order. For each fixed
$k\in\{1,2,3\}$, \cref{s2:probe} also shows that the set
$\{o_1,\ldots,o_k\}$ is the unique size-$k$ minimizer of $\log\widetilde L$. Thus the small-budget conclusion holds
uniformly as the pool size $N$ grows.

For the large-budget part, we first describe a general way to obtain families with the closure property of Definition~\ref{s2:swap-closure}. Let
$\mathcal C_k$ be any nonempty deterministic family of bulk-only size-$k$
sets such that, for fixed constants $B_{\mathrm{core}},c_{\mathrm{core}},C_{\mathrm{core}}>0$
independent of $k$,
\begin{align}
 \left\|\sum_{i\in U}v_i\right\|_2&\leq B_{\mathrm{core}},
 &c_{\mathrm{core}}k I_d\preceq\sum_{i\in U}v_iv_i^\top&\preceq C_{\mathrm{core}}k I_d,
 \qquad U\in\mathcal C_k. \label{s2:core-family}
\end{align}
As before, $h_i=\rho v_i$ with $\|v_i\|_2\leq1$, and $\rho>0$ is fixed.
Define
\begin{equation}
 \mathcal A_k=
 \left\{(U\setminus R)\cup J:
 U\in\mathcal C_k,\ J\subseteq O,\ R\subseteq U,\ |R|=|J|\right\}.
 \label{s2:expanded-family}
\end{equation}
If $S=(U\setminus R)\cup J$ misses a spike $o$, replacing any
$i\in U\setminus R$ by $o$ gives the same representation with
$R\cup\{i\}$ and $J\cup\{o\}$.  Thus this family is closed in the sense
of Definition~\ref{s2:swap-closure}, for $k\geq r$.
Moreover, since at most $r$ vectors are removed or inserted,
\begin{align}
 \left\|\sum_{i\in S\setminus O}h_i\right\|_2
 &\leq\rho(B_{\mathrm{core}}+r), \label{s2:expanded-balance}\\
 \rho^2(c_{\mathrm{core}}k-r)I_d\preceq G_S
 &\preceq\rho^2(C_{\mathrm{core}}k+r)I_d. \label{s2:expanded-gram}
\end{align}
For the matrix bounds, use $0\preceq v_iv_i^\top\preceq I_d$ for each
removed or inserted vector.  For all sufficiently large $k$, these imply
(A3), for example with $B_0=B_{\mathrm{core}}+r$,
$c_0=c_{\mathrm{core}}/2$, and $C_0=C_{\mathrm{core}}+1$.  

We now build such a core family from antipodal pairs.
Let the bulk directions belong to
$\{\pm e_1,\ldots,\pm e_d\}$, where $e_1,\ldots,e_d$ are the
standard basis vectors. An antipodal pair consists of two distinct
candidates with directions $e_j$ and $-e_j$. Its contributions
to the vector sum and Gram matrix are, respectively,
\begin{equation}
    e_j+(-e_j)=0,
    \qquad
    e_je_j^\top+(-e_j)(-e_j)^\top=2e_je_j^\top.
\end{equation}
Thus pairing cancels the vector sum while preserving covariance
in direction $e_j$.

Write $b_\ell:=3+\ell$ for the $\ell$th candidate in the infinite
bulk sequence, and assign the directions of $b_1,b_2,\ldots$ in the
repeating order
\begin{equation}
    e_1,-e_1,e_2,-e_2,\ldots,e_d,-e_d,
    e_1,-e_1,\ldots .
\end{equation}
Thus $\mathcal B_N=\{b_1,\ldots,b_{N-3}\}$. The directions are
deterministic and independent of the random quality realizations, and
the spike directions may be arbitrary deterministic vectors of norm
at most one. For all sufficiently large $N$, define the bulk-only
core and its singleton family by
\begin{equation}
    U_N:=\{b_1,\ldots,b_{k(N)}\},
    \qquad
    \mathcal C_N:=\{U_N\}.
\end{equation}
Then $|U_N|=k(N)$. Let $n_{N,j}$ denote the number of complete
antipodal pairs on axis $j$ among the first
$2\lfloor k(N)/2\rfloor$ members of $U_N$. If $k(N)$ is even, $U_N$
consists entirely of these pairs, and the cyclic construction gives
\begin{equation}
    2\sum_{j=1}^d n_{N,j}=k(N),
    \qquad
    n_{N,j}\geq\frac{k(N)}{4d}
    \quad\text{for every }j
\end{equation}
for all sufficiently large $N$. The lower bound means that at least a
fraction $1/(2d)$ of the selected vectors lies on each axis. In this
case,
\begin{equation}
    \sum_{i\in U_N}v_i=0,
    \qquad
    \frac{k(N)}{2d}I_d
    \preceq
    \sum_{i\in U_N}v_iv_i^\top
    =2\operatorname{diag}(n_{N,1},\ldots,n_{N,d})
    \preceq k(N)I_d.
\end{equation}
If $k(N)$ is odd, its first $k(N)-1$ members form the complete pairs
counted by $n_{N,j}$, and these counts satisfy
\begin{equation}
    2\sum_{j=1}^d n_{N,j}=k(N)-1,
    \qquad
    n_{N,j}\geq\frac{k(N)-1}{4d}
\end{equation}
for all sufficiently large $N$. The remaining member has direction
$e_{j_N}$ for an axis $j_N\in\{1,\ldots,d\}$ determined by the cyclic ordering.
Thus
\begin{equation}
    \left\|\sum_{i\in U_N}v_i\right\|_2=1,
    \qquad
    \frac{k(N)-1}{2d}I_d
    \preceq
    \sum_{i\in U_N}v_iv_i^\top
    \preceq k(N)I_d.
\end{equation}
Consequently, for both parities and all sufficiently large $N$, the
core-family bounds hold with the fixed constants
\begin{equation}
    B_{\mathrm{core}}=1,
    \qquad
    c_{\mathrm{core}}=\frac{1}{4d},
    \qquad
    C_{\mathrm{core}}=1.
\end{equation}

Enlarging this core family as in \cref{s2:expanded-family}, with $r=3$, gives a family $\mathcal A_N^{\mathrm{con}}$ of size-$k(N)$ subsets of $[N]$ that is closed under insertion of a missing spike. By \cref{s2:expanded-balance,s2:expanded-gram}, it satisfies (A3) for all sufficiently large $N$, with $B_0=4$, $c_0=1/(8d)$ and $C_0=2$. When applying the general
large-budget results to this construction, their budget parameter
$k$ is instantiated as $k(N)$. To recover the generic budget-indexed
notation, set
\begin{equation}
    N_k:=\left\lceil k^{1/\alpha}\right\rceil,
    \qquad
    \mathcal A_k:=\mathcal A_{N_k}^{\mathrm{con}}.
\end{equation}
For $0<\alpha<1$, this choice satisfies
$\lfloor N_k^\alpha\rfloor=k$ and $N_k\geq k+3$ for all sufficiently
large $k$.

Consequently, \cref{s2:spike-inclusion} applies, so for all sufficiently large $N$ every minimizer of $\mathbb E \log\widetilde L(S)$ over $\mathcal A_N^{\mathrm{con}}$ contains all three spikes, and \cref{s2:determinant-regret} holds.

In summary, one fixed quality distribution and one fixed $\rho$ give both regimes. For every $k\leq3$, the correlation terms, of order $(1+3\Lambda^2)k\rho^2$, stay below the fixed gap $\Gamma_*$, so the surrogate greedy rule follows the probe-style rule for every pool size. As $k(N)\to\infty$, the Gram matrix of every set in $\mathcal A_N^{\mathrm{con}}$ grows like $k(N)\rho^2$, and the results of \cref{s2:large-budget} apply, while the kept fraction $k(N)/N$ of the pool vanishes.

\clearpage
\section{Benchmark-Stratified Results from \texorpdfstring{\cref{sec:results}}{Section 4.2}}
\label{app:per-benchmark}

This section reports the numbers behind \cref{fig:gains}. \Cref{tab:main} gives, for every model, the \passk of each pruning policy pooled over all test problems, and \crefrange{tab:app-qwen3-0.6b}{tab:app-skywork-7b} break these results down by benchmark, one table per model. All tables use the setting of \cref{sec:setup}, with a pool of $N = 128$ rollouts, a cutoff of $T = 4000$ tokens and eight-fold cross-validation repeated $20$ times. Each entry is the average \passk over the repetitions, in percent, and the value in parentheses is its standard error over those repetitions.

\begin{table}[H]
\centering
\setlength{\tabcolsep}{3pt}
\caption{Pass@$k$ (\%), pooled over problems.}
\label{tab:main}
\begin{minipage}[t]{0.49\linewidth}\centering
{\small\textbf{Qwen3-0.6B}}\\[2pt]
\resizebox{\linewidth}{!}{%
\begin{tabular}{@{}l*{5}{r}@{}}
\toprule
 & @1 & @2 & @4 & @8 & @16 \\
\midrule
Random (Pass@$k$) & 24.0\,{\scriptsize (0.00)} & 34.4\,{\scriptsize (0.00)} & 45.4\,{\scriptsize (0.00)} & 56.5\,{\scriptsize (0.00)} & 67.6\,{\scriptsize (0.00)} \\
\midrule
Linear Probe & 24.9\,{\scriptsize (0.27)} & 33.8\,{\scriptsize (0.25)} & 44.3\,{\scriptsize (0.21)} & 56.2\,{\scriptsize (0.30)} & 67.1\,{\scriptsize (0.27)} \\
RepExp & 23.9\,{\scriptsize (0.12)} & 33.5\,{\scriptsize (0.10)} & 45.0\,{\scriptsize (0.08)} & 55.9\,{\scriptsize (0.08)} & 67.0\,{\scriptsize (0.06)} \\
PKTP & 25.6\,{\scriptsize (0.29)} & 36.3\,{\scriptsize (0.22)} & 47.2\,{\scriptsize (0.29)} & 58.4\,{\scriptsize (0.31)} & 69.3\,{\scriptsize (0.28)} \\
\bottomrule
\end{tabular}}
\end{minipage}
\hfill
\begin{minipage}[t]{0.49\linewidth}\centering
{\small\textbf{Qwen3-1.7B}}\\[2pt]
\resizebox{\linewidth}{!}{%
\begin{tabular}{@{}l*{5}{r}@{}}
\toprule
 & @1 & @2 & @4 & @8 & @16 \\
\midrule
Random (Pass@$k$) & 39.5\,{\scriptsize (0.00)} & 51.5\,{\scriptsize (0.00)} & 62.6\,{\scriptsize (0.00)} & 72.5\,{\scriptsize (0.00)} & 81.0\,{\scriptsize (0.00)} \\
\midrule
Linear Probe & 40.7\,{\scriptsize (0.28)} & 52.0\,{\scriptsize (0.24)} & 63.1\,{\scriptsize (0.27)} & 72.5\,{\scriptsize (0.18)} & 79.9\,{\scriptsize (0.20)} \\
RepExp & 39.6\,{\scriptsize (0.08)} & 51.6\,{\scriptsize (0.08)} & 63.9\,{\scriptsize (0.07)} & 74.6\,{\scriptsize (0.04)} & 82.2\,{\scriptsize (0.03)} \\
PKTP & 39.3\,{\scriptsize (0.35)} & 52.7\,{\scriptsize (0.29)} & 64.4\,{\scriptsize (0.24)} & 73.8\,{\scriptsize (0.23)} & 81.7\,{\scriptsize (0.21)} \\
\bottomrule
\end{tabular}}
\end{minipage}
\par\vspace{6pt}
\begin{minipage}[t]{0.49\linewidth}\centering
{\small\textbf{OpenMath-Nemotron-1.5B}}\\[2pt]
\resizebox{\linewidth}{!}{%
\begin{tabular}{@{}l*{5}{r}@{}}
\toprule
 & @1 & @2 & @4 & @8 & @16 \\
\midrule
Random (Pass@$k$) & 47.1\,{\scriptsize (0.00)} & 59.3\,{\scriptsize (0.00)} & 69.3\,{\scriptsize (0.00)} & 77.3\,{\scriptsize (0.00)} & 83.9\,{\scriptsize (0.00)} \\
\midrule
Linear Probe & 48.2\,{\scriptsize (0.26)} & 59.8\,{\scriptsize (0.23)} & 69.4\,{\scriptsize (0.26)} & 77.6\,{\scriptsize (0.18)} & 84.0\,{\scriptsize (0.16)} \\
RepExp & 47.3\,{\scriptsize (0.06)} & 57.9\,{\scriptsize (0.07)} & 68.9\,{\scriptsize (0.05)} & 76.8\,{\scriptsize (0.05)} & 84.7\,{\scriptsize (0.03)} \\
PKTP & 48.1\,{\scriptsize (0.32)} & 59.4\,{\scriptsize (0.28)} & 69.4\,{\scriptsize (0.20)} & 78.3\,{\scriptsize (0.19)} & 85.2\,{\scriptsize (0.17)} \\
\bottomrule
\end{tabular}}
\end{minipage}
\hfill
\begin{minipage}[t]{0.49\linewidth}\centering
{\small\textbf{DeepSeek-R1-Distill-Qwen-1.5B}}\\[2pt]
\resizebox{\linewidth}{!}{%
\begin{tabular}{@{}l*{5}{r}@{}}
\toprule
 & @1 & @2 & @4 & @8 & @16 \\
\midrule
Random (Pass@$k$) & 27.4\,{\scriptsize (0.00)} & 38.4\,{\scriptsize (0.00)} & 49.5\,{\scriptsize (0.00)} & 60.2\,{\scriptsize (0.00)} & 70.3\,{\scriptsize (0.00)} \\
\midrule
Linear Probe & 30.1\,{\scriptsize (0.22)} & 39.8\,{\scriptsize (0.21)} & 49.9\,{\scriptsize (0.20)} & 59.4\,{\scriptsize (0.22)} & 69.7\,{\scriptsize (0.24)} \\
RepExp & 27.3\,{\scriptsize (0.08)} & 38.5\,{\scriptsize (0.07)} & 49.8\,{\scriptsize (0.04)} & 60.6\,{\scriptsize (0.06)} & 72.6\,{\scriptsize (0.03)} \\
PKTP & 30.3\,{\scriptsize (0.28)} & 41.4\,{\scriptsize (0.18)} & 52.9\,{\scriptsize (0.28)} & 62.5\,{\scriptsize (0.21)} & 71.7\,{\scriptsize (0.23)} \\
\bottomrule
\end{tabular}}
\end{minipage}
\par\vspace{6pt}
\begin{minipage}[t]{0.49\linewidth}\centering
{\small\textbf{Qwen3-8B}}\\[2pt]
\resizebox{\linewidth}{!}{%
\begin{tabular}{@{}l*{5}{r}@{}}
\toprule
 & @1 & @2 & @4 & @8 & @16 \\
\midrule
Random (Pass@$k$) & 46.9\,{\scriptsize (0.00)} & 58.0\,{\scriptsize (0.00)} & 67.2\,{\scriptsize (0.00)} & 75.2\,{\scriptsize (0.00)} & 82.3\,{\scriptsize (0.00)} \\
\midrule
Linear Probe & 49.2\,{\scriptsize (0.35)} & 59.2\,{\scriptsize (0.28)} & 67.4\,{\scriptsize (0.23)} & 74.4\,{\scriptsize (0.20)} & 81.2\,{\scriptsize (0.19)} \\
RepExp & 46.9\,{\scriptsize (0.09)} & 57.4\,{\scriptsize (0.06)} & 68.1\,{\scriptsize (0.07)} & 75.9\,{\scriptsize (0.04)} & 83.0\,{\scriptsize (0.04)} \\
PKTP & 47.3\,{\scriptsize (0.23)} & 58.7\,{\scriptsize (0.33)} & 68.6\,{\scriptsize (0.29)} & 76.0\,{\scriptsize (0.22)} & 83.0\,{\scriptsize (0.26)} \\
\bottomrule
\end{tabular}}
\end{minipage}
\hfill
\begin{minipage}[t]{0.49\linewidth}\centering
{\small\textbf{OpenMath-Nemotron-7B}}\\[2pt]
\resizebox{\linewidth}{!}{%
\begin{tabular}{@{}l*{5}{r}@{}}
\toprule
 & @1 & @2 & @4 & @8 & @16 \\
\midrule
Random (Pass@$k$) & 44.0\,{\scriptsize (0.00)} & 55.5\,{\scriptsize (0.00)} & 66.0\,{\scriptsize (0.00)} & 75.2\,{\scriptsize (0.00)} & 83.1\,{\scriptsize (0.00)} \\
\midrule
Linear Probe & 46.3\,{\scriptsize (0.21)} & 57.5\,{\scriptsize (0.22)} & 67.5\,{\scriptsize (0.23)} & 76.3\,{\scriptsize (0.16)} & 83.6\,{\scriptsize (0.17)} \\
RepExp & 44.0\,{\scriptsize (0.11)} & 56.8\,{\scriptsize (0.07)} & 66.4\,{\scriptsize (0.07)} & 76.3\,{\scriptsize (0.04)} & 84.7\,{\scriptsize (0.05)} \\
PKTP & 47.4\,{\scriptsize (0.29)} & 59.1\,{\scriptsize (0.35)} & 68.2\,{\scriptsize (0.29)} & 76.7\,{\scriptsize (0.24)} & 84.3\,{\scriptsize (0.21)} \\
\bottomrule
\end{tabular}}
\end{minipage}
\par\vspace{6pt}
\begin{minipage}[t]{0.49\linewidth}\centering
{\small\textbf{DeepSeek-R1-Distill-Qwen-7B}}\\[2pt]
\resizebox{\linewidth}{!}{%
\begin{tabular}{@{}l*{5}{r}@{}}
\toprule
 & @1 & @2 & @4 & @8 & @16 \\
\midrule
Random (Pass@$k$) & 37.5\,{\scriptsize (0.00)} & 48.3\,{\scriptsize (0.00)} & 58.5\,{\scriptsize (0.00)} & 68.2\,{\scriptsize (0.00)} & 77.5\,{\scriptsize (0.00)} \\
\midrule
Linear Probe & 37.4\,{\scriptsize (0.17)} & 46.7\,{\scriptsize (0.27)} & 56.4\,{\scriptsize (0.25)} & 66.0\,{\scriptsize (0.22)} & 76.6\,{\scriptsize (0.32)} \\
RepExp & 37.4\,{\scriptsize (0.08)} & 47.2\,{\scriptsize (0.08)} & 58.6\,{\scriptsize (0.07)} & 70.0\,{\scriptsize (0.04)} & 79.7\,{\scriptsize (0.04)} \\
PKTP & 38.6\,{\scriptsize (0.27)} & 49.5\,{\scriptsize (0.24)} & 59.7\,{\scriptsize (0.25)} & 70.0\,{\scriptsize (0.31)} & 79.3\,{\scriptsize (0.28)} \\
\bottomrule
\end{tabular}}
\end{minipage}
\hfill
\begin{minipage}[t]{0.49\linewidth}\centering
{\small\textbf{Skywork-OR1-Math-7B}}\\[2pt]
\resizebox{\linewidth}{!}{%
\begin{tabular}{@{}l*{5}{r}@{}}
\toprule
 & @1 & @2 & @4 & @8 & @16 \\
\midrule
Random (Pass@$k$) & 42.5\,{\scriptsize (0.00)} & 53.1\,{\scriptsize (0.00)} & 62.7\,{\scriptsize (0.00)} & 71.5\,{\scriptsize (0.00)} & 79.8\,{\scriptsize (0.00)} \\
\midrule
Linear Probe & 44.4\,{\scriptsize (0.24)} & 54.2\,{\scriptsize (0.22)} & 62.9\,{\scriptsize (0.18)} & 71.3\,{\scriptsize (0.29)} & 79.4\,{\scriptsize (0.22)} \\
RepExp & 42.5\,{\scriptsize (0.09)} & 54.5\,{\scriptsize (0.09)} & 64.5\,{\scriptsize (0.06)} & 72.1\,{\scriptsize (0.04)} & 79.4\,{\scriptsize (0.05)} \\
PKTP & 44.7\,{\scriptsize (0.34)} & 55.7\,{\scriptsize (0.22)} & 65.6\,{\scriptsize (0.26)} & 73.2\,{\scriptsize (0.29)} & 80.3\,{\scriptsize (0.22)} \\
\bottomrule
\end{tabular}}
\end{minipage}
\end{table}

\begin{table}[H]
\centering
\setlength{\tabcolsep}{3pt}
\caption{Per-benchmark pass@$k$ (\%), Qwen3-0.6B.}
\label{tab:app-qwen3-0.6b}
\begin{minipage}[t]{0.49\linewidth}\centering
{\small\textbf{AIME}}\\[2pt]
\resizebox{\linewidth}{!}{%
\begin{tabular}{@{}l*{5}{r}@{}}
\toprule
 & @1 & @2 & @4 & @8 & @16 \\
\midrule
Random (Pass@$k$) & 23.2\,{\scriptsize (0.00)} & 33.8\,{\scriptsize (0.00)} & 45.5\,{\scriptsize (0.00)} & 58.3\,{\scriptsize (0.00)} & 71.2\,{\scriptsize (0.00)} \\
\midrule
Linear Probe & 26.6\,{\scriptsize (0.70)} & 35.6\,{\scriptsize (0.61)} & 43.6\,{\scriptsize (0.84)} & 56.5\,{\scriptsize (0.89)} & 72.5\,{\scriptsize (0.79)} \\
RepExp & 23.3\,{\scriptsize (0.28)} & 29.5\,{\scriptsize (0.23)} & 41.4\,{\scriptsize (0.20)} & 53.7\,{\scriptsize (0.22)} & 66.5\,{\scriptsize (0.15)} \\
PKTP & 25.2\,{\scriptsize (1.07)} & 36.7\,{\scriptsize (1.01)} & 50.2\,{\scriptsize (1.11)} & 64.2\,{\scriptsize (0.79)} & 75.5\,{\scriptsize (0.78)} \\
\bottomrule
\end{tabular}}
\end{minipage}
\hfill
\begin{minipage}[t]{0.49\linewidth}\centering
{\small\textbf{HMMT}}\\[2pt]
\resizebox{\linewidth}{!}{%
\begin{tabular}{@{}l*{5}{r}@{}}
\toprule
 & @1 & @2 & @4 & @8 & @16 \\
\midrule
Random (Pass@$k$) & 20.8\,{\scriptsize (0.00)} & 29.9\,{\scriptsize (0.00)} & 39.9\,{\scriptsize (0.00)} & 50.8\,{\scriptsize (0.00)} & 62.1\,{\scriptsize (0.00)} \\
\midrule
Linear Probe & 27.1\,{\scriptsize (0.68)} & 33.3\,{\scriptsize (0.84)} & 43.0\,{\scriptsize (0.69)} & 49.9\,{\scriptsize (0.60)} & 60.5\,{\scriptsize (0.60)} \\
RepExp & 20.3\,{\scriptsize (0.24)} & 30.9\,{\scriptsize (0.25)} & 39.1\,{\scriptsize (0.20)} & 47.7\,{\scriptsize (0.20)} & 60.0\,{\scriptsize (0.14)} \\
PKTP & 26.3\,{\scriptsize (0.95)} & 39.0\,{\scriptsize (0.75)} & 48.5\,{\scriptsize (0.83)} & 55.0\,{\scriptsize (0.52)} & 63.4\,{\scriptsize (0.56)} \\
\bottomrule
\end{tabular}}
\end{minipage}
\par\vspace{6pt}
\begin{minipage}[t]{0.49\linewidth}\centering
{\small\textbf{BeyondAIME}}\\[2pt]
\resizebox{\linewidth}{!}{%
\begin{tabular}{@{}l*{5}{r}@{}}
\toprule
 & @1 & @2 & @4 & @8 & @16 \\
\midrule
Random (Pass@$k$) & 11.7\,{\scriptsize (0.00)} & 19.5\,{\scriptsize (0.00)} & 29.6\,{\scriptsize (0.00)} & 41.2\,{\scriptsize (0.00)} & 54.2\,{\scriptsize (0.00)} \\
\midrule
Linear Probe & 15.9\,{\scriptsize (0.76)} & 22.6\,{\scriptsize (0.76)} & 34.1\,{\scriptsize (0.86)} & 44.3\,{\scriptsize (1.09)} & 52.0\,{\scriptsize (0.89)} \\
RepExp & 11.7\,{\scriptsize (0.28)} & 19.2\,{\scriptsize (0.22)} & 30.8\,{\scriptsize (0.24)} & 44.8\,{\scriptsize (0.22)} & 58.8\,{\scriptsize (0.14)} \\
PKTP & 12.8\,{\scriptsize (0.72)} & 19.2\,{\scriptsize (0.76)} & 28.0\,{\scriptsize (0.76)} & 41.2\,{\scriptsize (1.02)} & 54.8\,{\scriptsize (0.68)} \\
\bottomrule
\end{tabular}}
\end{minipage}
\hfill
\begin{minipage}[t]{0.49\linewidth}\centering
{\small\textbf{OlymMATH (Easy)}}\\[2pt]
\resizebox{\linewidth}{!}{%
\begin{tabular}{@{}l*{5}{r}@{}}
\toprule
 & @1 & @2 & @4 & @8 & @16 \\
\midrule
Random (Pass@$k$) & 13.3\,{\scriptsize (0.00)} & 21.4\,{\scriptsize (0.00)} & 31.7\,{\scriptsize (0.00)} & 44.0\,{\scriptsize (0.00)} & 57.8\,{\scriptsize (0.00)} \\
\midrule
Linear Probe & 13.1\,{\scriptsize (0.55)} & 19.6\,{\scriptsize (0.63)} & 29.0\,{\scriptsize (0.68)} & 41.2\,{\scriptsize (0.74)} & 53.0\,{\scriptsize (0.87)} \\
RepExp & 13.4\,{\scriptsize (0.18)} & 21.2\,{\scriptsize (0.18)} & 31.6\,{\scriptsize (0.18)} & 42.3\,{\scriptsize (0.21)} & 58.5\,{\scriptsize (0.15)} \\
PKTP & 13.3\,{\scriptsize (0.51)} & 19.5\,{\scriptsize (0.71)} & 28.5\,{\scriptsize (0.77)} & 43.1\,{\scriptsize (1.11)} & 59.7\,{\scriptsize (0.80)} \\
\bottomrule
\end{tabular}}
\end{minipage}
\par\vspace{6pt}
\begin{minipage}[t]{0.49\linewidth}\centering
{\small\textbf{Omni-MATH}}\\[2pt]
\resizebox{\linewidth}{!}{%
\begin{tabular}{@{}l*{5}{r}@{}}
\toprule
 & @1 & @2 & @4 & @8 & @16 \\
\midrule
Random (Pass@$k$) & 28.3\,{\scriptsize (0.00)} & 39.4\,{\scriptsize (0.00)} & 50.2\,{\scriptsize (0.00)} & 59.7\,{\scriptsize (0.00)} & 68.8\,{\scriptsize (0.00)} \\
\midrule
Linear Probe & 27.1\,{\scriptsize (0.63)} & 36.7\,{\scriptsize (0.77)} & 48.4\,{\scriptsize (0.41)} & 60.9\,{\scriptsize (0.61)} & 70.1\,{\scriptsize (0.51)} \\
RepExp & 28.0\,{\scriptsize (0.20)} & 39.7\,{\scriptsize (0.17)} & 52.6\,{\scriptsize (0.12)} & 59.8\,{\scriptsize (0.11)} & 67.1\,{\scriptsize (0.10)} \\
PKTP & 28.7\,{\scriptsize (0.72)} & 40.2\,{\scriptsize (0.72)} & 52.5\,{\scriptsize (0.75)} & 62.5\,{\scriptsize (0.41)} & 71.9\,{\scriptsize (0.58)} \\
\bottomrule
\end{tabular}}
\end{minipage}
\hfill
\begin{minipage}[t]{0.49\linewidth}\centering
{\small\textbf{RIMO-N}}\\[2pt]
\resizebox{\linewidth}{!}{%
\begin{tabular}{@{}l*{5}{r}@{}}
\toprule
 & @1 & @2 & @4 & @8 & @16 \\
\midrule
Random (Pass@$k$) & 38.9\,{\scriptsize (0.00)} & 53.0\,{\scriptsize (0.00)} & 65.5\,{\scriptsize (0.00)} & 76.0\,{\scriptsize (0.00)} & 84.3\,{\scriptsize (0.00)} \\
\midrule
Linear Probe & 36.5\,{\scriptsize (0.60)} & 49.9\,{\scriptsize (0.66)} & 61.6\,{\scriptsize (0.55)} & 75.7\,{\scriptsize (0.72)} & 86.1\,{\scriptsize (0.50)} \\
RepExp & 38.9\,{\scriptsize (0.28)} & 50.9\,{\scriptsize (0.20)} & 63.3\,{\scriptsize (0.21)} & 77.2\,{\scriptsize (0.11)} & 84.2\,{\scriptsize (0.08)} \\
PKTP & 41.0\,{\scriptsize (0.86)} & 56.5\,{\scriptsize (0.69)} & 67.6\,{\scriptsize (0.49)} & 76.5\,{\scriptsize (0.41)} & 83.2\,{\scriptsize (0.41)} \\
\bottomrule
\end{tabular}}
\end{minipage}
\end{table}

\begin{table}[H]
\centering
\setlength{\tabcolsep}{3pt}
\caption{Per-benchmark pass@$k$ (\%), Qwen3-1.7B.}
\label{tab:app-qwen3-1.7b}
\begin{minipage}[t]{0.49\linewidth}\centering
{\small\textbf{AIME}}\\[2pt]
\resizebox{\linewidth}{!}{%
\begin{tabular}{@{}l*{5}{r}@{}}
\toprule
 & @1 & @2 & @4 & @8 & @16 \\
\midrule
Random (Pass@$k$) & 47.0\,{\scriptsize (0.00)} & 59.6\,{\scriptsize (0.00)} & 70.4\,{\scriptsize (0.00)} & 78.9\,{\scriptsize (0.00)} & 85.4\,{\scriptsize (0.00)} \\
\midrule
Linear Probe & 51.7\,{\scriptsize (0.60)} & 63.3\,{\scriptsize (0.65)} & 73.2\,{\scriptsize (0.56)} & 81.0\,{\scriptsize (0.57)} & 87.3\,{\scriptsize (0.31)} \\
RepExp & 47.4\,{\scriptsize (0.21)} & 60.7\,{\scriptsize (0.23)} & 72.2\,{\scriptsize (0.19)} & 85.6\,{\scriptsize (0.09)} & 90.6\,{\scriptsize (0.07)} \\
PKTP & 44.0\,{\scriptsize (0.96)} & 58.3\,{\scriptsize (0.62)} & 71.5\,{\scriptsize (0.57)} & 80.3\,{\scriptsize (0.44)} & 87.2\,{\scriptsize (0.41)} \\
\bottomrule
\end{tabular}}
\end{minipage}
\hfill
\begin{minipage}[t]{0.49\linewidth}\centering
{\small\textbf{HMMT}}\\[2pt]
\resizebox{\linewidth}{!}{%
\begin{tabular}{@{}l*{5}{r}@{}}
\toprule
 & @1 & @2 & @4 & @8 & @16 \\
\midrule
Random (Pass@$k$) & 41.9\,{\scriptsize (0.00)} & 52.4\,{\scriptsize (0.00)} & 62.7\,{\scriptsize (0.00)} & 72.7\,{\scriptsize (0.00)} & 82.2\,{\scriptsize (0.00)} \\
\midrule
Linear Probe & 45.8\,{\scriptsize (0.58)} & 56.0\,{\scriptsize (0.63)} & 62.1\,{\scriptsize (0.48)} & 69.9\,{\scriptsize (0.51)} & 78.1\,{\scriptsize (0.63)} \\
RepExp & 41.7\,{\scriptsize (0.21)} & 51.7\,{\scriptsize (0.22)} & 64.2\,{\scriptsize (0.18)} & 76.7\,{\scriptsize (0.18)} & 85.2\,{\scriptsize (0.09)} \\
PKTP & 42.7\,{\scriptsize (0.83)} & 54.8\,{\scriptsize (0.91)} & 66.6\,{\scriptsize (0.67)} & 74.7\,{\scriptsize (0.51)} & 82.1\,{\scriptsize (0.65)} \\
\bottomrule
\end{tabular}}
\end{minipage}
\par\vspace{6pt}
\begin{minipage}[t]{0.49\linewidth}\centering
{\small\textbf{BeyondAIME}}\\[2pt]
\resizebox{\linewidth}{!}{%
\begin{tabular}{@{}l*{5}{r}@{}}
\toprule
 & @1 & @2 & @4 & @8 & @16 \\
\midrule
Random (Pass@$k$) & 35.0\,{\scriptsize (0.00)} & 45.7\,{\scriptsize (0.00)} & 55.8\,{\scriptsize (0.00)} & 65.8\,{\scriptsize (0.00)} & 75.7\,{\scriptsize (0.00)} \\
\midrule
Linear Probe & 34.2\,{\scriptsize (0.67)} & 44.4\,{\scriptsize (0.81)} & 55.6\,{\scriptsize (0.88)} & 65.9\,{\scriptsize (0.60)} & 76.2\,{\scriptsize (0.60)} \\
RepExp & 34.9\,{\scriptsize (0.23)} & 41.6\,{\scriptsize (0.24)} & 55.2\,{\scriptsize (0.14)} & 66.7\,{\scriptsize (0.14)} & 77.3\,{\scriptsize (0.09)} \\
PKTP & 32.0\,{\scriptsize (0.79)} & 43.7\,{\scriptsize (0.70)} & 54.3\,{\scriptsize (0.74)} & 65.2\,{\scriptsize (0.61)} & 76.0\,{\scriptsize (0.75)} \\
\bottomrule
\end{tabular}}
\end{minipage}
\hfill
\begin{minipage}[t]{0.49\linewidth}\centering
{\small\textbf{OlymMATH (Easy)}}\\[2pt]
\resizebox{\linewidth}{!}{%
\begin{tabular}{@{}l*{5}{r}@{}}
\toprule
 & @1 & @2 & @4 & @8 & @16 \\
\midrule
Random (Pass@$k$) & 42.4\,{\scriptsize (0.00)} & 57.5\,{\scriptsize (0.00)} & 71.3\,{\scriptsize (0.00)} & 81.8\,{\scriptsize (0.00)} & 89.6\,{\scriptsize (0.00)} \\
\midrule
Linear Probe & 41.8\,{\scriptsize (0.54)} & 56.1\,{\scriptsize (0.70)} & 72.3\,{\scriptsize (0.65)} & 84.2\,{\scriptsize (0.51)} & 89.7\,{\scriptsize (0.33)} \\
RepExp & 42.6\,{\scriptsize (0.19)} & 56.6\,{\scriptsize (0.18)} & 71.3\,{\scriptsize (0.14)} & 79.7\,{\scriptsize (0.10)} & 89.0\,{\scriptsize (0.09)} \\
PKTP & 42.9\,{\scriptsize (0.71)} & 59.2\,{\scriptsize (0.84)} & 73.4\,{\scriptsize (0.56)} & 84.5\,{\scriptsize (0.46)} & 91.4\,{\scriptsize (0.37)} \\
\bottomrule
\end{tabular}}
\end{minipage}
\par\vspace{6pt}
\begin{minipage}[t]{0.49\linewidth}\centering
{\small\textbf{Omni-MATH}}\\[2pt]
\resizebox{\linewidth}{!}{%
\begin{tabular}{@{}l*{5}{r}@{}}
\toprule
 & @1 & @2 & @4 & @8 & @16 \\
\midrule
Random (Pass@$k$) & 31.9\,{\scriptsize (0.00)} & 42.0\,{\scriptsize (0.00)} & 51.8\,{\scriptsize (0.00)} & 61.2\,{\scriptsize (0.00)} & 70.3\,{\scriptsize (0.00)} \\
\midrule
Linear Probe & 27.8\,{\scriptsize (0.44)} & 38.5\,{\scriptsize (0.56)} & 49.7\,{\scriptsize (0.53)} & 59.5\,{\scriptsize (0.25)} & 66.4\,{\scriptsize (0.50)} \\
RepExp & 31.9\,{\scriptsize (0.17)} & 44.2\,{\scriptsize (0.16)} & 55.1\,{\scriptsize (0.11)} & 64.4\,{\scriptsize (0.09)} & 70.1\,{\scriptsize (0.08)} \\
PKTP & 31.3\,{\scriptsize (0.70)} & 42.1\,{\scriptsize (0.41)} & 51.2\,{\scriptsize (0.38)} & 61.2\,{\scriptsize (0.47)} & 70.2\,{\scriptsize (0.26)} \\
\bottomrule
\end{tabular}}
\end{minipage}
\hfill
\begin{minipage}[t]{0.49\linewidth}\centering
{\small\textbf{RIMO-N}}\\[2pt]
\resizebox{\linewidth}{!}{%
\begin{tabular}{@{}l*{5}{r}@{}}
\toprule
 & @1 & @2 & @4 & @8 & @16 \\
\midrule
Random (Pass@$k$) & 41.5\,{\scriptsize (0.00)} & 54.2\,{\scriptsize (0.00)} & 65.8\,{\scriptsize (0.00)} & 76.3\,{\scriptsize (0.00)} & 84.9\,{\scriptsize (0.00)} \\
\midrule
Linear Probe & 48.8\,{\scriptsize (0.70)} & 58.9\,{\scriptsize (0.52)} & 68.6\,{\scriptsize (0.64)} & 76.3\,{\scriptsize (0.65)} & 84.6\,{\scriptsize (0.40)} \\
RepExp & 41.6\,{\scriptsize (0.24)} & 55.8\,{\scriptsize (0.20)} & 66.4\,{\scriptsize (0.19)} & 76.9\,{\scriptsize (0.16)} & 85.0\,{\scriptsize (0.10)} \\
PKTP & 45.7\,{\scriptsize (0.76)} & 60.8\,{\scriptsize (0.59)} & 72.3\,{\scriptsize (0.65)} & 79.2\,{\scriptsize (0.48)} & 85.4\,{\scriptsize (0.55)} \\
\bottomrule
\end{tabular}}
\end{minipage}
\end{table}

\begin{table}[H]
\centering
\setlength{\tabcolsep}{3pt}
\caption{Per-benchmark pass@$k$ (\%), OpenMath-Nemotron-1.5B.}
\label{tab:app-nemotron-1.5b}
\begin{minipage}[t]{0.49\linewidth}\centering
{\small\textbf{AIME}}\\[2pt]
\resizebox{\linewidth}{!}{%
\begin{tabular}{@{}l*{5}{r}@{}}
\toprule
 & @1 & @2 & @4 & @8 & @16 \\
\midrule
Random (Pass@$k$) & 59.1\,{\scriptsize (0.00)} & 72.1\,{\scriptsize (0.00)} & 80.8\,{\scriptsize (0.00)} & 86.2\,{\scriptsize (0.00)} & 90.1\,{\scriptsize (0.00)} \\
\midrule
Linear Probe & 63.5\,{\scriptsize (0.68)} & 75.4\,{\scriptsize (0.43)} & 83.3\,{\scriptsize (0.45)} & 89.1\,{\scriptsize (0.37)} & 92.1\,{\scriptsize (0.25)} \\
RepExp & 59.1\,{\scriptsize (0.17)} & 70.7\,{\scriptsize (0.15)} & 77.6\,{\scriptsize (0.11)} & 84.7\,{\scriptsize (0.10)} & 91.6\,{\scriptsize (0.06)} \\
PKTP & 62.1\,{\scriptsize (0.61)} & 74.1\,{\scriptsize (0.75)} & 82.3\,{\scriptsize (0.41)} & 85.9\,{\scriptsize (0.33)} & 89.9\,{\scriptsize (0.36)} \\
\bottomrule
\end{tabular}}
\end{minipage}
\hfill
\begin{minipage}[t]{0.49\linewidth}\centering
{\small\textbf{HMMT}}\\[2pt]
\resizebox{\linewidth}{!}{%
\begin{tabular}{@{}l*{5}{r}@{}}
\toprule
 & @1 & @2 & @4 & @8 & @16 \\
\midrule
Random (Pass@$k$) & 41.3\,{\scriptsize (0.00)} & 51.2\,{\scriptsize (0.00)} & 60.5\,{\scriptsize (0.00)} & 69.1\,{\scriptsize (0.00)} & 76.9\,{\scriptsize (0.00)} \\
\midrule
Linear Probe & 45.5\,{\scriptsize (0.60)} & 55.2\,{\scriptsize (0.69)} & 63.9\,{\scriptsize (0.69)} & 71.6\,{\scriptsize (0.59)} & 77.4\,{\scriptsize (0.35)} \\
RepExp & 41.5\,{\scriptsize (0.26)} & 51.5\,{\scriptsize (0.24)} & 63.7\,{\scriptsize (0.16)} & 72.2\,{\scriptsize (0.15)} & 80.0\,{\scriptsize (0.10)} \\
PKTP & 49.3\,{\scriptsize (0.64)} & 56.4\,{\scriptsize (0.71)} & 65.6\,{\scriptsize (0.72)} & 74.2\,{\scriptsize (0.45)} & 83.0\,{\scriptsize (0.40)} \\
\bottomrule
\end{tabular}}
\end{minipage}
\par\vspace{6pt}
\begin{minipage}[t]{0.49\linewidth}\centering
{\small\textbf{BeyondAIME}}\\[2pt]
\resizebox{\linewidth}{!}{%
\begin{tabular}{@{}l*{5}{r}@{}}
\toprule
 & @1 & @2 & @4 & @8 & @16 \\
\midrule
Random (Pass@$k$) & 43.8\,{\scriptsize (0.00)} & 54.7\,{\scriptsize (0.00)} & 63.8\,{\scriptsize (0.00)} & 71.5\,{\scriptsize (0.00)} & 78.1\,{\scriptsize (0.00)} \\
\midrule
Linear Probe & 46.0\,{\scriptsize (0.71)} & 56.8\,{\scriptsize (0.67)} & 65.7\,{\scriptsize (0.67)} & 74.1\,{\scriptsize (0.46)} & 81.1\,{\scriptsize (0.42)} \\
RepExp & 43.8\,{\scriptsize (0.16)} & 52.9\,{\scriptsize (0.14)} & 63.4\,{\scriptsize (0.12)} & 72.3\,{\scriptsize (0.13)} & 79.6\,{\scriptsize (0.12)} \\
PKTP & 43.8\,{\scriptsize (0.51)} & 53.9\,{\scriptsize (0.63)} & 62.7\,{\scriptsize (0.55)} & 72.2\,{\scriptsize (0.45)} & 80.9\,{\scriptsize (0.56)} \\
\bottomrule
\end{tabular}}
\end{minipage}
\hfill
\begin{minipage}[t]{0.49\linewidth}\centering
{\small\textbf{OlymMATH (Easy)}}\\[2pt]
\resizebox{\linewidth}{!}{%
\begin{tabular}{@{}l*{5}{r}@{}}
\toprule
 & @1 & @2 & @4 & @8 & @16 \\
\midrule
Random (Pass@$k$) & 54.7\,{\scriptsize (0.00)} & 69.9\,{\scriptsize (0.00)} & 81.4\,{\scriptsize (0.00)} & 88.7\,{\scriptsize (0.00)} & 92.9\,{\scriptsize (0.00)} \\
\midrule
Linear Probe & 58.8\,{\scriptsize (0.61)} & 69.2\,{\scriptsize (0.59)} & 79.4\,{\scriptsize (0.50)} & 88.4\,{\scriptsize (0.48)} & 93.9\,{\scriptsize (0.20)} \\
RepExp & 54.9\,{\scriptsize (0.16)} & 66.1\,{\scriptsize (0.16)} & 75.4\,{\scriptsize (0.14)} & 83.2\,{\scriptsize (0.13)} & 89.9\,{\scriptsize (0.06)} \\
PKTP & 59.4\,{\scriptsize (0.53)} & 72.2\,{\scriptsize (0.61)} & 80.8\,{\scriptsize (0.42)} & 87.6\,{\scriptsize (0.32)} & 91.5\,{\scriptsize (0.24)} \\
\bottomrule
\end{tabular}}
\end{minipage}
\par\vspace{6pt}
\begin{minipage}[t]{0.49\linewidth}\centering
{\small\textbf{Omni-MATH}}\\[2pt]
\resizebox{\linewidth}{!}{%
\begin{tabular}{@{}l*{5}{r}@{}}
\toprule
 & @1 & @2 & @4 & @8 & @16 \\
\midrule
Random (Pass@$k$) & 41.2\,{\scriptsize (0.00)} & 52.5\,{\scriptsize (0.00)} & 62.8\,{\scriptsize (0.00)} & 72.1\,{\scriptsize (0.00)} & 80.6\,{\scriptsize (0.00)} \\
\midrule
Linear Probe & 37.4\,{\scriptsize (0.38)} & 50.3\,{\scriptsize (0.39)} & 61.2\,{\scriptsize (0.47)} & 70.1\,{\scriptsize (0.32)} & 78.6\,{\scriptsize (0.35)} \\
RepExp & 41.2\,{\scriptsize (0.15)} & 51.8\,{\scriptsize (0.15)} & 65.0\,{\scriptsize (0.13)} & 72.1\,{\scriptsize (0.08)} & 81.9\,{\scriptsize (0.04)} \\
PKTP & 38.0\,{\scriptsize (0.59)} & 50.5\,{\scriptsize (0.49)} & 62.9\,{\scriptsize (0.66)} & 74.6\,{\scriptsize (0.45)} & 82.2\,{\scriptsize (0.30)} \\
\bottomrule
\end{tabular}}
\end{minipage}
\hfill
\begin{minipage}[t]{0.49\linewidth}\centering
{\small\textbf{RIMO-N}}\\[2pt]
\resizebox{\linewidth}{!}{%
\begin{tabular}{@{}l*{5}{r}@{}}
\toprule
 & @1 & @2 & @4 & @8 & @16 \\
\midrule
Random (Pass@$k$) & 44.7\,{\scriptsize (0.00)} & 56.7\,{\scriptsize (0.00)} & 67.2\,{\scriptsize (0.00)} & 75.9\,{\scriptsize (0.00)} & 83.6\,{\scriptsize (0.00)} \\
\midrule
Linear Probe & 43.8\,{\scriptsize (0.80)} & 56.4\,{\scriptsize (0.82)} & 66.0\,{\scriptsize (0.55)} & 74.4\,{\scriptsize (0.51)} & 81.7\,{\scriptsize (0.59)} \\
RepExp & 45.0\,{\scriptsize (0.21)} & 55.9\,{\scriptsize (0.22)} & 69.0\,{\scriptsize (0.19)} & 77.5\,{\scriptsize (0.13)} & 85.7\,{\scriptsize (0.09)} \\
PKTP & 41.5\,{\scriptsize (0.80)} & 52.5\,{\scriptsize (0.79)} & 63.1\,{\scriptsize (0.78)} & 74.9\,{\scriptsize (0.52)} & 84.2\,{\scriptsize (0.41)} \\
\bottomrule
\end{tabular}}
\end{minipage}
\end{table}

\begin{table}[H]
\centering
\setlength{\tabcolsep}{3pt}
\caption{Per-benchmark pass@$k$ (\%), DeepSeek-R1-Distill-Qwen-1.5B.}
\label{tab:app-r1d-1.5b}
\begin{minipage}[t]{0.49\linewidth}\centering
{\small\textbf{AIME}}\\[2pt]
\resizebox{\linewidth}{!}{%
\begin{tabular}{@{}l*{5}{r}@{}}
\toprule
 & @1 & @2 & @4 & @8 & @16 \\
\midrule
Random (Pass@$k$) & 31.8\,{\scriptsize (0.00)} & 42.5\,{\scriptsize (0.00)} & 53.1\,{\scriptsize (0.00)} & 63.4\,{\scriptsize (0.00)} & 73.2\,{\scriptsize (0.00)} \\
\midrule
Linear Probe & 39.3\,{\scriptsize (0.55)} & 47.2\,{\scriptsize (0.41)} & 57.0\,{\scriptsize (0.57)} & 65.4\,{\scriptsize (0.59)} & 75.3\,{\scriptsize (0.50)} \\
RepExp & 31.9\,{\scriptsize (0.21)} & 38.9\,{\scriptsize (0.13)} & 50.2\,{\scriptsize (0.15)} & 63.1\,{\scriptsize (0.18)} & 73.8\,{\scriptsize (0.11)} \\
PKTP & 37.2\,{\scriptsize (0.67)} & 48.9\,{\scriptsize (0.84)} & 59.3\,{\scriptsize (0.73)} & 67.8\,{\scriptsize (0.63)} & 76.1\,{\scriptsize (0.42)} \\
\bottomrule
\end{tabular}}
\end{minipage}
\hfill
\begin{minipage}[t]{0.49\linewidth}\centering
{\small\textbf{HMMT}}\\[2pt]
\resizebox{\linewidth}{!}{%
\begin{tabular}{@{}l*{5}{r}@{}}
\toprule
 & @1 & @2 & @4 & @8 & @16 \\
\midrule
Random (Pass@$k$) & 27.0\,{\scriptsize (0.00)} & 37.1\,{\scriptsize (0.00)} & 47.5\,{\scriptsize (0.00)} & 57.3\,{\scriptsize (0.00)} & 66.9\,{\scriptsize (0.00)} \\
\midrule
Linear Probe & 34.6\,{\scriptsize (0.92)} & 45.2\,{\scriptsize (0.73)} & 54.2\,{\scriptsize (0.62)} & 60.1\,{\scriptsize (0.50)} & 68.3\,{\scriptsize (0.65)} \\
RepExp & 26.8\,{\scriptsize (0.34)} & 36.1\,{\scriptsize (0.28)} & 47.7\,{\scriptsize (0.15)} & 54.0\,{\scriptsize (0.11)} & 67.0\,{\scriptsize (0.14)} \\
PKTP & 34.1\,{\scriptsize (0.73)} & 43.3\,{\scriptsize (0.72)} & 49.5\,{\scriptsize (0.69)} & 56.0\,{\scriptsize (0.65)} & 64.1\,{\scriptsize (0.72)} \\
\bottomrule
\end{tabular}}
\end{minipage}
\par\vspace{6pt}
\begin{minipage}[t]{0.49\linewidth}\centering
{\small\textbf{BeyondAIME}}\\[2pt]
\resizebox{\linewidth}{!}{%
\begin{tabular}{@{}l*{5}{r}@{}}
\toprule
 & @1 & @2 & @4 & @8 & @16 \\
\midrule
Random (Pass@$k$) & 16.2\,{\scriptsize (0.00)} & 24.6\,{\scriptsize (0.00)} & 34.5\,{\scriptsize (0.00)} & 45.9\,{\scriptsize (0.00)} & 58.4\,{\scriptsize (0.00)} \\
\midrule
Linear Probe & 17.0\,{\scriptsize (0.34)} & 23.2\,{\scriptsize (0.57)} & 29.4\,{\scriptsize (0.74)} & 38.3\,{\scriptsize (0.69)} & 51.3\,{\scriptsize (0.67)} \\
RepExp & 16.0\,{\scriptsize (0.16)} & 22.5\,{\scriptsize (0.13)} & 31.8\,{\scriptsize (0.15)} & 45.9\,{\scriptsize (0.21)} & 62.1\,{\scriptsize (0.12)} \\
PKTP & 19.2\,{\scriptsize (0.55)} & 25.6\,{\scriptsize (0.61)} & 36.3\,{\scriptsize (0.64)} & 46.5\,{\scriptsize (0.54)} & 58.2\,{\scriptsize (0.89)} \\
\bottomrule
\end{tabular}}
\end{minipage}
\hfill
\begin{minipage}[t]{0.49\linewidth}\centering
{\small\textbf{OlymMATH (Easy)}}\\[2pt]
\resizebox{\linewidth}{!}{%
\begin{tabular}{@{}l*{5}{r}@{}}
\toprule
 & @1 & @2 & @4 & @8 & @16 \\
\midrule
Random (Pass@$k$) & 17.2\,{\scriptsize (0.00)} & 27.8\,{\scriptsize (0.00)} & 40.8\,{\scriptsize (0.00)} & 54.4\,{\scriptsize (0.00)} & 67.3\,{\scriptsize (0.00)} \\
\midrule
Linear Probe & 16.1\,{\scriptsize (0.45)} & 27.5\,{\scriptsize (0.68)} & 41.0\,{\scriptsize (0.58)} & 50.9\,{\scriptsize (0.64)} & 62.8\,{\scriptsize (0.48)} \\
RepExp & 16.9\,{\scriptsize (0.22)} & 31.9\,{\scriptsize (0.21)} & 42.4\,{\scriptsize (0.19)} & 53.6\,{\scriptsize (0.15)} & 68.7\,{\scriptsize (0.12)} \\
PKTP & 18.5\,{\scriptsize (0.63)} & 29.0\,{\scriptsize (0.99)} & 41.9\,{\scriptsize (0.91)} & 55.7\,{\scriptsize (0.56)} & 69.9\,{\scriptsize (0.38)} \\
\bottomrule
\end{tabular}}
\end{minipage}
\par\vspace{6pt}
\begin{minipage}[t]{0.49\linewidth}\centering
{\small\textbf{Omni-MATH}}\\[2pt]
\resizebox{\linewidth}{!}{%
\begin{tabular}{@{}l*{5}{r}@{}}
\toprule
 & @1 & @2 & @4 & @8 & @16 \\
\midrule
Random (Pass@$k$) & 29.3\,{\scriptsize (0.00)} & 40.9\,{\scriptsize (0.00)} & 51.5\,{\scriptsize (0.00)} & 60.7\,{\scriptsize (0.00)} & 69.7\,{\scriptsize (0.00)} \\
\midrule
Linear Probe & 32.5\,{\scriptsize (0.42)} & 42.3\,{\scriptsize (0.46)} & 52.0\,{\scriptsize (0.50)} & 62.3\,{\scriptsize (0.51)} & 72.3\,{\scriptsize (0.42)} \\
RepExp & 29.2\,{\scriptsize (0.14)} & 42.5\,{\scriptsize (0.16)} & 54.7\,{\scriptsize (0.12)} & 63.3\,{\scriptsize (0.10)} & 74.5\,{\scriptsize (0.07)} \\
PKTP & 31.4\,{\scriptsize (0.57)} & 44.9\,{\scriptsize (0.65)} & 57.8\,{\scriptsize (0.53)} & 65.2\,{\scriptsize (0.30)} & 71.1\,{\scriptsize (0.40)} \\
\bottomrule
\end{tabular}}
\end{minipage}
\hfill
\begin{minipage}[t]{0.49\linewidth}\centering
{\small\textbf{RIMO-N}}\\[2pt]
\resizebox{\linewidth}{!}{%
\begin{tabular}{@{}l*{5}{r}@{}}
\toprule
 & @1 & @2 & @4 & @8 & @16 \\
\midrule
Random (Pass@$k$) & 39.8\,{\scriptsize (0.00)} & 53.1\,{\scriptsize (0.00)} & 65.2\,{\scriptsize (0.00)} & 75.3\,{\scriptsize (0.00)} & 82.7\,{\scriptsize (0.00)} \\
\midrule
Linear Probe & 40.4\,{\scriptsize (0.48)} & 52.0\,{\scriptsize (0.68)} & 63.0\,{\scriptsize (0.74)} & 74.2\,{\scriptsize (0.56)} & 83.4\,{\scriptsize (0.64)} \\
RepExp & 40.0\,{\scriptsize (0.19)} & 52.9\,{\scriptsize (0.16)} & 64.8\,{\scriptsize (0.12)} & 77.1\,{\scriptsize (0.13)} & 84.2\,{\scriptsize (0.10)} \\
PKTP & 40.6\,{\scriptsize (0.82)} & 53.7\,{\scriptsize (0.67)} & 66.6\,{\scriptsize (0.59)} & 77.5\,{\scriptsize (0.51)} & 85.5\,{\scriptsize (0.39)} \\
\bottomrule
\end{tabular}}
\end{minipage}
\end{table}

\begin{table}[H]
\centering
\setlength{\tabcolsep}{3pt}
\caption{Per-benchmark pass@$k$ (\%), Qwen3-8B.}
\label{tab:app-qwen3-8b}
\begin{minipage}[t]{0.49\linewidth}\centering
{\small\textbf{AIME}}\\[2pt]
\resizebox{\linewidth}{!}{%
\begin{tabular}{@{}l*{5}{r}@{}}
\toprule
 & @1 & @2 & @4 & @8 & @16 \\
\midrule
Random (Pass@$k$) & 70.2\,{\scriptsize (0.00)} & 80.6\,{\scriptsize (0.00)} & 86.3\,{\scriptsize (0.00)} & 90.1\,{\scriptsize (0.00)} & 93.1\,{\scriptsize (0.00)} \\
\midrule
Linear Probe & 69.3\,{\scriptsize (0.64)} & 77.7\,{\scriptsize (0.50)} & 83.6\,{\scriptsize (0.34)} & 87.3\,{\scriptsize (0.32)} & 90.8\,{\scriptsize (0.38)} \\
RepExp & 70.0\,{\scriptsize (0.28)} & 80.9\,{\scriptsize (0.12)} & 86.6\,{\scriptsize (0.10)} & 91.0\,{\scriptsize (0.06)} & 92.8\,{\scriptsize (0.07)} \\
PKTP & 64.1\,{\scriptsize (0.69)} & 77.9\,{\scriptsize (0.55)} & 84.2\,{\scriptsize (0.43)} & 88.6\,{\scriptsize (0.43)} & 93.6\,{\scriptsize (0.41)} \\
\bottomrule
\end{tabular}}
\end{minipage}
\hfill
\begin{minipage}[t]{0.49\linewidth}\centering
{\small\textbf{HMMT}}\\[2pt]
\resizebox{\linewidth}{!}{%
\begin{tabular}{@{}l*{5}{r}@{}}
\toprule
 & @1 & @2 & @4 & @8 & @16 \\
\midrule
Random (Pass@$k$) & 46.2\,{\scriptsize (0.00)} & 56.3\,{\scriptsize (0.00)} & 65.3\,{\scriptsize (0.00)} & 73.1\,{\scriptsize (0.00)} & 80.3\,{\scriptsize (0.00)} \\
\midrule
Linear Probe & 54.3\,{\scriptsize (0.57)} & 65.1\,{\scriptsize (0.61)} & 70.1\,{\scriptsize (0.57)} & 75.2\,{\scriptsize (0.33)} & 78.7\,{\scriptsize (0.33)} \\
RepExp & 46.3\,{\scriptsize (0.23)} & 56.5\,{\scriptsize (0.20)} & 65.5\,{\scriptsize (0.17)} & 74.9\,{\scriptsize (0.15)} & 84.3\,{\scriptsize (0.08)} \\
PKTP & 51.7\,{\scriptsize (0.68)} & 61.8\,{\scriptsize (0.49)} & 70.6\,{\scriptsize (0.54)} & 76.6\,{\scriptsize (0.59)} & 82.9\,{\scriptsize (0.50)} \\
\bottomrule
\end{tabular}}
\end{minipage}
\par\vspace{6pt}
\begin{minipage}[t]{0.49\linewidth}\centering
{\small\textbf{AMO-Bench}}\\[2pt]
\resizebox{\linewidth}{!}{%
\begin{tabular}{@{}l*{5}{r}@{}}
\toprule
 & @1 & @2 & @4 & @8 & @16 \\
\midrule
Random (Pass@$k$) & 22.9\,{\scriptsize (0.00)} & 32.4\,{\scriptsize (0.00)} & 43.2\,{\scriptsize (0.00)} & 55.5\,{\scriptsize (0.00)} & 69.2\,{\scriptsize (0.00)} \\
\midrule
Linear Probe & 25.7\,{\scriptsize (0.68)} & 34.1\,{\scriptsize (0.83)} & 47.2\,{\scriptsize (1.40)} & 57.8\,{\scriptsize (0.96)} & 71.0\,{\scriptsize (1.01)} \\
RepExp & 22.4\,{\scriptsize (0.30)} & 32.3\,{\scriptsize (0.35)} & 40.5\,{\scriptsize (0.33)} & 53.0\,{\scriptsize (0.29)} & 65.1\,{\scriptsize (0.26)} \\
PKTP & 27.2\,{\scriptsize (1.03)} & 37.9\,{\scriptsize (0.79)} & 48.4\,{\scriptsize (0.95)} & 60.5\,{\scriptsize (1.36)} & 73.8\,{\scriptsize (1.10)} \\
\bottomrule
\end{tabular}}
\end{minipage}
\hfill
\begin{minipage}[t]{0.49\linewidth}\centering
{\small\textbf{OlymMATH (Hard)}}\\[2pt]
\resizebox{\linewidth}{!}{%
\begin{tabular}{@{}l*{5}{r}@{}}
\toprule
 & @1 & @2 & @4 & @8 & @16 \\
\midrule
Random (Pass@$k$) & 30.2\,{\scriptsize (0.00)} & 44.0\,{\scriptsize (0.00)} & 57.8\,{\scriptsize (0.00)} & 70.0\,{\scriptsize (0.00)} & 80.4\,{\scriptsize (0.00)} \\
\midrule
Linear Probe & 31.9\,{\scriptsize (0.67)} & 44.1\,{\scriptsize (0.80)} & 57.9\,{\scriptsize (0.60)} & 70.3\,{\scriptsize (0.53)} & 82.2\,{\scriptsize (0.67)} \\
RepExp & 30.3\,{\scriptsize (0.23)} & 46.2\,{\scriptsize (0.19)} & 61.7\,{\scriptsize (0.16)} & 73.8\,{\scriptsize (0.11)} & 83.2\,{\scriptsize (0.10)} \\
PKTP & 33.5\,{\scriptsize (0.80)} & 45.9\,{\scriptsize (0.73)} & 59.3\,{\scriptsize (0.78)} & 69.7\,{\scriptsize (0.76)} & 79.3\,{\scriptsize (0.56)} \\
\bottomrule
\end{tabular}}
\end{minipage}
\par\vspace{6pt}
\begin{minipage}[t]{0.49\linewidth}\centering
{\small\textbf{Omni-MATH}}\\[2pt]
\resizebox{\linewidth}{!}{%
\begin{tabular}{@{}l*{5}{r}@{}}
\toprule
 & @1 & @2 & @4 & @8 & @16 \\
\midrule
Random (Pass@$k$) & 46.0\,{\scriptsize (0.00)} & 56.9\,{\scriptsize (0.00)} & 65.8\,{\scriptsize (0.00)} & 73.1\,{\scriptsize (0.00)} & 79.7\,{\scriptsize (0.00)} \\
\midrule
Linear Probe & 45.3\,{\scriptsize (0.72)} & 54.4\,{\scriptsize (0.51)} & 61.7\,{\scriptsize (0.43)} & 68.5\,{\scriptsize (0.43)} & 75.7\,{\scriptsize (0.43)} \\
RepExp & 46.0\,{\scriptsize (0.19)} & 55.6\,{\scriptsize (0.15)} & 68.6\,{\scriptsize (0.12)} & 74.5\,{\scriptsize (0.11)} & 79.9\,{\scriptsize (0.08)} \\
PKTP & 45.5\,{\scriptsize (0.52)} & 56.5\,{\scriptsize (0.58)} & 64.5\,{\scriptsize (0.44)} & 71.3\,{\scriptsize (0.33)} & 77.7\,{\scriptsize (0.47)} \\
\bottomrule
\end{tabular}}
\end{minipage}
\hfill
\begin{minipage}[t]{0.49\linewidth}\centering
{\small\textbf{RIMO-N}}\\[2pt]
\resizebox{\linewidth}{!}{%
\begin{tabular}{@{}l*{5}{r}@{}}
\toprule
 & @1 & @2 & @4 & @8 & @16 \\
\midrule
Random (Pass@$k$) & 55.3\,{\scriptsize (0.00)} & 65.9\,{\scriptsize (0.00)} & 74.0\,{\scriptsize (0.00)} & 80.3\,{\scriptsize (0.00)} & 85.7\,{\scriptsize (0.00)} \\
\midrule
Linear Probe & 60.9\,{\scriptsize (0.68)} & 70.9\,{\scriptsize (0.59)} & 77.8\,{\scriptsize (0.67)} & 82.7\,{\scriptsize (0.50)} & 86.5\,{\scriptsize (0.29)} \\
RepExp & 55.3\,{\scriptsize (0.15)} & 61.8\,{\scriptsize (0.16)} & 70.7\,{\scriptsize (0.14)} & 76.5\,{\scriptsize (0.08)} & 84.8\,{\scriptsize (0.08)} \\
PKTP & 53.9\,{\scriptsize (1.00)} & 64.0\,{\scriptsize (0.76)} & 77.3\,{\scriptsize (0.74)} & 85.0\,{\scriptsize (0.63)} & 89.8\,{\scriptsize (0.32)} \\
\bottomrule
\end{tabular}}
\end{minipage}
\end{table}

\begin{table}[H]
\centering
\setlength{\tabcolsep}{3pt}
\caption{Per-benchmark pass@$k$ (\%), OpenMath-Nemotron-7B.}
\label{tab:app-nemotron-7b}
\begin{minipage}[t]{0.49\linewidth}\centering
{\small\textbf{AIME}}\\[2pt]
\resizebox{\linewidth}{!}{%
\begin{tabular}{@{}l*{5}{r}@{}}
\toprule
 & @1 & @2 & @4 & @8 & @16 \\
\midrule
Random (Pass@$k$) & 68.7\,{\scriptsize (0.00)} & 80.0\,{\scriptsize (0.00)} & 87.0\,{\scriptsize (0.00)} & 91.7\,{\scriptsize (0.00)} & 95.2\,{\scriptsize (0.00)} \\
\midrule
Linear Probe & 71.5\,{\scriptsize (0.67)} & 81.8\,{\scriptsize (0.47)} & 86.8\,{\scriptsize (0.35)} & 90.5\,{\scriptsize (0.35)} & 93.8\,{\scriptsize (0.35)} \\
RepExp & 69.1\,{\scriptsize (0.25)} & 81.1\,{\scriptsize (0.16)} & 89.6\,{\scriptsize (0.12)} & 93.5\,{\scriptsize (0.09)} & 98.1\,{\scriptsize (0.05)} \\
PKTP & 72.1\,{\scriptsize (0.71)} & 82.9\,{\scriptsize (0.29)} & 87.6\,{\scriptsize (0.31)} & 90.6\,{\scriptsize (0.45)} & 94.3\,{\scriptsize (0.42)} \\
\bottomrule
\end{tabular}}
\end{minipage}
\hfill
\begin{minipage}[t]{0.49\linewidth}\centering
{\small\textbf{HMMT}}\\[2pt]
\resizebox{\linewidth}{!}{%
\begin{tabular}{@{}l*{5}{r}@{}}
\toprule
 & @1 & @2 & @4 & @8 & @16 \\
\midrule
Random (Pass@$k$) & 41.4\,{\scriptsize (0.00)} & 52.3\,{\scriptsize (0.00)} & 62.9\,{\scriptsize (0.00)} & 73.1\,{\scriptsize (0.00)} & 82.4\,{\scriptsize (0.00)} \\
\midrule
Linear Probe & 43.0\,{\scriptsize (0.50)} & 52.6\,{\scriptsize (0.70)} & 63.4\,{\scriptsize (0.46)} & 72.4\,{\scriptsize (0.40)} & 80.7\,{\scriptsize (0.63)} \\
RepExp & 41.4\,{\scriptsize (0.28)} & 54.1\,{\scriptsize (0.25)} & 62.5\,{\scriptsize (0.18)} & 73.6\,{\scriptsize (0.11)} & 83.6\,{\scriptsize (0.09)} \\
PKTP & 47.0\,{\scriptsize (0.68)} & 58.7\,{\scriptsize (0.88)} & 67.9\,{\scriptsize (0.55)} & 77.8\,{\scriptsize (0.44)} & 83.9\,{\scriptsize (0.51)} \\
\bottomrule
\end{tabular}}
\end{minipage}
\par\vspace{6pt}
\begin{minipage}[t]{0.49\linewidth}\centering
{\small\textbf{AMO-Bench}}\\[2pt]
\resizebox{\linewidth}{!}{%
\begin{tabular}{@{}l*{5}{r}@{}}
\toprule
 & @1 & @2 & @4 & @8 & @16 \\
\midrule
Random (Pass@$k$) & 16.9\,{\scriptsize (0.00)} & 27.0\,{\scriptsize (0.00)} & 38.9\,{\scriptsize (0.00)} & 51.2\,{\scriptsize (0.00)} & 63.6\,{\scriptsize (0.00)} \\
\midrule
Linear Probe & 18.8\,{\scriptsize (1.35)} & 29.2\,{\scriptsize (1.13)} & 41.8\,{\scriptsize (1.44)} & 54.0\,{\scriptsize (0.82)} & 66.3\,{\scriptsize (0.80)} \\
RepExp & 17.1\,{\scriptsize (0.35)} & 34.0\,{\scriptsize (0.33)} & 43.1\,{\scriptsize (0.28)} & 55.7\,{\scriptsize (0.27)} & 71.0\,{\scriptsize (0.19)} \\
PKTP & 20.5\,{\scriptsize (0.78)} & 27.0\,{\scriptsize (0.76)} & 35.5\,{\scriptsize (1.06)} & 47.7\,{\scriptsize (1.11)} & 67.7\,{\scriptsize (1.14)} \\
\bottomrule
\end{tabular}}
\end{minipage}
\hfill
\begin{minipage}[t]{0.49\linewidth}\centering
{\small\textbf{OlymMATH (Hard)}}\\[2pt]
\resizebox{\linewidth}{!}{%
\begin{tabular}{@{}l*{5}{r}@{}}
\toprule
 & @1 & @2 & @4 & @8 & @16 \\
\midrule
Random (Pass@$k$) & 24.5\,{\scriptsize (0.00)} & 36.9\,{\scriptsize (0.00)} & 50.4\,{\scriptsize (0.00)} & 63.1\,{\scriptsize (0.00)} & 73.7\,{\scriptsize (0.00)} \\
\midrule
Linear Probe & 28.7\,{\scriptsize (0.87)} & 42.4\,{\scriptsize (0.68)} & 55.9\,{\scriptsize (0.68)} & 68.2\,{\scriptsize (0.52)} & 77.2\,{\scriptsize (0.51)} \\
RepExp & 24.4\,{\scriptsize (0.23)} & 38.6\,{\scriptsize (0.21)} & 53.3\,{\scriptsize (0.16)} & 67.1\,{\scriptsize (0.13)} & 75.4\,{\scriptsize (0.10)} \\
PKTP & 29.7\,{\scriptsize (0.75)} & 43.6\,{\scriptsize (0.90)} & 55.3\,{\scriptsize (0.65)} & 67.5\,{\scriptsize (0.45)} & 77.4\,{\scriptsize (0.50)} \\
\bottomrule
\end{tabular}}
\end{minipage}
\par\vspace{6pt}
\begin{minipage}[t]{0.49\linewidth}\centering
{\small\textbf{Omni-MATH}}\\[2pt]
\resizebox{\linewidth}{!}{%
\begin{tabular}{@{}l*{5}{r}@{}}
\toprule
 & @1 & @2 & @4 & @8 & @16 \\
\midrule
Random (Pass@$k$) & 47.4\,{\scriptsize (0.00)} & 59.1\,{\scriptsize (0.00)} & 69.6\,{\scriptsize (0.00)} & 78.7\,{\scriptsize (0.00)} & 86.1\,{\scriptsize (0.00)} \\
\midrule
Linear Probe & 47.1\,{\scriptsize (0.36)} & 58.9\,{\scriptsize (0.48)} & 69.3\,{\scriptsize (0.41)} & 77.9\,{\scriptsize (0.39)} & 85.6\,{\scriptsize (0.29)} \\
RepExp & 47.2\,{\scriptsize (0.17)} & 57.9\,{\scriptsize (0.11)} & 65.8\,{\scriptsize (0.10)} & 77.4\,{\scriptsize (0.07)} & 87.4\,{\scriptsize (0.06)} \\
PKTP & 48.1\,{\scriptsize (0.44)} & 60.9\,{\scriptsize (0.57)} & 71.9\,{\scriptsize (0.61)} & 80.1\,{\scriptsize (0.43)} & 87.1\,{\scriptsize (0.22)} \\
\bottomrule
\end{tabular}}
\end{minipage}
\hfill
\begin{minipage}[t]{0.49\linewidth}\centering
{\small\textbf{RIMO-N}}\\[2pt]
\resizebox{\linewidth}{!}{%
\begin{tabular}{@{}l*{5}{r}@{}}
\toprule
 & @1 & @2 & @4 & @8 & @16 \\
\midrule
Random (Pass@$k$) & 49.5\,{\scriptsize (0.00)} & 61.1\,{\scriptsize (0.00)} & 70.6\,{\scriptsize (0.00)} & 78.3\,{\scriptsize (0.00)} & 84.9\,{\scriptsize (0.00)} \\
\midrule
Linear Probe & 55.5\,{\scriptsize (0.82)} & 65.2\,{\scriptsize (0.75)} & 73.3\,{\scriptsize (0.50)} & 82.0\,{\scriptsize (0.55)} & 87.2\,{\scriptsize (0.35)} \\
RepExp & 49.6\,{\scriptsize (0.17)} & 64.1\,{\scriptsize (0.18)} & 74.0\,{\scriptsize (0.11)} & 78.9\,{\scriptsize (0.11)} & 83.9\,{\scriptsize (0.10)} \\
PKTP & 54.4\,{\scriptsize (0.81)} & 64.5\,{\scriptsize (0.65)} & 71.1\,{\scriptsize (0.43)} & 78.2\,{\scriptsize (0.54)} & 84.5\,{\scriptsize (0.42)} \\
\bottomrule
\end{tabular}}
\end{minipage}
\end{table}

\begin{table}[H]
\centering
\setlength{\tabcolsep}{3pt}
\caption{Per-benchmark pass@$k$ (\%), DeepSeek-R1-Distill-Qwen-7B.}
\label{tab:app-r1d-7b}
\begin{minipage}[t]{0.49\linewidth}\centering
{\small\textbf{AIME}}\\[2pt]
\resizebox{\linewidth}{!}{%
\begin{tabular}{@{}l*{5}{r}@{}}
\toprule
 & @1 & @2 & @4 & @8 & @16 \\
\midrule
Random (Pass@$k$) & 52.7\,{\scriptsize (0.00)} & 66.5\,{\scriptsize (0.00)} & 76.9\,{\scriptsize (0.00)} & 83.5\,{\scriptsize (0.00)} & 88.2\,{\scriptsize (0.00)} \\
\midrule
Linear Probe & 51.6\,{\scriptsize (0.58)} & 65.4\,{\scriptsize (0.47)} & 75.1\,{\scriptsize (0.44)} & 80.9\,{\scriptsize (0.35)} & 87.4\,{\scriptsize (0.60)} \\
RepExp & 52.3\,{\scriptsize (0.25)} & 65.0\,{\scriptsize (0.25)} & 77.1\,{\scriptsize (0.16)} & 87.2\,{\scriptsize (0.09)} & 90.5\,{\scriptsize (0.04)} \\
PKTP & 55.2\,{\scriptsize (0.87)} & 68.6\,{\scriptsize (0.57)} & 78.1\,{\scriptsize (0.40)} & 86.1\,{\scriptsize (0.45)} & 90.2\,{\scriptsize (0.33)} \\
\bottomrule
\end{tabular}}
\end{minipage}
\hfill
\begin{minipage}[t]{0.49\linewidth}\centering
{\small\textbf{HMMT}}\\[2pt]
\resizebox{\linewidth}{!}{%
\begin{tabular}{@{}l*{5}{r}@{}}
\toprule
 & @1 & @2 & @4 & @8 & @16 \\
\midrule
Random (Pass@$k$) & 38.1\,{\scriptsize (0.00)} & 47.0\,{\scriptsize (0.00)} & 55.7\,{\scriptsize (0.00)} & 65.4\,{\scriptsize (0.00)} & 75.4\,{\scriptsize (0.00)} \\
\midrule
Linear Probe & 41.5\,{\scriptsize (0.40)} & 45.4\,{\scriptsize (0.38)} & 51.7\,{\scriptsize (0.68)} & 62.3\,{\scriptsize (0.59)} & 77.1\,{\scriptsize (0.71)} \\
RepExp & 38.0\,{\scriptsize (0.16)} & 46.1\,{\scriptsize (0.20)} & 55.0\,{\scriptsize (0.22)} & 68.7\,{\scriptsize (0.19)} & 80.2\,{\scriptsize (0.10)} \\
PKTP & 41.0\,{\scriptsize (0.73)} & 48.6\,{\scriptsize (0.76)} & 57.9\,{\scriptsize (0.82)} & 68.3\,{\scriptsize (0.63)} & 77.5\,{\scriptsize (0.65)} \\
\bottomrule
\end{tabular}}
\end{minipage}
\par\vspace{6pt}
\begin{minipage}[t]{0.49\linewidth}\centering
{\small\textbf{AMO-Bench}}\\[2pt]
\resizebox{\linewidth}{!}{%
\begin{tabular}{@{}l*{5}{r}@{}}
\toprule
 & @1 & @2 & @4 & @8 & @16 \\
\midrule
Random (Pass@$k$) & 7.0\,{\scriptsize (0.00)} & 12.5\,{\scriptsize (0.00)} & 21.1\,{\scriptsize (0.00)} & 33.3\,{\scriptsize (0.00)} & 48.8\,{\scriptsize (0.00)} \\
\midrule
Linear Probe & 6.3\,{\scriptsize (0.59)} & 12.4\,{\scriptsize (0.79)} & 23.7\,{\scriptsize (1.11)} & 42.0\,{\scriptsize (1.52)} & 57.6\,{\scriptsize (1.26)} \\
RepExp & 7.4\,{\scriptsize (0.36)} & 12.2\,{\scriptsize (0.39)} & 26.2\,{\scriptsize (0.50)} & 41.7\,{\scriptsize (0.31)} & 60.4\,{\scriptsize (0.24)} \\
PKTP & 10.7\,{\scriptsize (1.32)} & 20.0\,{\scriptsize (1.82)} & 33.9\,{\scriptsize (1.40)} & 49.3\,{\scriptsize (1.45)} & 62.4\,{\scriptsize (1.27)} \\
\bottomrule
\end{tabular}}
\end{minipage}
\hfill
\begin{minipage}[t]{0.49\linewidth}\centering
{\small\textbf{OlymMATH (Hard)}}\\[2pt]
\resizebox{\linewidth}{!}{%
\begin{tabular}{@{}l*{5}{r}@{}}
\toprule
 & @1 & @2 & @4 & @8 & @16 \\
\midrule
Random (Pass@$k$) & 11.5\,{\scriptsize (0.00)} & 19.2\,{\scriptsize (0.00)} & 29.9\,{\scriptsize (0.00)} & 43.3\,{\scriptsize (0.00)} & 57.9\,{\scriptsize (0.00)} \\
\midrule
Linear Probe & 13.7\,{\scriptsize (0.53)} & 21.2\,{\scriptsize (0.60)} & 28.0\,{\scriptsize (0.42)} & 37.7\,{\scriptsize (0.65)} & 52.7\,{\scriptsize (0.73)} \\
RepExp & 11.6\,{\scriptsize (0.18)} & 19.2\,{\scriptsize (0.20)} & 27.8\,{\scriptsize (0.17)} & 39.5\,{\scriptsize (0.16)} & 60.6\,{\scriptsize (0.10)} \\
PKTP & 10.6\,{\scriptsize (0.46)} & 20.3\,{\scriptsize (0.56)} & 29.4\,{\scriptsize (0.54)} & 43.4\,{\scriptsize (0.73)} & 58.0\,{\scriptsize (0.59)} \\
\bottomrule
\end{tabular}}
\end{minipage}
\par\vspace{6pt}
\begin{minipage}[t]{0.49\linewidth}\centering
{\small\textbf{Omni-MATH}}\\[2pt]
\resizebox{\linewidth}{!}{%
\begin{tabular}{@{}l*{5}{r}@{}}
\toprule
 & @1 & @2 & @4 & @8 & @16 \\
\midrule
Random (Pass@$k$) & 41.6\,{\scriptsize (0.00)} & 53.1\,{\scriptsize (0.00)} & 63.3\,{\scriptsize (0.00)} & 72.8\,{\scriptsize (0.00)} & 81.5\,{\scriptsize (0.00)} \\
\midrule
Linear Probe & 40.6\,{\scriptsize (0.35)} & 50.2\,{\scriptsize (0.60)} & 60.5\,{\scriptsize (0.48)} & 70.6\,{\scriptsize (0.41)} & 80.8\,{\scriptsize (0.34)} \\
RepExp & 41.4\,{\scriptsize (0.14)} & 51.5\,{\scriptsize (0.11)} & 64.0\,{\scriptsize (0.12)} & 74.4\,{\scriptsize (0.08)} & 80.4\,{\scriptsize (0.04)} \\
PKTP & 42.8\,{\scriptsize (0.64)} & 54.0\,{\scriptsize (0.54)} & 64.4\,{\scriptsize (0.54)} & 74.0\,{\scriptsize (0.53)} & 83.5\,{\scriptsize (0.36)} \\
\bottomrule
\end{tabular}}
\end{minipage}
\hfill
\begin{minipage}[t]{0.49\linewidth}\centering
{\small\textbf{RIMO-N}}\\[2pt]
\resizebox{\linewidth}{!}{%
\begin{tabular}{@{}l*{5}{r}@{}}
\toprule
 & @1 & @2 & @4 & @8 & @16 \\
\midrule
Random (Pass@$k$) & 50.2\,{\scriptsize (0.00)} & 63.3\,{\scriptsize (0.00)} & 74.4\,{\scriptsize (0.00)} & 83.3\,{\scriptsize (0.00)} & 89.9\,{\scriptsize (0.00)} \\
\midrule
Linear Probe & 47.2\,{\scriptsize (0.89)} & 59.3\,{\scriptsize (0.80)} & 73.3\,{\scriptsize (0.64)} & 82.1\,{\scriptsize (0.51)} & 87.8\,{\scriptsize (0.40)} \\
RepExp & 50.1\,{\scriptsize (0.30)} & 61.8\,{\scriptsize (0.22)} & 74.6\,{\scriptsize (0.17)} & 85.8\,{\scriptsize (0.11)} & 92.3\,{\scriptsize (0.03)} \\
PKTP & 49.4\,{\scriptsize (0.71)} & 61.7\,{\scriptsize (0.87)} & 73.0\,{\scriptsize (0.43)} & 81.1\,{\scriptsize (0.43)} & 89.2\,{\scriptsize (0.35)} \\
\bottomrule
\end{tabular}}
\end{minipage}
\end{table}

\begin{table}[H]
\centering
\setlength{\tabcolsep}{3pt}
\caption{Per-benchmark pass@$k$ (\%), Skywork-OR1-Math-7B.}
\label{tab:app-skywork-7b}
\begin{minipage}[t]{0.49\linewidth}\centering
{\small\textbf{AIME}}\\[2pt]
\resizebox{\linewidth}{!}{%
\begin{tabular}{@{}l*{5}{r}@{}}
\toprule
 & @1 & @2 & @4 & @8 & @16 \\
\midrule
Random (Pass@$k$) & 59.7\,{\scriptsize (0.00)} & 71.2\,{\scriptsize (0.00)} & 79.1\,{\scriptsize (0.00)} & 84.7\,{\scriptsize (0.00)} & 89.2\,{\scriptsize (0.00)} \\
\midrule
Linear Probe & 62.3\,{\scriptsize (0.73)} & 70.7\,{\scriptsize (0.67)} & 80.2\,{\scriptsize (0.44)} & 85.3\,{\scriptsize (0.37)} & 89.8\,{\scriptsize (0.29)} \\
RepExp & 59.7\,{\scriptsize (0.23)} & 73.2\,{\scriptsize (0.16)} & 82.0\,{\scriptsize (0.12)} & 85.4\,{\scriptsize (0.08)} & 92.8\,{\scriptsize (0.08)} \\
PKTP & 63.6\,{\scriptsize (0.89)} & 75.4\,{\scriptsize (0.49)} & 80.6\,{\scriptsize (0.46)} & 84.6\,{\scriptsize (0.27)} & 88.4\,{\scriptsize (0.45)} \\
\bottomrule
\end{tabular}}
\end{minipage}
\hfill
\begin{minipage}[t]{0.49\linewidth}\centering
{\small\textbf{HMMT}}\\[2pt]
\resizebox{\linewidth}{!}{%
\begin{tabular}{@{}l*{5}{r}@{}}
\toprule
 & @1 & @2 & @4 & @8 & @16 \\
\midrule
Random (Pass@$k$) & 35.8\,{\scriptsize (0.00)} & 43.8\,{\scriptsize (0.00)} & 52.3\,{\scriptsize (0.00)} & 61.7\,{\scriptsize (0.00)} & 71.7\,{\scriptsize (0.00)} \\
\midrule
Linear Probe & 40.3\,{\scriptsize (0.47)} & 48.2\,{\scriptsize (0.57)} & 54.5\,{\scriptsize (0.63)} & 62.5\,{\scriptsize (0.48)} & 70.8\,{\scriptsize (0.45)} \\
RepExp & 35.8\,{\scriptsize (0.18)} & 41.9\,{\scriptsize (0.20)} & 50.5\,{\scriptsize (0.16)} & 64.0\,{\scriptsize (0.13)} & 70.6\,{\scriptsize (0.10)} \\
PKTP & 37.1\,{\scriptsize (0.63)} & 46.2\,{\scriptsize (0.64)} & 56.1\,{\scriptsize (0.79)} & 64.8\,{\scriptsize (0.78)} & 71.5\,{\scriptsize (0.45)} \\
\bottomrule
\end{tabular}}
\end{minipage}
\par\vspace{6pt}
\begin{minipage}[t]{0.49\linewidth}\centering
{\small\textbf{AMO-Bench}}\\[2pt]
\resizebox{\linewidth}{!}{%
\begin{tabular}{@{}l*{5}{r}@{}}
\toprule
 & @1 & @2 & @4 & @8 & @16 \\
\midrule
Random (Pass@$k$) & 11.9\,{\scriptsize (0.00)} & 18.7\,{\scriptsize (0.00)} & 26.5\,{\scriptsize (0.00)} & 35.0\,{\scriptsize (0.00)} & 45.0\,{\scriptsize (0.00)} \\
\midrule
Linear Probe & 11.7\,{\scriptsize (0.84)} & 17.6\,{\scriptsize (0.80)} & 23.7\,{\scriptsize (0.74)} & 29.3\,{\scriptsize (0.76)} & 41.3\,{\scriptsize (0.92)} \\
RepExp & 12.1\,{\scriptsize (0.40)} & 21.7\,{\scriptsize (0.24)} & 30.9\,{\scriptsize (0.23)} & 38.4\,{\scriptsize (0.35)} & 53.1\,{\scriptsize (0.24)} \\
PKTP & 12.8\,{\scriptsize (0.97)} & 22.8\,{\scriptsize (1.30)} & 32.2\,{\scriptsize (1.15)} & 43.0\,{\scriptsize (1.44)} & 61.1\,{\scriptsize (1.65)} \\
\bottomrule
\end{tabular}}
\end{minipage}
\hfill
\begin{minipage}[t]{0.49\linewidth}\centering
{\small\textbf{OlymMATH (Hard)}}\\[2pt]
\resizebox{\linewidth}{!}{%
\begin{tabular}{@{}l*{5}{r}@{}}
\toprule
 & @1 & @2 & @4 & @8 & @16 \\
\midrule
Random (Pass@$k$) & 19.7\,{\scriptsize (0.00)} & 31.0\,{\scriptsize (0.00)} & 44.5\,{\scriptsize (0.00)} & 58.6\,{\scriptsize (0.00)} & 72.2\,{\scriptsize (0.00)} \\
\midrule
Linear Probe & 19.0\,{\scriptsize (0.52)} & 28.5\,{\scriptsize (0.67)} & 39.8\,{\scriptsize (0.72)} & 55.6\,{\scriptsize (0.92)} & 72.1\,{\scriptsize (0.89)} \\
RepExp & 19.9\,{\scriptsize (0.18)} & 30.6\,{\scriptsize (0.23)} & 44.2\,{\scriptsize (0.20)} & 52.3\,{\scriptsize (0.14)} & 64.3\,{\scriptsize (0.20)} \\
PKTP & 20.4\,{\scriptsize (0.61)} & 31.7\,{\scriptsize (0.96)} & 45.6\,{\scriptsize (0.93)} & 58.0\,{\scriptsize (0.64)} & 72.2\,{\scriptsize (0.62)} \\
\bottomrule
\end{tabular}}
\end{minipage}
\par\vspace{6pt}
\begin{minipage}[t]{0.49\linewidth}\centering
{\small\textbf{Omni-MATH}}\\[2pt]
\resizebox{\linewidth}{!}{%
\begin{tabular}{@{}l*{5}{r}@{}}
\toprule
 & @1 & @2 & @4 & @8 & @16 \\
\midrule
Random (Pass@$k$) & 50.2\,{\scriptsize (0.00)} & 61.4\,{\scriptsize (0.00)} & 70.7\,{\scriptsize (0.00)} & 78.8\,{\scriptsize (0.00)} & 85.6\,{\scriptsize (0.00)} \\
\midrule
Linear Probe & 53.5\,{\scriptsize (0.60)} & 63.6\,{\scriptsize (0.40)} & 71.8\,{\scriptsize (0.42)} & 79.9\,{\scriptsize (0.41)} & 85.1\,{\scriptsize (0.32)} \\
RepExp & 50.1\,{\scriptsize (0.21)} & 63.5\,{\scriptsize (0.12)} & 73.4\,{\scriptsize (0.12)} & 81.5\,{\scriptsize (0.08)} & 84.8\,{\scriptsize (0.08)} \\
PKTP & 52.1\,{\scriptsize (0.63)} & 63.3\,{\scriptsize (0.41)} & 74.5\,{\scriptsize (0.38)} & 81.1\,{\scriptsize (0.42)} & 84.5\,{\scriptsize (0.26)} \\
\bottomrule
\end{tabular}}
\end{minipage}
\hfill
\begin{minipage}[t]{0.49\linewidth}\centering
{\small\textbf{RIMO-N}}\\[2pt]
\resizebox{\linewidth}{!}{%
\begin{tabular}{@{}l*{5}{r}@{}}
\toprule
 & @1 & @2 & @4 & @8 & @16 \\
\midrule
Random (Pass@$k$) & 53.1\,{\scriptsize (0.00)} & 65.1\,{\scriptsize (0.00)} & 74.6\,{\scriptsize (0.00)} & 82.4\,{\scriptsize (0.00)} & 89.2\,{\scriptsize (0.00)} \\
\midrule
Linear Probe & 53.1\,{\scriptsize (0.61)} & 67.2\,{\scriptsize (0.58)} & 76.7\,{\scriptsize (0.51)} & 82.8\,{\scriptsize (0.45)} & 88.9\,{\scriptsize (0.46)} \\
RepExp & 53.1\,{\scriptsize (0.25)} & 69.5\,{\scriptsize (0.16)} & 78.4\,{\scriptsize (0.16)} & 83.4\,{\scriptsize (0.11)} & 90.3\,{\scriptsize (0.11)} \\
PKTP & 57.0\,{\scriptsize (1.01)} & 68.9\,{\scriptsize (0.82)} & 77.5\,{\scriptsize (0.60)} & 83.5\,{\scriptsize (0.59)} & 89.0\,{\scriptsize (0.46)} \\
\bottomrule
\end{tabular}}
\end{minipage}
\end{table}

\clearpage
\section{Results on different pool sizes}
\label{app:pool-sizes}

This section repeats \cref{fig:gains} for the other pool sizes, $N \in \{32, 48, 64, 80, 96, 112\}$, where the pool of each problem is drawn from its $128$ generated rollouts and the cutoff is $T = 4000$ as in the main setting.

\begin{figure}[H]
\setlength{\abovecaptionskip}{2pt}
\centering
\includegraphics[width=\textwidth]{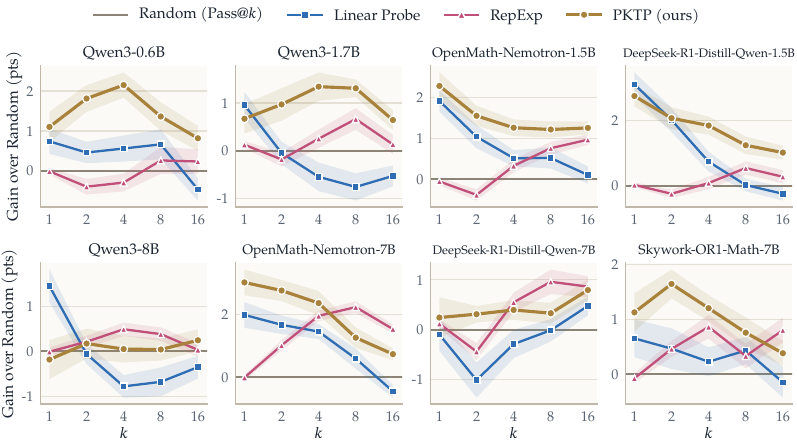}
\caption{\textbf{Gain in \passk over random pruning with $N = 32$.} As in \cref{fig:gains}, with a pool of $N = 32$ rollouts drawn from the $128$ generated for each problem and $T = 4000$.}
\label{fig:gains32}
\end{figure}

\begin{figure}[H]
\setlength{\abovecaptionskip}{2pt}
\centering
\includegraphics[width=\textwidth]{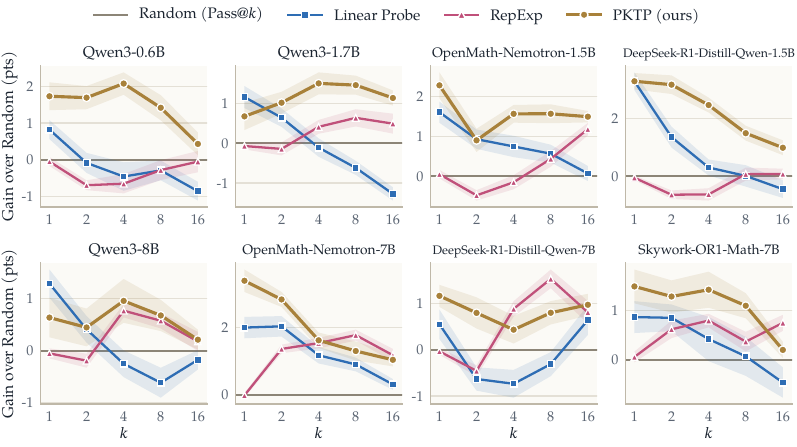}
\caption{\textbf{Gain in \passk over random pruning with $N = 48$.} As in \cref{fig:gains}, with a pool of $N = 48$ rollouts drawn from the $128$ generated for each problem and $T = 4000$.}
\label{fig:gains48}
\end{figure}

\begin{figure}[H]
\setlength{\abovecaptionskip}{2pt}
\centering
\includegraphics[width=\textwidth]{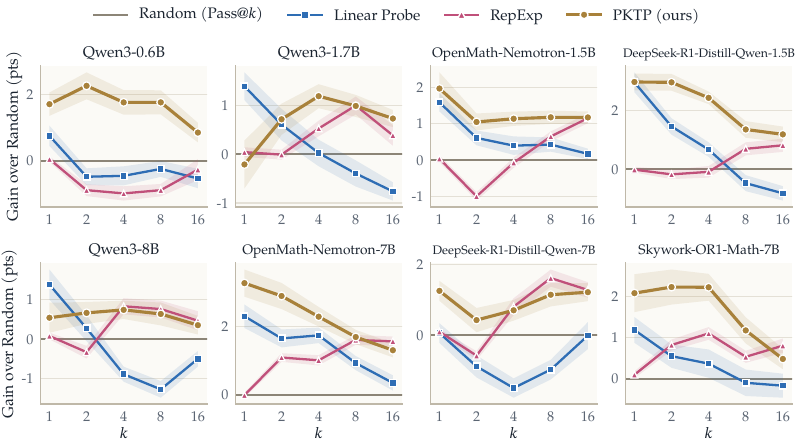}
\caption{\textbf{Gain in \passk over random pruning with $N = 64$.} As in \cref{fig:gains}, with a pool of $N = 64$ rollouts drawn from the $128$ generated for each problem and $T = 4000$.}
\label{fig:gains64}
\end{figure}

\begin{figure}[H]
\setlength{\abovecaptionskip}{2pt}
\centering
\includegraphics[width=\textwidth]{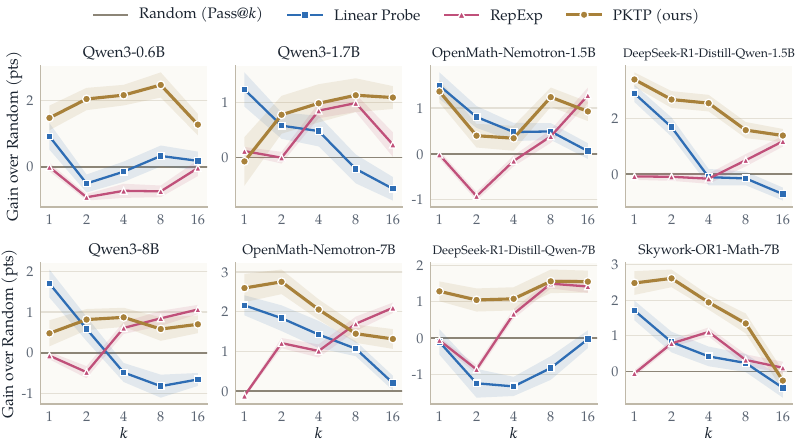}
\caption{\textbf{Gain in \passk over random pruning with $N = 80$.} As in \cref{fig:gains}, with a pool of $N = 80$ rollouts drawn from the $128$ generated for each problem and $T = 4000$.}
\label{fig:gains80}
\end{figure}

\begin{figure}[H]
\setlength{\abovecaptionskip}{2pt}
\centering
\includegraphics[width=\textwidth]{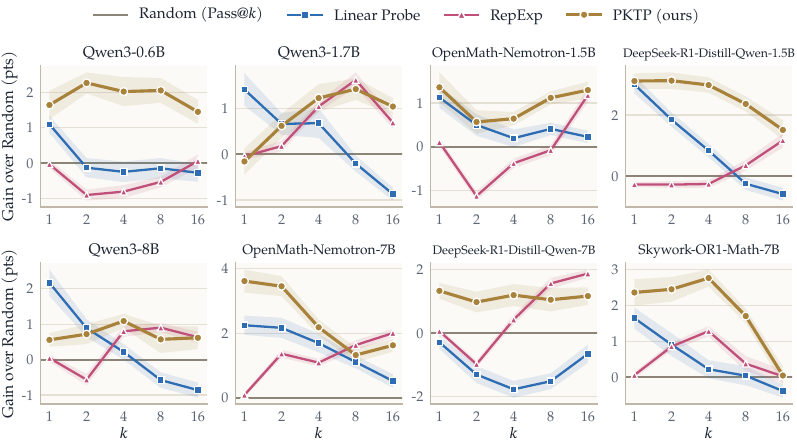}
\caption{\textbf{Gain in \passk over random pruning with $N = 96$.} As in \cref{fig:gains}, with a pool of $N = 96$ rollouts drawn from the $128$ generated for each problem and $T = 4000$.}
\label{fig:gains96}
\end{figure}

\begin{figure}[H]
\setlength{\abovecaptionskip}{2pt}
\centering
\includegraphics[width=\textwidth]{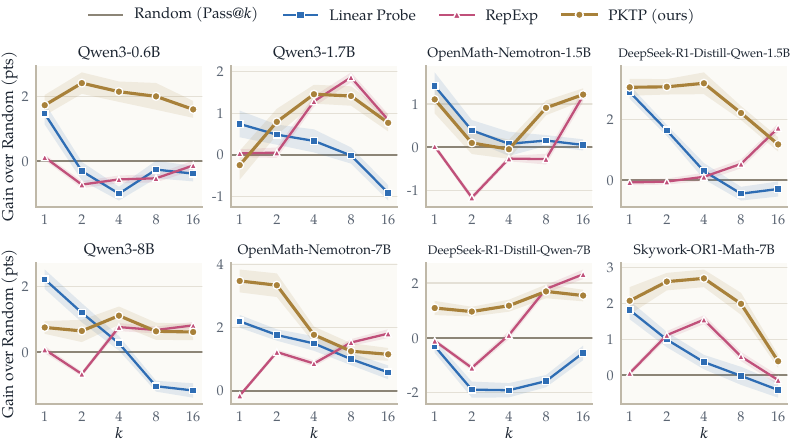}
\caption{\textbf{Gain in \passk over random pruning with $N = 112$.} As in \cref{fig:gains}, with a pool of $N = 112$ rollouts drawn from the $128$ generated for each problem and $T = 4000$.}
\label{fig:gains112}
\end{figure}

\clearpage
\section{Representation details}
\label{app:representation}

For each rollout, let $x_t \in \R^d$ be the hidden state of the last transformer block of the model at the $t$-th generated token, where $d$ is the hidden size of the model, which is $1024$ for Qwen3-0.6B, $2048$ for Qwen3-1.7B, $4096$ for Qwen3-8B, $1536$ for OpenMath-Nemotron-1.5B and DeepSeek-R1-Distill-Qwen-1.5B, and $3584$ for OpenMath-Nemotron-7B, DeepSeek-R1-Distill-Qwen-7B and Skywork-OR1-Math-7B. The EMA follows $e_t = (1 - \alpha)\, e_{t-1} + \alpha\, x_t$ with $\alpha = 10^{-3}$, and $h_i$ is the normalized value of $e_t$ after the first $T$ generated tokens, or after all of them for a rollout that ends before the cutoff.

\section{Extended related work}
\label{app:related}

\paragraph{Pruning rollouts at test time.}
Test-time scaling generates many rollouts per problem and aggregates them \citep{wang2022self, brown2024large, snell2024scaling, wu2024inference}, and a growing line of work saves compute by stopping unpromising rollouts early. One family uses the model's own confidence, such as token-level confidence or its self-estimate of success, to decide which rollouts to continue \citep{manvi2024adaptive, fu2025deep}. Another scores partial rollouts with a reward model \citep{sun2024fast} or with a probe on the hidden states \citep{zhang2025reasoning, afzal2025knowing, step2026}. \citet{yang2026gambit} concurrently use such a probe for a beam search over partial rollouts, pruning weak rollouts and branching new rollouts from the best prefixes. All of these score each rollout on its own and keep the highest-scoring ones, which is the pure exploitation extreme of \cref{sec:problem}. At the other extreme, RepExp \citep{tuyls2025repexp} selects rollouts whose representations are least explained by those already chosen, without using their quality. PKTP sits between the two, since it models how the outcomes of the rollouts depend on each other and chooses the set with the highest predicted \passk.

\paragraph{Pass@$k$ as an objective.}
Pass@$k$ is a standard evaluation metric for reasoning and code generation, and several works optimize it during training. \citet{walder2025pass} derive low-variance policy-gradient estimators for \passk, \citet{chen2025passk} use \passk as the reward to balance exploration and exploitation, and \citet{chow2024inference} fine-tune a model for best-of-$N$ inference. \citet{chen2026dqo} reward a group of responses for both quality and diversity, measuring diversity by the determinant of their similarity matrix. These are post-training methods, while PKTP leaves the model unchanged and optimizes \passk at inference time by choosing which partial rollouts to continue. Concurrently, \citet{wu2026ptts} also leave the reasoning model unchanged, but use \passk as the reward to train a planner that writes a different outline for each rollout before it is generated. They thus change how the rollouts are sampled, while PKTP takes $N$ independently sampled partial rollouts as given and chooses which $k$ of them to continue.

\paragraph{Related objectives: maximizing at least one success.}
The same objective appears in information retrieval as the expected 1-call@$k$, the probability that at least one of the top $k$ documents is relevant. \citet{chen2006less} optimize it greedily, and \citet{sanner2011diverse} show that greedy optimization under a latent subtopic model of relevance leads to diverse rankings, in the same way that our latent factor leads to diverse kept sets. In Bayesian optimization, the batch probability of improvement is the probability that at least one point of a batch improves on the incumbent, and it is monotone and submodular, so it can be maximized greedily \citep{wilson2018maximizing, nemhauser1978analysis}. Other methods balance quality and diversity with a hand-set trade-off, such as maximal marginal relevance \citep{carbonell1998use} and determinantal point processes \citep{kulesza2012determinantal}. In contrast to these works, PKTP applies to partial reasoning rollouts, learns the joint distribution of their outcomes from their representations, and optimizes a closed-form surrogate that keeps the submodularity of the exact objective.

\end{document}